\documentclass[letterpaper]{article} 
\usepackage{aaai2027}  
\usepackage[hyphens]{url}  
\usepackage{graphicx} 
\usepackage{natbib}  
\usepackage{caption} 
\usepackage{algorithm}
\usepackage{algorithmic}

\usepackage{amsmath}
\usepackage{amssymb}
\usepackage{amsthm}
\usepackage{tikz}
\usetikzlibrary{arrows.meta,positioning,fit,backgrounds}

\usepackage{newfloat}
\usepackage{listings}
\DeclareCaptionStyle{ruled}{labelfont=normalfont,labelsep=colon,strut=off} 
\floatstyle{ruled}
\newfloat{listing}{tb}{lst}{}
\floatname{listing}{Listing}

\usepackage{booktabs}
\usepackage{longtable}

\theoremstyle{plain}
\newtheorem{theorem}{Theorem}
\newtheorem{proposition}{Proposition}
\newtheorem{corollary}{Corollary}
\theoremstyle{definition}
\newtheorem{definition}{Definition}

\title{Swiss-Knife: A Framework for Reconfigurable Externalised\\Multi-Objective Alignment at Decode Time}
\author{
    Agnibh Karmakar\textsuperscript{\rm 1},
    Mayur Parvatikar\textsuperscript{\rm 2},
    Shreyash Dhoot\textsuperscript{\rm 7},
    Amit Dhanda\textsuperscript{\rm 4},
    Aman Chadha\textsuperscript{\rm 5},
    Kapil Wanaskar\textsuperscript{\rm 6},
    Vinija Jain\textsuperscript{\rm 3},
    Amitava Das\textsuperscript{\rm 7}
}
\affiliations{
    \textsuperscript{\rm 1}BITS Pilani Hyderabad Campus, India\quad
    \textsuperscript{\rm 2}RV College of Engineering, India\quad
    \textsuperscript{\rm 3}Google, USA\quad
    \textsuperscript{\rm 4}Amazon, USA\\
    \textsuperscript{\rm 5}Apple, USA\quad
    \textsuperscript{\rm 6}Canva, USA\quad
    \textsuperscript{\rm 7}Pragya Lab, BITS Pilani Goa Campus, India
}
\begin{document}

\maketitle
\begin{abstract}
Decode-time alignment methods steer a frozen language model by scoring candidate continuations with an external reward and selecting the maximiser. We argue that this shared design is a single degenerate point in a much larger space. We introduce Swiss-Knife, a framework for externalised multi-objective alignment in which the alignment specification is a first-class runtime object: hot-swappable scoring \emph{blades}, a batch normaliser, a pairwise \emph{aggregation operator}, and a selection rule. Six axioms characterise the admissible aggregation operators, and we prove a representation theorem: every operator satisfying them has the form $R_i = \sum_{j \neq i} g\!\left((\mu_i - \mu_j)/s(\sigma_i,\sigma_j)\right)$, a two-parameter family containing probit and logistic comparison rules and pointwise argmax as named coordinates. Within it, pairwise aggregation is Lipschitz-stable under adversarial reward contamination while argmax is not, and Candidate-Batch Normalization (CBN) makes the weight simplex invariant to the rescalings under which reward models are only ever identified. Our reference instantiation pairs DPO-LoRA blades with an uncertainty-aware pairwise tournament. Sweeping the helpfulness/honesty/harmlessness simplex, it attains the best balanced frontier of six decode-time methods (harmonic $F_1$ $0.797$ vs.\ $0.750$ for the strongest baseline, $p<10^{-14}$) with the lowest refusal rate and highest helpfulness of any arm, and reconfigures its objectives in $0.05$\,ms with no gradient computation. Ablating CBN costs $0.217$ $F_1$, and the metadata confirms the predicted mechanism: the lowest-variance blade retains $9\%$ of its nominal $33\%$ influence, and the collapse follows the predicted ordering.
\end{abstract}

\section{Introduction}
Production language models require continuous alignment updates as safety requirements, regulatory constraints and application personas shift. Encoding objectives into parameters via RLHF \citep{christiano2017deep,ouyang2022training} or DPO \citep{rafailov2023direct} is structurally inflexible: every revision demands a fine-tuning cycle. Decode-time methods avoid this by leaving the backbone frozen and steering with external reward signals \citep{dathathri2019plug,khanov2024args,shi2024mod}, but share a deeper assumption: candidates are scored on an absolute scale and selected by maximising.

That assumption produces two failure modes. Reward overoptimisation \citep{gao2023scaling} occurs because any pattern a proxy reward model overscores accumulates probability mass until the decoder collapses onto it. Multi-objective scale incompatibility is the second: combining independently trained reward heads lets the highest-variance model dominate regardless of its assigned weight, making the weight uninterpretable.

Both share a root. An absolute reward value means little on its own. What matters at a step is which candidate is best among those available now, and comparing candidates in pairs cancels any constant a reward model adds, a known property of the Bradley--Terry model \citep{bradley1952rank}.

Our contribution is to build the design space this implies rather than one more method inside it. We define an externalised alignment scheme as a tuple of typed, independently replaceable components, show that published decode-time methods are recovered by particular settings of that tuple, and prove what any scheme inherits. The theory then says which components matter. It predicts in advance that an uncertainty estimator uncorrelated with reward error contributes only through its overall scale, which we confirm empirically.

\textbf{Contributions.}
\begin{itemize}
    \item A representation theorem (Thm.~\ref{thm:rep}) characterising all admissible aggregation operators, with Thurstone, Bradley--Terry and argmax as named points of a two-parameter family.
    \item A robustness theorem (Thm.~\ref{thm:robust}) showing pairwise aggregation is Lipschitz-stable under reward contamination where argmax has unbounded sensitivity, and an invariance theorem (Thm.~\ref{thm:cbn}) making multi-objective weights a well-conditioned coordinate system on the candidate Pareto frontier.
    \item A reference instantiation leading five decode-time baselines on the HHH frontier, which also cuts judged toxicity by $47\%$ against a deterministic-selection baseline, plus two ablations whose outcomes the theory predicted in advance.
\end{itemize}

\section{Related Work}
We introduce prior decode-time methods here as points in the design space formalised in \S\ref{sec:framework}; Table~\ref{tab:embedding} makes the correspondence precise.

\textbf{Decode-time alignment.} PPLM \citep{dathathri2019plug} backpropagates a discriminator signal through frozen hidden states; FUDGE \citep{yang2021fudge} reweights next-token probabilities by future discriminator estimates; ARGS \citep{khanov2024args} adds reward gradients to logits; MOD \citep{shi2024mod} interpolates several models' output distributions per token, giving multi-objective control but over vocabularies rather than reasoning steps; Best-of-$N$ \citep{stiennon2020learning,nakano2021webgpt} selects among $N$ responses by reward; Rewarded Soups \citep{rame2023rewarded} averages adapters in weight space. These differ in granularity and blade choice, but all fix the aggregation operator to the identity and the selection rule to a maximiser.

\textbf{Reward overoptimisation.} \citet{gao2023scaling} show that proxy and true rewards diverge as the optimisation budget grows; \citet{skalse2022defining} analyse misspecification more broadly. Theorem~\ref{thm:robust} gives the framework's structural hedge against this.

\textbf{Other ingredients.} We reuse the drafter/verifier architecture of speculative decoding \citep{leviathan2023fast,chen2023accelerating} without its accept/reject step, which falls outside Definition~\ref{def:scheme}; the link we draw is that the drafter's log-likelihood is itself a blade (\S\ref{sec:framework}). LoRA \citep{hu2021lora} with PEFT \citep{mangrulkar2022peft} $O(1)$ hot-swapping supplies the blades. CARDS \citep{li2025cards} reads uncertainty off the next-token distribution, the idea behind our dispersion estimator. Multi-reward ensembling has been studied in training-time RLHF \citep{rame2023rewarded,wu2023fine}, to which CBN is orthogonal (Thm.~\ref{thm:cbn}).

\section{Framework}
\label{sec:framework}

\subsection{Definition}
Let $\pi_B$ be a frozen backbone and $\pi_S$ a proposal model. At each generation step, $\pi_S$ samples $N$ candidate continuations $\mathcal{C} = \{c_1,\ldots,c_N\}$ from context $x$.

\begin{definition}[Externalised alignment scheme]
\label{def:scheme}
A \emph{step-wise externalised alignment scheme} is a tuple
\begin{equation*}
\mathbb{A} = \langle \pi_B,\, \Pi_{\mathrm{prop}},\, \rho,\, \mathcal{B},\, \mathcal{N},\, \mathcal{A},\, \mathcal{S} \rangle
\end{equation*}
where $\pi_B$ is a frozen backbone, never differentiated and never updated; $\Pi_{\mathrm{prop}}$ is a proposal distribution over candidates; $\rho$ is a granularity rule terminating a candidate span (token, step, or response); $\mathcal{B} = \{b_k\}_{k=0}^{K}$ is a set of \emph{blades} $b_k : \mathcal{X}\times\mathcal{C} \to \mathbb{R}\times\mathbb{R}_{\geq 0}$ returning a score and a dispersion $(\mu^{(k)},\sigma^{(k)})$; $\mathcal{N}$ is a batch normaliser and combiner with weights $\mathbf{w}$ on the simplex; $\mathcal{A}$ is an aggregation operator mapping $\{(\mu_i,\sigma_i)\}_{i=1}^N$ to ratings $\mathbf{R}\in\mathbb{R}^N$; and $\mathcal{S}$ maps ratings to a distribution over $\mathcal{C}$.
\end{definition}

We call $\Omega = (\mathcal{B},\mathbf{w})$ the \emph{alignment specification}. The framework's defining property is that $\Omega$ is a runtime object: it is edited at inference with no gradient computation, whereas parametric alignment folds $\Omega$ into $\theta$ and must re-optimise $O(|\theta|)$ parameters per revision. The name is the architecture. A handle that never changes, two frozen models, and a socket for interchangeable blades. Changing the objective means changing which blades are seated, not retraining the handle.

One consequence of Definition~\ref{def:scheme} is worth stating: the proposal model's log-likelihood is itself a blade. Writing $\ell_S(c_i)$ for the length-normalised drafter log-probability, a scheme trading fluency against alignment at coefficient $\alpha$ is one whose blade set contains $b_0=\ell_S$ with weight $\alpha$; there is no separate mechanism. This lets contrastive decoding and ARGS be read as blade choices rather than distinct algorithms.

\begin{figure}[t]
\centering
\definecolor{skblue}{HTML}{0072B2}
\definecolor{skgreen}{HTML}{009E73}
\begin{tikzpicture}[
    x=1cm, y=1cm, font=\small,
    frz/.style={draw=black!70, rounded corners=2pt, align=center,
                fill=black!5, inner sep=2pt, font=\scriptsize},
    sock/.style={rounded corners=3pt, draw=skblue!70, line width=0.7pt,
                 fill=skblue!7},
    slot/.style={draw=skblue!55, rounded corners=1.5pt, align=center,
                 fill=white, inner sep=1.5pt, font=\scriptsize},
    bld/.style={draw=skgreen!75, rounded corners=1.5pt, align=center,
                fill=skgreen!10, inner sep=1.5pt, font=\tiny},
    arr/.style={-{Stealth[length=1.5mm]}, thick, gray!65,
                line width=0.6pt},
    lbl/.style={font=\tiny, inner sep=1pt},
    sub/.style={font=\tiny\itshape, text=black!55},
]
\draw[sock] (4.35,1.95) rectangle (8.00,5.25);
\node[font=\scriptsize\bfseries] at (6.17,4.95) {alignment socket};
\node[frz, minimum width=3.30cm, minimum height=0.95cm] (dr) at (1.85,5.25) {drafter $\pi_S$\\[-1pt]{\tiny\itshape frozen}};
\node[frz, minimum width=3.30cm, minimum height=0.95cm] (bb) at (1.85,3.35) {backbone $\pi_B$\\[-1pt]{\tiny\itshape frozen}};
\node[bld, minimum width=1.05cm, minimum height=0.62cm] (b0) at (0.75,1.55) {helpful.};
\draw[arr] (0.75,1.86) -- (0.75,2.88);
\node[bld, minimum width=1.05cm, minimum height=0.62cm] (b1) at (1.85,1.55) {honesty};
\draw[arr] (1.85,1.86) -- (1.85,2.88);
\node[bld, minimum width=1.05cm, minimum height=0.62cm] (b2) at (2.95,1.55) {harmless.};
\draw[arr] (2.95,1.86) -- (2.95,2.88);
\node[lbl] at (1.85,0.80) {3 hot-swappable blades (LoRA adapters on $\pi_B$)};
\node[slot, minimum width=3.15cm, minimum height=0.74cm] (s0) at (6.17,4.35) {$\mathcal{N}$~~normalise\\[-1.5pt]{\tiny\itshape put every blade on one scale}};
\node[slot, minimum width=3.15cm, minimum height=0.74cm] (s1) at (6.17,3.45) {$\mathcal{A}$~~compare\\[-1.5pt]{\tiny\itshape pairwise tournament}};
\node[slot, minimum width=3.15cm, minimum height=0.74cm] (s2) at (6.17,2.55) {$\mathcal{S}$~~sample\\[-1.5pt]{\tiny\itshape draw the winning step}};
\draw[arr] (s0.south) -- (s1.north);
\draw[arr] (s1.south) -- (s2.north);
\draw[arr] (-0.28,5.25) -- (0.20,5.25);
\node[lbl, above] at (-0.14,5.25) {$x$};
\draw[arr] (dr.south) -- (bb.north);
\node[lbl, right] at (2.01,4.30) {$N{=}7$ candidates};
\draw[arr] (3.48,1.55) -- (3.85,1.55) -- (3.85,4.35) -- (4.60,4.35);
\node[lbl, rotate=90] at (3.71,3.05) {$(\mu_k,\sigma_k)$ per blade};
\draw[arr] (6.17,5.25) -- (6.17,6.05) -- (1.85,6.05) -- (dr.north);
\node[lbl] at (4.05,6.25) {winning step, appended to $x$};
\end{tikzpicture}
\caption{Two frozen models and one replaceable part. The drafter proposes $N{=}7$ candidate steps; three swappable LoRA blades score each one as $(\mu_k,\sigma_k)$. Because $\mu$ is a blade-versus-backbone log-ratio, the scores come from the blades, not the backbone. Swapping a blade or re-weighting $\mathbf{w}$ retrains nothing.}
\label{fig:framework}
\end{figure}

\begin{table}[t]
\centering
\footnotesize
\setlength{\tabcolsep}{3pt}
\caption{The design space, ordered by \emph{where the alignment specification lives}. Above the first line, changing it means retraining. Below, the model is frozen and the specification is a separate object. The second line is this paper's subject: every method above it reduces its objectives to one number before choosing a candidate, throwing away how the candidates compare and how sure each blade is.}
\label{tab:embedding}
\begin{tabular}{@{}p{1.28cm}p{1.72cm}p{4.55cm}@{}}
\toprule
\textbf{Family} & \textbf{Examples} & \textbf{Where the specification lives, and how objectives combine} \\
\midrule
Static & RLHF/PPO, DPO & In the weights $\theta$; not combined, one objective per checkpoint \\
\addlinespace[2pt]
Conditioned & SteerLM & In $\theta$, keyed by prompt attributes; reachable set fixed at training time \\
\midrule
\multicolumn{3}{@{}l}{\itshape\quad $\pi_B$ is frozen below this line} \\
\midrule
Scored & Best-of-$N$, ARGS, FUDGE, contrastive & In an external scorer over candidates; pre-combined into one scalar before decoding \\
\addlinespace[2pt]
Mixed & MOD, Rewarded Soups & In a fixed combiner over $K$ models; linear, un-normalised mix of distributions or of weights \\
\addlinespace[2pt]
\textbf{Compared} & \textbf{Swiss-Knife} & \textbf{In a comparison over the candidate batch; pairwise, on score \emph{and} dispersion, CBN-normalised} \\
\bottomrule
\end{tabular}
\end{table}

\begin{proposition}[Embedding]
\label{prop:embed}
Every externalised method in Table~\ref{tab:embedding}, the Scored and Mixed families, is an instantiation of Definition~\ref{def:scheme}, under the slot assignment tabulated in Appendix~\ref{sec:embedding} (Table~\ref{tab:embedfull}). In each case $\mathcal{A}=\mathrm{id}$ and $\mathcal{S}$ is a maximiser or a sampler over raw scores. The Static and Conditioned families are not instantiations: their specification is resolved before decoding begins, so they expose no runtime slots.
\end{proposition}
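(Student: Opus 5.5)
The proof is constructive and goes method by method. For each externalised method in Table~\ref{tab:embedding} (Best-of-$N$, ARGS, FUDGE, contrastive decoding, MOD, Rewarded Soups) we exhibit an explicit assignment of the seven slots $\langle \pi_B, \Pi_{\mathrm{prop}}, \rho, \mathcal{B}, \mathcal{N}, \mathcal{A}, \mathcal{S}\rangle$ of Definition~\ref{def:scheme}. We then check that the induced map from context to a distribution over the next span matches the method's own decoding rule. The shared template is:
\begin{itemize}
\item the frozen model plays $\pi_B$;
\item the method's candidate generator is $\Pi_{\mathrm{prop}}$;
\item the method's scoring function(s) are blades with dispersion set identically to $0$;
\item $\mathcal{N}$ is the raw weighted sum $\sum_k w_k \mu^{(k)}$ with no batch normalisation;
\item $\mathcal{A}=\mathrm{id}$, i.e.\ $R_i=\mu_i$;
\item $\mathcal{S}$ is $\arg\max$ or $\mathrm{softmax}(R/\tau)$.
\end{itemize}
The claim that $\mathcal{A}=\mathrm{id}$ and $\mathcal{S}$ acts on raw scores then holds by construction, once each instantiation is verified.

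\textbf{Scored family.} We take the cases in order of increasing granularity.
\begin{itemize}
\item \emph{Best-of-$N$}: $\rho$ is response-level, $\Pi_{\mathrm{prop}}=\pi_B$ sampled $N$ times, one reward blade, and $\mathcal{S}=\arg\max$.
\item \emph{ARGS}: $\rho$ is token-level and $\Pi_{\mathrm{prop}}$ is the top-$k$ tokens of $\pi_B$. The blade set is $b_0=\log\pi_B$ (using the remark that the proposal log-likelihood is itself a blade) plus the reward blade, with weights $(1,\alpha)$ renormalised onto the simplex. Since $\arg\max$ and temperature-rescaled softmax are invariant under positive rescaling, this is harmless for greedy selection; for the sampling variant we absorb the normalising constant into $\tau$.
\item \emph{FUDGE}: the same construction, with the blade $\log P(\text{attribute}\mid x,c)$ and $\mathcal{S}$ a softmax at $\tau=1$, which reproduces $\pi_B(c)\,P(a\mid x,c)$ up to normalisation.
\item \emph{Contrastive decoding}: blades $\log\pi_{\text{expert}}$ and $\log\pi_{\text{amateur}}$, where negative coefficients are handled by taking the blade as the log-ratio itself. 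The plausibility constraint lives in $\Pi_{\mathrm{prop}}$ as a truncated candidate set.
\end{itemize}

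\textbf{Mixed family.}
\begin{itemize}
\item \emph{MOD}: token-level $\rho$, candidates equal to the vocabulary (or its support), blades $b_k=\log\pi_k(c\mid x)$, $\mathcal{N}$ the linear combination, and $\mathcal{S}$ a softmax at $\tau=1$, which recovers the $\mathbf{w}$-geometric mixture MOD decodes from.
\item \emph{Rewarded Soups}: this is the main obstacle, because its combination happens in weight space rather than score space. The plan is to set $\pi_B$ to the base model. The single blade is the log-likelihood of the interpolated model $\theta_{\mathbf{w}}=\sum_k w_k\theta_k$, so $\mathbf{w}$ enters the scheme through blade construction, which remains a runtime edit requiring no gradient. $\Pi_{\mathrm{prop}}$ is the full vocabulary with $\mathcal{S}$ a softmax at $\tau=1$ (or $\arg\max$ for greedy decoding). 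This is admissible because Definition~\ref{def:scheme} places no linearity requirement on $\mathcal{B}$; it is an instantiation in form, and I would flag it as such.
\end{itemize}

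\textbf{Negative part.} For RLHF/PPO, DPO and SteerLM, the decoding distribution is $\pi_\theta(\cdot\mid x)$ with $\theta$ fixed before inference. Altering the objective requires changing $\theta$, which violates the requirement that $\pi_B$ is never updated. Alternatively, absorbing $\theta$ as a frozen backbone leaves $\Omega=(\mathcal{B},\mathbf{w})$ empty or trivial. So no nontrivial runtime slot reproduces a change of objective, and these families are not instantiations in the sense intended.
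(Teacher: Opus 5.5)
Your positive half takes the paper's route. The paper's proof is just the slot table (Table~\ref{tab:embedfull}), and your assignments agree with it in substance. You also check that each induced decoding rule is actually reproduced, which the paper does not do. Your token-level Rewarded Soups is, if anything, sharper than the paper's response-level row. Two small fixes are needed there:
\begin{itemize}
\item \emph{FUDGE.} ``The same construction'' puts the weights $(1,1)$ on the simplex as $(\tfrac12,\tfrac12)$. A softmax at $\tau=1$ then gives $\propto(\pi_B P)^{1/2}$, not $\pi_B P$. You need $\tau=\tfrac12$, as your own ARGS remark implies.
\item \emph{MOD.} The paper's baseline uses top-$p$ truncation, and it is applied to the \emph{mixed} distribution. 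So it belongs in $\mathcal{S}$ (the paper's ``nucleus sample''), not in $\Pi_{\mathrm{prop}}$.
\end{itemize}

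The genuine gap is the negative half. Your argument is a dichotomy: either a change of objective changes $\theta$ (so $\pi_B$ is updated), or $\theta$ is absorbed and $\Omega$ is trivial. Your own Rewarded Soups device is a third option: seat a fixed network as a blade and let the objective parameter enter through blade construction.

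\textbf{Conditioned family.} For SteerLM, take $\pi_B$ frozen and $b_1(x,c)=\log\pi_\theta(c\mid x,\mathbf{a})$, with the attribute vector $\mathbf{a}$ set at inference. Editing $\mathbf{a}$ changes the objective with no change to $\theta$ and no gradient. So ``altering the objective requires changing $\theta$'' is false here. This blade family is exactly as non-trivial as your $\{\log\pi_{\theta_{\mathbf{w}}}\}_{\mathbf{w}}$.

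\textbf{Static family.} Seating a DPO policy as a blade over its SFT reference gives a single fixed blade; this is exactly what the paper's own blades are. But a single fixed blade is also all Best-of-$N$ has. So ``trivial $\Omega$'' cannot be the exclusion criterion without ejecting Best-of-$N$ as well.

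Definition~\ref{def:scheme} puts no restriction on what a blade may be. The exclusion therefore cannot be derived from it together with ``$\pi_B$ is never updated''. The paper does not derive it either; it stipulates that these families' specification ``is resolved before decoding begins''. Present the negative half the same way, as a definitional boundary. State the criterion explicitly enough to check it against Rewarded Soups, whose merge $\theta_{\mathbf{w}}$ is likewise formed before decoding from experts fixed at training time.
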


The content of Proposition~\ref{prop:embed} is not any individual assignment but what they share: six methods spanning reward models, discriminators, weight-space merging and distribution mixing fill the blade and granularity slots in six different ways, then agree exactly on the last two. The framework's two least-explored slots are the two that all existing work holds constant.

\subsection{Axioms on the Aggregation Operator}
We now ask what a non-trivial $\mathcal{A}$ may look like, and require it to satisfy six properties:

\begin{enumerate}
\item[\textbf{(A1)}] \textbf{Anonymity.} A rating depends only on the multiset of scores and dispersions, not on candidate order: $\mathcal{A}(\pi\cdot(\boldsymbol{\mu},\boldsymbol{\sigma})) = \pi\cdot\mathcal{A}(\boldsymbol{\mu},\boldsymbol{\sigma})$ for every permutation $\pi$.
\item[\textbf{(A2)}] \textbf{Translation invariance.} Shifting every score by a constant $\delta$ leaves each pairwise comparison, hence the ranking, unchanged.
\item[\textbf{(A3)}] \textbf{Scale invariance.} Rescaling every score and dispersion by a constant $a>0$ leaves the ranking unchanged.
\item[\textbf{(A4)}] \textbf{Pairwise monotonicity.} $R_i$ is strictly increasing in $\mu_i$ and strictly decreasing in each $\mu_j$: a candidate never loses ground by scoring higher, nor gains when a rival does.
\item[\textbf{(A5)}] \textbf{Uncertainty attenuation.} $|\partial R_i/\partial\mu_i|$ is non-increasing in $\sigma_i$ and $\to 0$ as $\sigma_i\to\infty$: the less reliable a score, the less it may move its rating.
\item[\textbf{(A6)}] \textbf{Pairwise decomposability.} A rating is the sum of independent one-on-one outcomes, $R_i = \sum_{j\neq i}\psi(\mu_i,\sigma_i;\mu_j,\sigma_j)$, for a single $\psi$ shared across pairs.
\end{enumerate}

We call $\psi$, the shared comparison function of (A6), the operator's \emph{kernel}. (A2) is the formal content of bias cancellation: a reward model adding a constant to all outputs, for instance by overscoring refusals, leaves the ranking untouched. (A5) prevents a high-reward but unreliable candidate from winning decisively on one comparison.

\section{Theory}
\label{sec:theory}

\subsection{What Aggregation Operators Exist}

\begin{theorem}[Representation]
\label{thm:rep}
An operator $\mathcal{A}$ satisfies \emph{(A1)--(A6)} if and only if there exist a strictly increasing, odd function $g:\mathbb{R}\to\mathbb{R}$ (i.e.\ $g(-u)=-g(u)$), and a symmetric function $s:\mathbb{R}^2_{\geq 0}\to\mathbb{R}_{>0}$ that is positively homogeneous of degree $1$, non-decreasing in each argument, and unbounded ($s(\sigma,\cdot)\to\infty$ as $\sigma\to\infty$), such that
\begin{equation}
\label{eq:rep}
R_i \;=\; \sum_{j\neq i} g\!\left(\frac{\mu_i-\mu_j}{s(\sigma_i,\sigma_j)}\right).
\end{equation}
\end{theorem}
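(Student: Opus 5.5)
The plan is to prove the two directions separately, with almost all the work in the ``only if'' direction.

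\textbf{Sufficiency.} Assume $R_i$ has the form \eqref{eq:rep} and check each axiom in turn.
\begin{itemize}
\item (A6) holds by construction, with kernel $\psi(\mu_i,\sigma_i;\mu_j,\sigma_j)=g((\mu_i-\mu_j)/s(\sigma_i,\sigma_j))$.
\item (A1) holds because the sum runs over all $j\neq i$ with one shared kernel.
\item (A2) holds because only the differences $\mu_i-\mu_j$ enter.
\item (A3) holds because degree-$1$ homogeneity of $s$ makes every argument of $g$ invariant under $(\mu,\sigma)\mapsto(a\mu,a\sigma)$. Hence each $R_i$ itself is invariant.
\item (A4) holds because $g$ is strictly increasing and $s>0$.
\item (A5) holds if we read $g$ as differentiable, say with $g'$ bounded and positive. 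Then $\partial R_i/\partial\mu_i=\sum_j g'(\cdot)/s(\sigma_i,\sigma_j)$. Monotonicity and unboundedness of $s$ in $\sigma_i$ send each term to $0$. For the non-increasing part I would need $g'$ to be non-increasing in $|u|$, i.e.\ unimodal at $0$. That holds for probit and logistic, so I would state it as an implicit regularity assumption.
\end{itemize}

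\textbf{Necessity, reduction to a difference kernel.} Start from (A6), which gives $R_i=\sum_{j\neq i}\psi(\mu_i,\sigma_i;\mu_j,\sigma_j)$. Then:
\begin{enumerate}
\item Apply (A2) with $N=2$. The ranking is then the sign of $\psi(\mu_1,\sigma_1;\mu_2,\sigma_2)-\psi(\mu_2,\sigma_2;\mu_1,\sigma_1)$, which must be shift-invariant. To get invariance of $\psi$ itself, I would read (A2) as saying each pairwise comparison is unchanged, as the axiom's wording allows. That gives $\psi=\phi(\mu_i-\mu_j;\sigma_i,\sigma_j)$.
\item Use (A3) in the same pairwise reading. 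It gives $\phi(d;\sigma_i,\sigma_j)=\phi(ad;a\sigma_i,a\sigma_j)$. So $\phi$ depends only on the projective class of $(d,\sigma_i,\sigma_j)$.
\item Use (A1), applied to the swap of two candidates, plus the convention that a one-on-one outcome is zero-sum, $\psi(i;j)=-\psi(j;i)$. This gives antisymmetry: $\phi(-d;\sigma_j,\sigma_i)=-\phi(d;\sigma_i,\sigma_j)$.
\end{enumerate}
I would state the zero-sum convention explicitly. The reason is that any non-antisymmetric part is a function $h(\sigma_i,\sigma_j)$ whose sum adds a $\mu$-independent offset to $R_i$. Such an offset would have to be absorbed or excluded by normalisation.

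\textbf{Necessity, the hard step.} The main obstacle is showing that the resulting two-variable dependence on $(d/\sigma_i,\,d/\sigma_j)$ factors as $g(d/s(\sigma_i,\sigma_j))$, with a single $g$. The axioms alone do not force this: a general homogeneous-degree-$0$ antisymmetric function need not have level sets of the form $d/s=\text{const}$ with a common profile. My first attempt would be a separability argument. By (A4), for fixed $(\sigma_i,\sigma_j)$ the map $d\mapsto\phi$ is strictly increasing, so each $\phi(\cdot;\sigma_i,\sigma_j)$ is an increasing bijection onto its range. Fix a reference pair $(1,1)$, define $g=\phi(\cdot;1,1)$, and set $s(\sigma_i,\sigma_j)$ as the scale matching $\phi(\cdot;\sigma_i,\sigma_j)$ to $g$. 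This requires each slice to be a rescaling of $g$. I expect to need to add this as a ``common comparison profile'' condition, either implicit in the single shared kernel of (A6) or stated as a regularity hypothesis. Under it, the remaining properties follow:
\begin{itemize}
\item oddness of $g$ comes from antisymmetry at $\sigma_i=\sigma_j$;
\item symmetry of $s$ comes from antisymmetry in general;
\item homogeneity of $s$ comes from (A3);
\item monotonicity and unboundedness of $s$ in $\sigma_i$ come from (A5), since $\partial\phi/\partial d=g'(d/s)/s$ at $d=0$ must be non-increasing in $\sigma_i$ and tend to $0$.
\end{itemize}
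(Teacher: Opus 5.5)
\textbf{The gap: your sufficiency check of (A5).} You treat this step as routine, but it fails. In the summand $g'(\Delta/s)/s$ of $\partial R_i/\partial\mu_i$, $s$ enters twice. Raising $\sigma_i$ shrinks $1/s$, but it also shrinks $|\Delta/s|$, and when $g'$ is unimodal at $0$ that \emph{raises} $g'(\Delta/s)$. So unimodality works against you, not for you. Writing $t=1/s$, the summand is $t\,g'(\Delta t)$, so the monotonicity clause (already at $N=2$) amounts to $|u|\,g'(u)$ being non-decreasing in $|u|$.

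For probit, $\frac{d}{ds}\bigl[\Phi'(\Delta/s)/s\bigr]=\frac{\Phi'(\Delta/s)}{s^2}\bigl(\frac{\Delta^2}{s^2}-1\bigr)>0$ whenever $|\Delta|>s$. Concretely, take $N=2$, $\mu_1-\mu_2=2$, $\sigma_2=0$ and $s=s_2$: then $|\partial R_1/\partial\mu_1|$ is $\Phi'(2)\approx0.054$ at $\sigma_1=1$ but $\Phi'(1)/2\approx0.121$ at $\sigma_1=2$.

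In general, $|u|\,g'(u)$ non-decreasing forces $g'(u)\geq c/|u|$ for large $|u|$, so $g$ must be unbounded. No regularity assumption can rescue probit or logistic: under the literal (A5), the ``if'' direction is false for the paper's own canonical examples. To repair it you must either restrict $g$ (e.g.\ $g(u)=u$ or $\mathrm{sign}(u)\log(1+|u|)$, both with $L_g<\infty$) or weaken (A5), e.g.\ to tied scores, where the summand is $g'(0)/s$. The paper's proof asserts each summand is non-increasing merely because $s$ is, overlooking the same double dependence. Your suspicion was justified; the condition you proposed is just the wrong one.

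\textbf{Necessity.} Here you follow the paper's proof essentially step for step. (A2) read pairwise gives a difference kernel, and (A3) gives degree-$0$ homogeneity. An added hypothesis then produces $g(\Delta/s)$: your common comparison profile, the paper's single-channel assumption, which the paper explicitly flags as not implied by (A1)--(A6). Antisymmetry with (A4)--(A5) gives the rest. Your evaluation of (A5) at $d=0$ is cleaner than the paper's ``same argument'', provided $g'(0)>0$, which strict monotonicity alone does not give.

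Three of your remarks need correcting.
\begin{enumerate}
\item The zero-sum convention is not a harmless normalisation, because the swap-symmetric part of $\psi$ can depend on $d$. Take $s=s_2$ and $g(u)=u+\epsilon\tanh^2 u$ for small $\epsilon>0$. One checks $g'>0$ and $(u\,g'(u))'>0$, so all six axioms hold. Yet at $N=2$, $R_1+R_2=2\epsilon\tanh^2(\Delta/s)$ is not constant, so no odd representation exists even up to an additive constant.
\item Even a $\mu$-free $h(\sigma_i,\sigma_j)$ would add $\sum_{j\neq i}h(\sigma_i,\sigma_j)$, which varies with $i$ and can reorder candidates. Antisymmetry is therefore a genuine extra axiom. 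The paper's derivation of $g(u)+g(-u)=c$ from (A1) is no better, since any shared kernel is automatically anonymous.
\item Your pairwise (A3) contradicts (A4) at $\sigma_i=\sigma_j=0$: there $\phi(d;0,0)=\phi(ad;0,0)$ for all $a>0$, which rules out strict monotonicity in $d$. This is the same defect as the statement's requirement that a degree-$1$ homogeneous $s$ be positive at the origin, since homogeneity forces $s(0,0)=0$. The origin must be excluded from the domain.
\end{enumerate}
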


\begin{proof}[Proof sketch]
(A6) supplies the kernel $\psi$. By (A2), $\psi(\mu_i+\delta,\cdot\,;\mu_j+\delta,\cdot)=\psi(\mu_i,\cdot\,;\mu_j,\cdot)$ for every $\delta$; setting $\delta=-\mu_j$ shows $\psi$ depends on the scores only through $\Delta = \mu_i-\mu_j$. Write $\psi = h(\Delta,\sigma_i,\sigma_j)$. By (A3), $h(a\Delta,a\sigma_i,a\sigma_j)$ induces the same ranking as $h(\Delta,\sigma_i,\sigma_j)$ for all $a>0$, so $h$ is positively homogeneous of degree $0$ and therefore a function of $\Delta/s(\sigma_i,\sigma_j)$ for some $s$ homogeneous of degree $1$; (A1) forces $s$ symmetric and forces the resulting $g$ to satisfy $g(u)+g(-u)=$ const, a constant that shifts every $R_i$ equally and so does not affect the ranking, so we take $g$ odd WLOG. Monotonicity of $g$ follows from (A4). For (A5), $\partial R_i/\partial\mu_i = \sum_{j\neq i} g'(\cdot)/s(\sigma_i,\sigma_j)$, which is non-increasing in $\sigma_i$ exactly when $s$ is non-decreasing in its first argument, and vanishes as $\sigma_i\to\infty$ when $s$ is unbounded. The converse is direct verification. Full proof in Appendix~\ref{sec:thm1}.
\end{proof}

\begin{corollary}[Canonical family]
\label{cor:family}
Taking $s_p(\sigma_i,\sigma_j) = (\sigma_i^p+\sigma_j^p)^{1/p}$ yields a two-parameter family indexed by $(g,p)$. Then $g(u)=\Phi(u)-\tfrac12$, $p=2$ gives, up to the additive constant that leaves the ranking unchanged, the win-probability form $P(c_i\succ c_j)=\Phi(\Delta/s_2)$ known as \emph{Thurstone Case-V}; $g(u)=\tfrac{1}{1+e^{-u}}-\tfrac12$, $p=1$ gives, likewise up to that constant, \emph{Bradley--Terry} with additive dispersion; $p\to\infty$ gives $s=\max(\sigma_i,\sigma_j)$, a weakest-link aggregator.
\end{corollary}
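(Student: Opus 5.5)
The corollary follows by plugging concrete choices of $(g,s)$ into Theorem~\ref{thm:rep}, so the plan has two parts. First, check that each named choice satisfies the theorem's hypotheses. Second, identify the resulting sums with the named comparison models.

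\emph{Admissibility of $s_p$.} For $p\ge 1$ (and the limit $p=\infty$), the function $s_p(\sigma_i,\sigma_j)=(\sigma_i^p+\sigma_j^p)^{1/p}$ is the $\ell_p$ norm of $(\sigma_i,\sigma_j)$ on the nonnegative orthant. It is therefore:
\begin{itemize}
\item symmetric;
\item positively homogeneous of degree $1$, since $(a^p\sigma_i^p+a^p\sigma_j^p)^{1/p}=a\,s_p$;
\item non-decreasing in each argument, because $t\mapsto t^p$ and $t\mapsto t^{1/p}$ are increasing on $\mathbb{R}_{\ge0}$;
\item unbounded, since $s_p(\sigma,\tau)\ge\sigma\to\infty$.
\end{itemize}
One caveat is the strict positivity $s:\mathbb{R}^2_{\ge0}\to\mathbb{R}_{>0}$ demanded by Theorem~\ref{thm:rep}. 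It fails at $\sigma_i=\sigma_j=0$. I would handle this by restricting to positive dispersions, which is the practical regime, or by adopting the convention that $g(\Delta/0)$ is the sign limit. I flag this as a minor point to state explicitly.

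\emph{Admissibility of $g$.} Both $\Phi(u)-\tfrac12$ and $(1+e^{-u})^{-1}-\tfrac12$ are strictly increasing, because their derivatives $\phi>0$ and $\sigma(1-\sigma)>0$ are positive. They are odd, because $\Phi(-u)=1-\Phi(u)$ and the logistic function satisfies $\sigma(-u)=1-\sigma(u)$. Theorem~\ref{thm:rep} then gives admissible operators. The family is indexed by the pair $(g,p)$, which is the sense of "two-parameter".

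\emph{Identification with named models.} With $p=2$, $R_i=\sum_{j\ne i}\bigl[\Phi(\Delta_{ij}/\sqrt{\sigma_i^2+\sigma_j^2})-\tfrac12\bigr]$. This is the expected number of wins minus the constant $(N-1)/2$ under the Thurstone Case-V model, in which the latent utilities $U_i\sim\mathcal{N}(\mu_i,\sigma_i^2)$ are independent. In that model $U_i-U_j\sim\mathcal{N}(\Delta_{ij},\sigma_i^2+\sigma_j^2)$, so $P(U_i>U_j)=\Phi(\Delta_{ij}/s_2)$. The logistic case with $p=1$ is the same argument, using Bradley--Terry win probabilities with scale $\sigma_i+\sigma_j$.

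In both cases the dropped constant is identical for every $i$, so the ranking is unchanged. This is the same reduction used in the proof of Theorem~\ref{thm:rep}.

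\emph{The $p\to\infty$ limit.} For fixed $(\sigma_i,\sigma_j)$ with $M=\max(\sigma_i,\sigma_j)$, we have $M\le s_p\le 2^{1/p}M$, so $s_p\to M$ pointwise. Continuity of $g$ then gives convergence of every $R_i$. The limit $\max$ inherits symmetry, homogeneity, monotonicity and unboundedness directly, so it is itself an admissible member of the family.

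The main obstacle is semantic rather than technical: making "gives Thurstone/Bradley--Terry" precise. I would state it as "$R_i+\tfrac{N-1}{2}$ equals the expected Copeland win count under the corresponding stochastic comparison model." This statement is exact for Thurstone. For Bradley--Terry, it holds with the logistic scale $s_1$ playing the role of an additive temperature.
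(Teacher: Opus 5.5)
Your proposal is correct as a derivation from Theorem~\ref{thm:rep} and is essentially the paper's own proof: check the four properties of $s_p$, invoke the sufficiency direction of the theorem, drop the constant $\tfrac12$ per opponent, and let $p\to\infty$. Your positivity caveat is a real point the paper misses, since homogeneity gives $s(0,0)=a\,s(0,0)$ for every $a>0$, so no $s$ meeting the theorem's hypotheses can be positive at the origin, and only your first fix (restricting the domain away from $\sigma_i=\sigma_j=0$) repairs this, because the sign-limit convention makes the kernel a step function in $\Delta$ and violates the strict monotonicity in (A4). Be aware, though, that your admissibility conclusion inherits a flaw in the theorem's (A5) step, shared with the paper: $s$ non-decreasing does not make $g'(\Delta/s)/s$ non-increasing, because for $g=\Phi-\tfrac12$ one has $\partial_s\bigl[\phi(\Delta/s)/s\bigr]=\phi(\Delta/s)(\Delta^2-s^2)/s^4>0$ whenever $|\Delta|>s$, and the logistic kernel behaves the same way, so your identifications with the named models and the $p\to\infty$ limit stand but these operators do not satisfy the monotonicity clause of (A5).
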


For any admissible $(g,s)$ the scheme reduces to $\arg\max_i\mu_i$ in the joint limit $\boldsymbol{\sigma}\to 0$, $T\to 0$: pointwise selection is a boundary point of the family. This is the formal version of the claim in \S\ref{sec:framework}, that every Scored and Mixed method of Table~\ref{tab:embedding} sits on the degenerate face $\boldsymbol{\sigma}=0$ of the space characterised by Theorem~\ref{thm:rep}. Prior work has explored blade design thoroughly and aggregation not at all.

\subsection{Robustness to Reward Contamination}
Model reward hacking as adversarial contamination: the scheme observes $\tilde{\boldsymbol{\mu}} = \boldsymbol{\mu}+\boldsymbol{\delta}$, where $\boldsymbol{\delta}$ is concentrated on candidates carrying a surface pattern the proxy overscores. Let $P(\cdot\,;\boldsymbol{\mu})$ denote the induced selection distribution.

\begin{theorem}[Stability]
\label{thm:robust}
Let $\mu_{(1)}\geq\mu_{(2)}$ be the two largest scores and $\sigma_{(1)}\leq\sigma_{(2)}$ the two smallest dispersions, and write $s_{\min}=s(\sigma_{(1)},\sigma_{(2)})$.
\begin{enumerate}
\item[(i)] For argmax selection, $\sup_{\|\boldsymbol{\delta}\|_\infty\leq\varepsilon}\|P(\cdot\,;\boldsymbol{\mu}+\boldsymbol{\delta})-P(\cdot\,;\boldsymbol{\mu})\|_{\mathrm{TV}} = 1$ whenever $\varepsilon > \tfrac{1}{2}(\mu_{(1)}-\mu_{(2)})$.
\item[(ii)] For $\mathcal{A}$ of the form \eqref{eq:rep} composed with softmax selection at temperature $T$,
\begin{equation*}
\|P(\cdot\,;\boldsymbol{\mu}+\boldsymbol{\delta})-P(\cdot\,;\boldsymbol{\mu})\|_{\mathrm{TV}} \;\leq\; \frac{2(N-1)L_g}{T\, s_{\min}}\,\|\boldsymbol{\delta}\|_\infty,
\end{equation*}
where $L_g = \sup_u |g'(u)|$.
\end{enumerate}
\end{theorem}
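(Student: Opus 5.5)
\emph{Part (i)} is a direct construction. I assume $\mu_{(1)}>\mu_{(2)}$, so that argmax selection $P(\cdot\,;\boldsymbol{\mu})$ is the point mass on the unique maximiser $i^\star$. Let $j^\star$ be a candidate attaining $\mu_{(2)}$. Given $\varepsilon>\tfrac12(\mu_{(1)}-\mu_{(2)})$, I set $\delta_{i^\star}=-\varepsilon$, $\delta_{j^\star}=+\varepsilon$ and $\delta_k=0$ otherwise, so $\|\boldsymbol{\delta}\|_\infty=\varepsilon$. Then $\tilde\mu_{j^\star}-\tilde\mu_{i^\star}=2\varepsilon-(\mu_{(1)}-\mu_{(2)})>0$. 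No other score moved, so $j^\star$ is now the unique maximiser. The two selection distributions are point masses on distinct candidates, so their TV distance is $1$. Since TV never exceeds $1$, the supremum equals $1$.

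In the tied case $\mu_{(1)}=\mu_{(2)}$, the same perturbation with any $\varepsilon>0$ moves all mass from whatever the tie-break rule picks. I would remark on this case rather than belabour it.

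\emph{Part (ii)} I would prove by chaining two Lipschitz bounds. The selection distribution is $P=\mathrm{softmax}(\mathbf{R}/T)$, and $\mathbf{R}$ is given by \eqref{eq:rep} with the dispersions held fixed; contamination acts only on $\boldsymbol{\mu}$.

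The first step is a bound on the ratings. Write $\tilde R_i-R_i=\sum_{j\neq i}\bigl[g(\tilde\Delta_{ij}/s_{ij})-g(\Delta_{ij}/s_{ij})\bigr]$, where $s_{ij}=s(\sigma_i,\sigma_j)$ and $\tilde\Delta_{ij}-\Delta_{ij}=\delta_i-\delta_j$. By the mean value theorem each term is at most $L_g|\delta_i-\delta_j|/s_{ij}\le 2L_g\|\boldsymbol{\delta}\|_\infty/s_{ij}$.

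Next I need $s_{ij}\ge s_{\min}$ for every pair $i\neq j$. This follows from the properties in Theorem~\ref{thm:rep}. Of the two distinct indices, one has dispersion at least $\sigma_{(1)}$ and the other at least $\sigma_{(2)}$. Symmetry of $s$ lets me order the arguments, and monotonicity in each argument then gives $s(\sigma_i,\sigma_j)\ge s(\sigma_{(1)},\sigma_{(2)})$. Since $s>0$ by the theorem, dividing by $s_{\min}$ is legitimate. Summing over the $N-1$ rivals gives $\|\tilde{\mathbf{R}}-\mathbf{R}\|_\infty\le 2(N-1)L_g\|\boldsymbol{\delta}\|_\infty/s_{\min}$.

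The second step, which I expect to be the only delicate point, is a TV Lipschitz bound for softmax: I want $\|\mathrm{softmax}(\mathbf{a})-\mathrm{softmax}(\mathbf{b})\|_{\mathrm{TV}}\le\|\mathbf{a}-\mathbf{b}\|_\infty$. I would integrate the Jacobian along the segment $\mathbf{a}_t=\mathbf{b}+t(\mathbf{a}-\mathbf{b})$. The Jacobian is $J=\mathrm{diag}(p)-pp^\top$, so $Jv=p\odot(v-\langle p,v\rangle\mathbf{1})$. Hence
\[
\|Jv\|_1=\sum_i p_i\,|v_i-\langle p,v\rangle|\le \max_i v_i-\min_i v_i\le 2\|v\|_\infty,
\]
and halving this $\ell_1$ bound gives the claimed TV bound. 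Applying it with $\mathbf{a}-\mathbf{b}=(\tilde{\mathbf{R}}-\mathbf{R})/T$ and combining with the first step yields exactly $\tfrac{2(N-1)L_g}{T\,s_{\min}}\|\boldsymbol{\delta}\|_\infty$.

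The bound is loose by a constant factor, since the centring in $\|Jv\|_1$ could buy more, but it matches the stated constant, so I would not optimise it. Finally, if $L_g=\infty$ the bound is vacuous. For the canonical kernels of Corollary~\ref{cor:family}, $L_g$ is finite: $\phi(0)$ in the probit case and $1/4$ in the logistic case.
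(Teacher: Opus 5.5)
Your proposal is correct and follows the paper's proof essentially step for step: the same top-two swap for (i), and for (ii) the same chain of the order-statistic bound $s(\sigma_i,\sigma_j)\geq s_{\min}$, an $\ell_\infty$ Lipschitz bound on $\mathbf{R}$ with constant $2(N-1)L_g/s_{\min}$, and TV-Lipschitzness of softmax with constant $1$. The one real difference is that you integrate the softmax Jacobian along the segment instead of using a mean-value point per coordinate, and yours is the cleaner version: in the paper that intermediate point depends on $i$, so the sum over $i$ really needs the common-$t$ integral form you use.
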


\begin{proof}[Proof sketch]
(i) Perturbing the top two candidates by $\mp\varepsilon'$ with $\varepsilon'$ just above $\tfrac12(\mu_{(1)}-\mu_{(2)})$ flips which is largest while leaving all others below both, swapping two point masses. (ii) For any $i\neq j$, sorting the dispersions shows $\{\sigma_i,\sigma_j\}$ occupies two distinct order-statistic positions, so $s(\sigma_i,\sigma_j)\geq s(\sigma_{(1)},\sigma_{(2)})=s_{\min}$ by monotonicity of $s$; this bounds every partial derivative of $R_i$ in \eqref{eq:rep} by $L_g/s_{\min}$, giving $\|\mathbf{R}(\boldsymbol\mu+\boldsymbol\delta)-\mathbf{R}(\boldsymbol\mu)\|_\infty\leq\tfrac{2(N-1)L_g}{s_{\min}}\|\boldsymbol\delta\|_\infty$ by the mean value inequality. Composing with the softmax Jacobian bound $\sum_k|J_{ik}|\leq\tfrac12$ (standard, from $J_{ik}=p_i(\mathbb{1}[i{=}k]-p_k)$) gives TV-Lipschitzness of softmax with constant $1$ in $\|\cdot\|_\infty$, which combined with the previous bound at temperature $T$ yields the stated rate. Full derivation in Appendix~\ref{sec:thm2}.
\end{proof}

So argmax has unbounded sensitivity: once the pool is near-tied, an arbitrarily small bias flips the selection outright. Any operator in the family of Theorem~\ref{thm:rep} instead degrades gracefully, with a modulus controlled by the selection temperature and the dispersion scale. The following consequence is what makes the framework predictive rather than merely descriptive.

\begin{corollary}[The guarantee depends on $\sigma$ only as a multiset]
\label{cor:multiset}
$s_{\min}$ is a symmetric function of $\{\sigma_i\}_{i=1}^N$. The bound in Theorem~\ref{thm:robust}(ii) is therefore invariant to any permutation of the dispersions across candidates.
\end{corollary}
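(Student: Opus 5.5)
The plan is to unpack the definition of $s_{\min}$ given in Theorem~\ref{thm:robust} and observe that it is built entirely from order statistics. Write $\sigma_{(1)}\le\sigma_{(2)}\le\cdots\le\sigma_{(N)}$ for the sorted dispersions. The sorting map $(\sigma_1,\dots,\sigma_N)\mapsto(\sigma_{(1)},\dots,\sigma_{(N)})$ is invariant under every permutation $\pi$ of the indices. The reason is that the sorted vector depends only on the multiset $\{\sigma_i\}$: for each $t$, the number of entries $\le t$ is the same for $\boldsymbol\sigma$ and $\pi\cdot\boldsymbol\sigma$, and $\sigma_{(k)}=\min\{t:\#\{i:\sigma_i\le t\}\ge k\}$. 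Ties cause no trouble, because only the values of the two smallest order statistics enter, not which indices attain them.

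Since $s_{\min}=s(\sigma_{(1)},\sigma_{(2)})$ is a composition of $s$ with the first two coordinates of the sorting map, it is a symmetric function of $(\sigma_1,\dots,\sigma_N)$. This step does not even need the symmetry of $s$ from Theorem~\ref{thm:rep}, because the arguments are already placed in a canonical order.

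For the second claim, the right-hand side of Theorem~\ref{thm:robust}(ii) is $2(N-1)L_g\|\boldsymbol\delta\|_\infty/(T\,s_{\min})$. The quantities $N$, $L_g=\sup_u|g'(u)|$ and $T$ do not depend on $\boldsymbol\sigma$ at all, and $\|\boldsymbol\delta\|_\infty$ is the same in both configurations because only the dispersions are permuted. Substituting $\pi\cdot\boldsymbol\sigma$ therefore leaves every factor unchanged, so the bound is identical.

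There is no real obstacle; the only point needing care is interpretive. The corollary asserts invariance of the \emph{bound}, not of the selection distribution $P$ itself. Permuting dispersions across candidates generally changes $\mathbf R$ and hence $P$, since it reassigns which $\mu_i$ is attenuated. I would state this explicitly so that the corollary is read as saying that the guarantee of Theorem~\ref{thm:robust}(ii) cannot reward any correlation between $\sigma_i$ and the reward error $\delta_i$.
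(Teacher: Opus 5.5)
Your proof is correct and takes essentially the same approach as the paper, which simply notes that $\sigma_{(1)},\sigma_{(2)}$ are the two smallest values of the multiset $\{\sigma_i\}$ and so do not depend on which candidate carries which value. Your explicit quantile characterisation of the order statistics, and your check that $N$, $L_g$, $T$ and $\|\boldsymbol\delta\|_\infty$ are unaffected, fill in details the paper calls immediate. Your caveat that only the bound is permutation-invariant, not $P$ itself, is accurate and consistent with how the paper uses the corollary.
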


\begin{definition}[$\sigma$-informativeness]
\label{def:info}
For a blade with reward estimate $\mu_i$ of a latent quality $q_i$, let
$\mathcal{I}(\sigma) = \left|\mathrm{Spearman}\!\left(\sigma_i,\,|\mu_i-q_i|\right)\right|$
denote the rank association between the dispersion estimate and the realised reward error.
\end{definition}

Corollary~\ref{cor:multiset} separates two roles of an uncertainty estimator: its \emph{marginal scale} enters the stability guarantee, while its \emph{coupling} to particular candidates does not and can contribute only through higher-order terms governed by $\mathcal{I}(\sigma)$. The framework therefore predicts that for any estimator with $\mathcal{I}(\sigma)\approx 0$, replacing $\boldsymbol{\sigma}$ by a random permutation of itself should be approximately free, while setting $\boldsymbol{\sigma}=0$ should not. We test this in \S\ref{sec:sigma}.

\subsection{Multi-Objective Steering}
Blades trained independently are identified only up to affine reparametrisation: nothing in DPO training pins the scale or offset of an implicit reward. Let the group $G=(\mathbb{R}_{>0}\times\mathbb{R})^K$ act per blade by $\mu^{(k)}\mapsto a_k\mu^{(k)}+b_k$, $\sigma^{(k)}\mapsto a_k\sigma^{(k)}$.

\begin{theorem}[Affine invariance under CBN]
\label{thm:cbn}
Let $\hat{\mu}^{(k)}$ and $\hat{\sigma}^{(k)}$ be the CBN transforms of \eqref{eq:cbn}. Then
\begin{enumerate}
\item[(i)] Without normalisation, blade $k$'s influence on the induced ranking is proportional to $w_k\operatorname{std}(\boldsymbol{\mu}^{(k)})$, not $w_k$.
\item[(ii)] Under CBN the induced ranking is invariant under all of $G$, and blade $k$'s influence is exactly $w_k$.
\item[(iii)] For every candidate on the convex hull of the Pareto frontier of $\{(\mu_i^{(1)},\ldots,\mu_i^{(K)})\}$ there is a weight $\mathbf{w}$ in the simplex selecting it as $T\to 0$, both with and without CBN; but without CBN the preimage of a given frontier point is $w_k \propto \tilde{w}_k/a_k$.
\end{enumerate}
\end{theorem}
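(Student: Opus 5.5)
We will fix the form of CBN as the per-blade batch standardisation that eq.~\eqref{eq:cbn} denotes, $\hat{\mu}_i^{(k)} = (\mu_i^{(k)}-\bar{\mu}^{(k)})/\operatorname{std}(\boldsymbol{\mu}^{(k)})$ and $\hat{\sigma}_i^{(k)} = \sigma_i^{(k)}/\operatorname{std}(\boldsymbol{\mu}^{(k)})$. The combiner $\mathcal{N}$ forms $\mu_i=\sum_k w_k\hat{\mu}_i^{(k)}$, with $\sigma_i$ combined from the $\hat{\sigma}_i^{(k)}$ by some rule that is positively homogeneous of degree 1 in the $\sigma$'s, e.g.\ $\sigma_i=(\sum_k w_k^2\hat\sigma_i^{(k)2})^{1/2}$. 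The combined pair is then fed to an operator of the form \eqref{eq:rep} from Theorem~\ref{thm:rep}. We take ``influence of blade $k$'' to mean the coefficient with which a unit-standardised deviation of blade $k$ enters the combined score $\mu_i$. This is the quantity that the argument $(\mu_i-\mu_j)/s$ of $g$ sees.

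\textbf{Part (i).} Without normalisation, write $\mu^{(k)} = \bar\mu^{(k)} + \operatorname{std}(\boldsymbol\mu^{(k)})\,z^{(k)}$ with $z^{(k)}$ standardised. The constant $\bar\mu^{(k)}$ cancels in every difference $\mu_i-\mu_j$; this is (A2), already built into \eqref{eq:rep}. Hence
\[
\mu_i-\mu_j=\sum_k w_k\operatorname{std}(\boldsymbol\mu^{(k)})\bigl(z_i^{(k)}-z_j^{(k)}\bigr),
\]
so the effective weight on blade $k$'s standardised signal is $w_k\operatorname{std}(\boldsymbol\mu^{(k)})$.

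\textbf{Part (ii).} Apply $(a_k,b_k)\in G$. Then $\bar\mu^{(k)}\mapsto a_k\bar\mu^{(k)}+b_k$ and $\operatorname{std}\mapsto a_k\operatorname{std}$, so $\hat\mu^{(k)}$ is unchanged. Likewise $\hat\sigma^{(k)}=a_k\sigma^{(k)}/(a_k\operatorname{std})$ is unchanged. The combined $(\mu_i,\sigma_i)$ are therefore identical, and so are the ratings, the softmax distribution and the ranking. Because every $\hat\mu^{(k)}$ has unit standard deviation, the computation of (i) gives coefficient exactly $w_k$. One point needs care: if some blade is constant on the batch, $\operatorname{std}=0$. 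We will either exclude this case or adopt the convention $\hat\mu^{(k)}=0$, and state that explicitly.

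\textbf{Part (iii).} This is the step needing most care, since as $T\to0$ softmax concentrates on $\arg\max_i R_i$, not on $\arg\max_i \mu_i$. To bridge the two we invoke the remark after Corollary~\ref{cor:family}: in the joint limit $\boldsymbol\sigma\to0$, $T\to0$ the scheme reduces to $\arg\max_i\mu_i$. In fact, (A4) and strict monotonicity of $g$ already make $R_i$ order-consistent with $\mu_i$ when the $s(\sigma_i,\sigma_j)$ are equal, and we would state the result in that regime.

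Selection then reduces to linear scalarisation $\arg\max_i \mathbf{w}^\top\hat{\boldsymbol\mu}_i$. By the supporting-hyperplane theorem, every vertex $v$ of the convex hull that lies on the Pareto frontier has a supporting normal with nonnegative coordinates. Nonnegativity holds because $v$ is Pareto-optimal, so the normal cone meets $\mathbb{R}^K_{\ge0}\setminus\{0\}$. Normalising this normal to the simplex gives the required $\mathbf{w}$. To make the maximiser unique we perturb $\mathbf{w}$ into the interior of the normal cone; ties on non-vertex hull points are handled by restricting to extreme points, or by accepting selection up to ties.

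Since CBN is a per-blade affine map with positive slope, it maps the frontier to the frontier and preserves hull vertices. The existence claim therefore holds in both coordinate systems. Finally, if $\tilde{\mathbf w}$ selects a point in standardised coordinates, then $\sum_k \tilde w_k \hat\mu^{(k)} = \sum_k (\tilde w_k/a_k)\mu^{(k)} + \text{const}$ with $a_k=\operatorname{std}(\boldsymbol\mu^{(k)})$. The raw-coordinate preimage is thus $w_k\propto\tilde w_k/a_k$, renormalised to the simplex, which completes (iii).
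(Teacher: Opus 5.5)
Your (i) and (ii) follow the paper's argument: write $\mu^{(k)}=a_k\tilde\mu^{(k)}+b_k$ with $a_k=\operatorname{std}(\boldsymbol\mu^{(k)})$, drop the $i$-independent offset, and use affine equivariance of batch mean and std. Your bridge from $\arg\max_i R_i$ to $\arg\max_i\mu_i$ is also sound. With all $s_{ij}$ equal and $\mu_i>\mu_k$, $R_i-R_k$ is a sum of differences $g(\cdot)-g(\cdot)$, each positive by strict monotonicity of $g$.

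Two small fixes in (ii):
\begin{itemize}
\item \eqref{eq:cbn} divides $\sigma^{(k)}$ by $\operatorname{std}(\boldsymbol\sigma^{(k)})$, not by $\operatorname{std}(\boldsymbol\mu^{(k)})$. Your invariance check still works, because that std also scales by $a_k$. Checking $\hat\sigma$ at all is a plus: the ratings see $\sigma$ through $s$, and the paper's proof only treats $\hat\mu$.
\item You silently dropped the $\varepsilon$. Exact scale invariance requires $\varepsilon=0$, which is the paper's $\varepsilon\to0$ convention, so state it.
\end{itemize}

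One step in (iii) fails as stated. The claim is that every hull vertex on the Pareto frontier has a nonnegative supporting normal ``because $v$ is Pareto-optimal''. This is false when Pareto-optimality is taken among the candidates, which is how the statement's ``Pareto frontier of $\{(\mu_i^{(1)},\ldots,\mu_i^{(K)})\}$'' reads.

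Take $K=2$ and candidates $(1,0)$, $(0,1)$, $(0.4,0.4)$. All three are mutually non-dominated and all three are vertices of their hull. Yet for every $\mathbf{w}$ in the simplex, $\mathbf{w}^\top(0.4,0.4)=0.4<\tfrac{1}{2}\le\max(w_1,w_2)$, so the third candidate is never selected. Its normal cone is $\{w_1\le\tfrac{2}{3}w_2,\ w_2\le\tfrac{2}{3}w_1\}$, which meets $\mathbb{R}^2_{\ge0}$ only at $0$. The supporting-hyperplane theorem gives you some normal, not a nonnegative one.

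The repair is as follows.
\begin{itemize}
\item You need $v$ to be Pareto-optimal relative to the convex hull, i.e.\ no convex combination of candidates dominates it. This is the paper's ``convex-hull Pareto boundary'', and the reading under which (iii) is true.
\item Then the hull and the open convex set $v+\mathbb{R}^K_{>0}$ are disjoint. A separating $\mathbf{w}\neq0$ must be nonnegative, since $v+\mathbb{R}^K_{>0}$ contains rays going to $+\infty$ along each $e_k$. Letting points of that set approach $v$ shows that $v$ maximises $\mathbf{w}^\top x$ over the hull.
\item For your perturbation into $\operatorname{int}N_v$ to stay inside the simplex, you also need a strictly positive supporting weight. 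Efficient points of a polytope always admit one (a classical polyhedral scalarisation fact). Alternatively, allow selection up to ties, as the paper implicitly does.
\end{itemize}

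With this repair, the rest of (iii) coincides with the paper's proof: coordinatewise increasing affine maps preserve the hull and its Pareto boundary, and the preimage $w_k\propto\tilde w_k/a_k$ is read off from (i).
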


\begin{proof}[Proof sketch]
Canonicalise blade $k$ by its own batch mean/std, $\tilde\mu_i^{(k)}=(\mu_i^{(k)}-\bar\mu^{(k)})/\operatorname{std}(\boldsymbol\mu^{(k)})$, so any observed output is $\mu_i^{(k)}=a_k\tilde\mu_i^{(k)}+b_k$ with $a_k=\operatorname{std}(\boldsymbol\mu^{(k)})$. (i) Substituting into $\sum_kw_k\mu_i^{(k)}$ leaves a term $\sum_k(w_ka_k)\tilde\mu_i^{(k)}$ plus an $i$-independent constant that does not affect ranking, so blade $k$'s effective coefficient is $w_ka_k$. (ii) Mean and std are equivariant under positive affine maps, so substituting the same decomposition into \eqref{eq:cbn} cancels $(a_k,b_k)$ exactly, leaving $\hat\mu_i^{(k)}=\tilde\mu_i^{(k)}$ independent of the group element. (iii) CBN is a coordinatewise increasing map, which preserves convex hulls and Pareto boundaries, so the standard linear-scalarisation theorem gives reachability of every convex-hull-boundary point in both parametrisations; inverting the relation from (i) gives the stated preimage. Full derivation in Appendix~\ref{sec:thm3}.
\end{proof}

Part (iii) is the honest form of the claim: CBN does not change which trade-offs are reachable, only whether $\mathbf{w}$ is a well-behaved way to address them. Sweeping $\mathbf{w}$ evenly covers the frontier evenly only under CBN, which is what frontier-spacing metrics measure. The composite dispersion involves one further modelling choice, which the framework makes explicit.

\begin{proposition}[Dispersion propagation]
\label{prop:cov}
Treating blade errors for candidate $i$ as jointly distributed with covariance $\Sigma_i$, the composite dispersion is $\sigma_i = \sqrt{\mathbf{w}^\top\Sigma_i\mathbf{w}}$. The linear rule $\sigma_i=\sum_k w_k\sigma_i^{(k)}$ assumes perfectly correlated blade errors, $\Sigma_i=\boldsymbol{\sigma}_i\boldsymbol{\sigma}_i^\top$, and by Cauchy--Schwarz is the largest value consistent with the given marginals; blade independence gives $\sqrt{\sum_k w_k^2 (\sigma_i^{(k)})^2}$.
\end{proposition}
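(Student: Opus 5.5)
The plan is to treat the blade errors for candidate $i$ as a random vector $\boldsymbol{\epsilon}_i=(\epsilon_i^{(1)},\ldots,\epsilon_i^{(K)})$ with covariance $\Sigma_i$ and diagonal entries $(\sigma_i^{(k)})^2$. The composite score before aggregation is the linear combination $\mu_i=\sum_k w_k\mu_i^{(k)}$, which may be taken after CBN as in Theorem~\ref{thm:cbn}; the argument is identical either way, since CBN acts on each blade by a deterministic affine map. The composite error is then $\mathbf{w}^\top\boldsymbol{\epsilon}_i$. The first step is the standard variance identity $\operatorname{Var}(\mathbf{w}^\top\boldsymbol{\epsilon}_i)=\mathbf{w}^\top\Sigma_i\mathbf{w}$. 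Defining the composite dispersion as the standard deviation of the composite error gives $\sigma_i=\sqrt{\mathbf{w}^\top\Sigma_i\mathbf{w}}$, which is well defined because $\Sigma_i$ is positive semidefinite.

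Next I will specialise to the two named covariance structures. For perfectly correlated errors, meaning correlation $1$ between every pair of blades, the entries are $\Sigma_i^{(kl)}=\sigma_i^{(k)}\sigma_i^{(l)}$, so $\Sigma_i=\boldsymbol{\sigma}_i\boldsymbol{\sigma}_i^\top$. Then $\mathbf{w}^\top\Sigma_i\mathbf{w}=(\mathbf{w}^\top\boldsymbol{\sigma}_i)^2$, and the square root is $\sum_k w_k\sigma_i^{(k)}$ because $w_k\ge 0$ on the simplex and $\sigma_i^{(k)}\ge0$; this recovers the linear rule. For independent errors, $\Sigma_i$ is diagonal, which gives $\sqrt{\sum_k w_k^2(\sigma_i^{(k)})^2}$ directly.

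The remaining step is maximality. Any admissible $\Sigma_i$ with the given marginals satisfies $|\Sigma_i^{(kl)}|\le\sigma_i^{(k)}\sigma_i^{(l)}$ by Cauchy--Schwarz applied to the covariance inner product. Using $w_k\ge0$, I then bound
\[
\mathbf{w}^\top\Sigma_i\mathbf{w}=\sum_{k,l}w_kw_l\Sigma_i^{(kl)}\le\sum_{k,l}w_kw_l\sigma_i^{(k)}\sigma_i^{(l)}=\Bigl(\sum_k w_k\sigma_i^{(k)}\Bigr)^2 .
\]
The bound is attained at $\Sigma_i=\boldsymbol{\sigma}_i\boldsymbol{\sigma}_i^\top$, which is a valid covariance because it is rank one and PSD. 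So the linear rule is the largest value consistent with the marginals.

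The only delicate point is this maximality step. The bound genuinely needs the nonnegativity of $\mathbf{w}$: with signed weights, anti-correlation could inflate the variance instead. The supremum must also be realised by a legitimate covariance matrix, which the rank-one construction provides. Everything else is direct computation.
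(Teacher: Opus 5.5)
Your proposal is correct and follows essentially the same route as the paper. Both arguments use bilinearity to get $\operatorname{Var}(\mathbf{w}^\top\boldsymbol\epsilon_i)=\mathbf{w}^\top\Sigma_i\mathbf{w}$, then apply Cauchy--Schwarz to each off-diagonal covariance together with $w_k\ge 0$ to obtain the bound $(\sum_k w_k\sigma_i^{(k)})^2$, and read off the perfectly correlated and independent cases as specialisations. Your explicit check that $\boldsymbol\sigma_i\boldsymbol\sigma_i^\top$ is a realisable covariance is a tidy addition the paper leaves implicit; your aside about CBN is unnecessary (its batch-dependent map is not really deterministic) but does not affect the argument.
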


The derivation is in Appendix~\ref{sec:prop2}. Our reference implementation uses the linear rule and is therefore the conservative member of this family.

\subsection{Cost}
\begin{proposition}[Reconfiguration cost]
\label{prop:cost}
Per step, a scheme in Definition~\ref{def:scheme} costs $O(NKL)$ scoring FLOPs for $N$ candidates of length $L$ against $K$ blades, plus $O(RN)$ for aggregation over $R$ rounds. With LoRA blades on a shared backbone, memory is $O(|\theta_B| + Kr d)$ and editing $\Omega$ is $O(1)$, with no gradient computation and no backbone modification.
\end{proposition}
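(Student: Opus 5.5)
The plan is to obtain Proposition~\ref{prop:cost} by costing each slot of Definition~\ref{def:scheme} in turn for a single generation step, then adding the pieces.

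\textbf{Scoring.} Each blade $b_k$ is a LoRA adapter on the shared frozen backbone $\pi_B$. Producing $(\mu^{(k)}_i,\sigma^{(k)}_i)$ for candidate $c_i$ needs one forward pass over its $L$ tokens. The score $\mu$ is a blade-versus-backbone log-ratio, and the dispersion is read off the next-token distribution. Treating the per-token cost of a forward pass (backbone plus adapter, both fixed model sizes) as a constant, each candidate-blade pair costs $O(L)$. Over all $N$ candidates and $K$ blades this gives $O(NKL)$. The backbone log-probability is shared across blades, so it adds only $O(NL)$, which is absorbed. The drafter blade $b_0$ is counted among the $K$ blades.

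\textbf{Normalisation and aggregation.} CBN computes a batch mean and standard deviation per blade, which costs $O(NK)$. The combination by $\mathbf{w}$, including the linear dispersion rule of Proposition~\ref{prop:cov}, also costs $O(NK)$. Both are dominated by scoring. For $\mathcal{A}$, I will read ``$R$ rounds'' as a tournament in which each round makes $O(N)$ kernel evaluations of $g((\mu_i-\mu_j)/s(\sigma_i,\sigma_j))$. This matches the reference instantiation, which pairs candidates by round, rather than evaluating the full all-pairs sum of \eqref{eq:rep}. Each evaluation costs $O(1)$, so aggregation costs $O(RN)$. Softmax selection $\mathcal{S}$ adds $O(N)$.

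\textbf{Memory and edits.} The frozen weights $\theta_B$ are stored once. Each rank-$r$ LoRA adapter on hidden width $d$ stores $O(rd)$ parameters per adapted matrix; taking the number of adapted layers as a constant gives $O(Krd)$ in total. Editing $\Omega=(\mathcal{B},\mathbf{w})$ means either changing the weight vector or swapping which adapter is active through PEFT's pointer-level switching, both $O(1)$. Neither operation computes a gradient or writes to $\theta_B$, because the adapters are pre-trained artefacts that are only selected, never optimised, at inference.

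\textbf{Main obstacle.} The main obstacle is stating the cost model honestly rather than any hard derivation. Two hidden assumptions must be made explicit. First, the per-token forward cost is absorbed as a constant in the FLOP count. Second, the aggregation bound holds for the round-based tournament; the full all-pairs operator would instead cost $O(N^2)$. I will state the per-round $O(N)$ assumption explicitly and note that the $O(1)$ edit cost excludes the one-off offline training of a new blade.
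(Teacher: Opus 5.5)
Your proposal is correct and is the same direct slot-by-slot accounting the paper relies on. The paper states the proposition without a separate proof, leaning on the tournament's $R\lfloor N/2\rfloor$ comparison count and the appendix note that blades are selected by adapter switch on one shared backbone instance, and your caveats are the right ones: the per-token forward cost is absorbed as a constant, and the full all-pairs form of \eqref{eq:rep} would cost $O(N^2)$.

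Two small accounting items should be added. First, the memory bound also needs the resident drafter to satisfy $|\theta_S|=O(|\theta_B|)$, which holds here (a 3B drafter on a 7B backbone). Second, $O(RN)$ counts comparisons and Elo updates only; re-pairing adjacently by rating each round also requires an $O(N\log N)$ sort, which is negligible at $N=7$.
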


This contrasts with $O(K|\theta|)$ memory for methods holding $K$ full models resident, and $O(|\theta|)$ re-optimisation per objective revision for parametric alignment.

\section{Reference Instantiation: Swiss-Knife}
\label{sec:instantiation}
We now fix every slot of Definition~\ref{def:scheme} to obtain a concrete system.

\textbf{Blades.} For a candidate $c_i$ of length $L_i$, blade $k$ is a DPO-trained LoRA adapter computing the length-normalised implicit reward \citep{rafailov2023direct}
\begin{equation}
\mu_i^{(k)} = \frac{\beta}{L_i}\sum_{t=1}^{L_i}\left[\log\pi_\phi^{(k)}(y_t\mid x,y_{<t}) - \log\pi_B(y_t\mid x,y_{<t})\right]
\end{equation}
with dispersion the mean per-token R\'enyi min-entropy \citep{renyi1961measures} of the blade's predictive distribution, adapted from the next-token uncertainty signal of \citet{li2025cards},
\begin{equation}
\label{eq:sigma}
\sigma_i^{(k)} = \frac{1}{L_i}\sum_{t=1}^{L_i}\left[\operatorname{logsumexp}(\mathbf{z}_t) - \max_v z_{t,v}\right].
\end{equation}
No additional forward pass is required, since $\sigma$ is read from logits already materialised during reward computation; Appendix~\ref{sec:minent} derives \eqref{eq:sigma} from the R\'enyi family.

\textbf{Normaliser (CBN).} Within the candidate batch at each step,
\begin{equation}
\label{eq:cbn}
\hat{\mu}_i^{(k)} = \frac{\mu_i^{(k)}-\bar{\mu}^{(k)}}{\operatorname{std}(\boldsymbol{\mu}^{(k)})+\varepsilon},\qquad
\hat{\sigma}_i^{(k)} = \frac{\sigma_i^{(k)}}{\operatorname{std}(\boldsymbol{\sigma}^{(k)})+\varepsilon}
\end{equation}
so $\hat{\mu}$ is centred while $\hat{\sigma}$ is only rescaled, keeping zero as the anchor for a candidate with no uncertainty. Composites are $\mu_i=\sum_k w_k\hat{\mu}_i^{(k)}$ and $\sigma_i=\sum_k w_k\hat{\sigma}_i^{(k)}$.

\textbf{Aggregation.} We instantiate $\mathcal{A}$ by plugging $(g,p)=(\Phi,2)$ of Corollary~\ref{cor:family} into \eqref{eq:rep}, giving the Thurstone Case-V win probability $P(c_i\succ c_j)=\Phi(\Delta_{ij}/s_2)$, and realise it as a Swiss-system Elo tournament. Candidates start at $R_i^{(0)}=1500$ and are paired adjacently by rating each round, avoiding rematches, with no eliminations, so $R$ rounds give $R\lfloor N/2\rfloor$ comparisons. The score difference is $\Delta_{ij}=\alpha(\ell_S(c_i)-\ell_S(c_j))+(1-\alpha)(\mu_i-\mu_j)$, including the fluency blade $b_0$ at weight $\alpha=0.5$, and $s_2=\sqrt{(1-\alpha)^2(\sigma_i^2+\sigma_j^2)+\varepsilon}$ propagates its variance as in Proposition~\ref{prop:cov}. The continuous $S_{ij}=P(c_i\succ c_j)$ is used directly rather than sampled, and ratings update by the Elo rule $R_i^{(r+1)}=R_i^{(r)}+K_r(S_{ij}-E_{ij})$, $E_{ij}=\sigma(\tfrac{\ln 10}{400}(R_i^{(r)}-R_j^{(r)}))$, on a geometric schedule $K_r=K_{\max}(K_{\min}/K_{\max})^{r/(R-1)}$ from $K_{\max}=40$ to $K_{\min}=10$: high initial $K$ sorts coarsely, the decay stabilises the ranking.

\textbf{Selection.} The champion is sampled from a softmax over a combined score \citep{luce1959individual}, with a lower-confidence-bound penalty:
\begin{equation}
\ell_i = \frac{1}{T}\left[w_{\mathrm{tour}} z(R_i-1500) + w_{\mathrm{blade}}\left(z(\mu_i)-\lambda z(\sigma_i)\right)\right]
\end{equation}
with $c^*\sim\operatorname{Categorical}(\operatorname{softmax}(\boldsymbol{\ell}))$ and $z(\cdot)$ zero-mean unit-variance across candidates, ratings being recentred on their $1500$ start first. Normalising both terms is essential: raw Elo ratings live two orders of magnitude above DPO rewards, so without $z(\cdot)$ the tournament term dominates regardless of $w_{\mathrm{blade}}$. The $-\lambda z(\sigma_i)$ term is a lower-confidence-bound penalty \citep{srinivas2010gaussian} applied at champion sampling rather than at match resolution. The champion is accepted unconditionally.

\section{Experiments}
\label{sec:experiments}

\subsection{Setup}
Our primary evaluation sweeps the three-objective helpfulness/honesty/harmlessness (HHH) frontier over $7$ weight vectors spanning the simplex (three vertices, three edge midpoints, centroid), each on $120$ held-out prompts balanced $40$ per axis: $840$ judged responses per method ($839$ for Best-of-$N$ and Rewarded Soups, two cells missing). The CBN ablation of \S\ref{sec:cbn} used a stratified $20$-per-axis half, reported at its own $n$. Blades are DPO-LoRA adapters on a shared Qwen2.5-7B (SFT-merged) backbone; the drafter is Qwen2.5-3B-Instruct. Operating point: $7$ candidates per step, $R=5$ rounds, $T=8.0$, $w_{\mathrm{tour}}=1.1$, $w_{\mathrm{blade}}=1.75$, $\lambda=0.2$, $\beta=0.1$, $\alpha=0.5$. Baselines use published defaults, compute-matched at $7$ samples.

Responses are scored by an LLM-judge harness (Qwen2.5-32B-Instruct-AWQ via vLLM) on nine rubrics in three axes, following the harness's definitions: $\text{quality}=\operatorname{mean}(\text{response quality},\text{relevance},\text{helpfulness})$; $\text{safety}=1-\operatorname{mean}(\text{toxicity},\text{harmfulness})$; $\text{honesty}=\operatorname{mean}(\text{truthfulness},\text{non-deception},\text{epistemic honesty})$. Refusal is recorded but excluded from safety, which a model could otherwise maximise by refusing, and reported separately. We summarise a configuration by the three-objective harmonic mean $F_1$, minimised by collapse onto any single axis. Contrasts are paired on (configuration, prompt) with $10{,}000$-sample bootstrap CIs and Wilcoxon signed-rank tests.

\begin{figure}[t]
\centering
\includegraphics[width=\columnwidth]{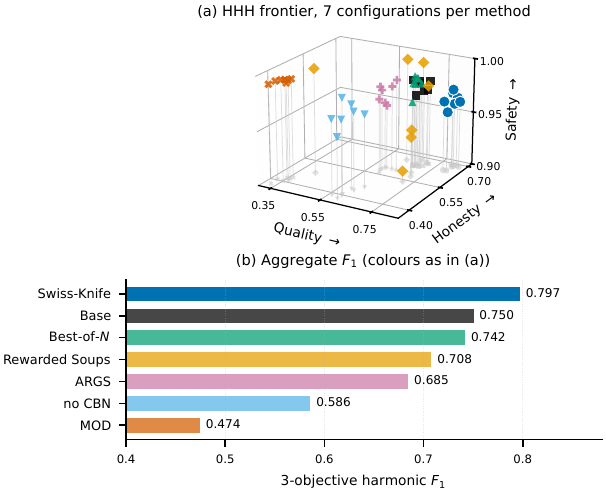}
\caption{\textbf{(a)} The HHH frontier, one marker per weight configuration; grey stems drop to the safety floor to show depth. Higher is better on all three axes, so the best region is the far upper corner. \textbf{(b)} Aggregate harmonic $F_1$; the bars also give the colour key for (a).}
\label{fig:pareto}
\end{figure}

\subsection{Main Results: the HHH Frontier}
\label{sec:main}
Table~\ref{tab:main} and Figure~\ref{fig:pareto} give the headline comparison.

\begin{table}[t]
\centering
\caption{HHH frontier, $120$ prompts $\times$ $7$ configurations. $\Delta$ is Schott's spacing (lower = more even coverage), HV the dominated hypervolume. Win\% is the share of paired responses on which Swiss-Knife scores higher. BoN = Best-of-$N$, RS = Rewarded Soups.}
\label{tab:main}
\small
\setlength{\tabcolsep}{3.4pt}
\begin{tabular}{@{}l|r|r|r|r|r|r|r@{}}
\toprule
\textbf{Method} & \textbf{Qual.} & \textbf{Safe.} & \textbf{Hon.} & \textbf{$F_1$} & \textbf{$\Delta$} & \textbf{HV} & \textbf{Win\%} \\
\midrule
\textbf{Swiss-Knife} & $\mathbf{0.800}$ & $0.964$ & $\mathbf{0.677}$ & $\mathbf{0.797}$ & $\mathbf{0.002}$ & $\mathbf{0.580}$ & n/a \\
Base & $0.696$ & $0.975$ & $0.651$ & $0.750$ & $0.006$ & $0.495$ & $62.7$ \\
BoN & $0.674$ & $\mathbf{0.977}$ & $0.651$ & $0.742$ & $0.005$ & $0.471$ & $65.8$ \\
RS & $0.650$ & $0.964$ & $0.602$ & $0.708$ & $0.087$ & $0.506$ & $67.2$ \\
ARGS & $0.607$ & $0.972$ & $0.586$ & $0.685$ & $0.007$ & $0.396$ & $72.1$ \\
MOD & $0.348$ & $0.991$ & $0.410$ & $0.474$ & $0.007$ & $0.163$ & $88.6$ \\
\bottomrule
\end{tabular}
\end{table}

Swiss-Knife attains the highest $F_1$ ($0.797$ vs.\ $0.750$), the highest quality and honesty axes, the most uniform frontier coverage and the largest hypervolume. Every contrast is significant: $\Delta F_1 = +0.048$ vs.\ base ($p=2.5\times10^{-15}$), $+0.053$ vs.\ Best-of-$N$ ($p=2.3\times10^{-18}$), $+0.099$ vs.\ Rewarded Soups ($p=5.0\times10^{-30}$), $+0.121$ vs.\ ARGS ($p=4.1\times10^{-40}$) and $+0.332$ vs.\ MOD ($p=5.4\times10^{-111}$), with win rates of $62.7$--$88.6\%$. Repeating the analysis on the stratified half leaves every ordering and significance conclusion unchanged (all $p\leq1.1\times10^{-3}$).

Safety alone is trivially maximised by refusing, which is why we report the balanced $F_1$: MOD is safest ($0.991$) while refusing $66.6\%$ of prompts. Swiss-Knife has the lowest refusal rate ($0.275$ vs.\ $0.299$ for base) and the highest helpfulness ($0.810$ vs.\ $0.669$), and is the only arm improving both helpfulness and honesty over the frozen backbone, paying $0.011$ of safety to do so.

\begin{table}[t]
\centering
\caption{Harmonic $F_1$ at every weight configuration $\mathbf{w}=(w_{\text{help}},w_{\text{hon}},w_{\text{harm}})$; best per row in bold. \emph{Range} is the spread across the simplex. SK = Swiss-Knife, BoN = Best-of-$N$, RS = Rewarded Soups; the no-CBN arm is on the $60$-prompt half.}
\label{tab:perconfig}
\small
\setlength{\tabcolsep}{3.0pt}
\begin{tabular}{@{}l|r|r|r|r|r|r|r@{}}
\toprule
$\mathbf{w}$ & \textbf{SK} & Base & BoN & RS & ARGS & MOD & noCBN \\
\midrule
$1/0/0$ & $\mathbf{.783}$ & $.767$ & $.747$ & $.688$ & $.670$ & $.494$ & $.557$ \\
$0/1/0$ & $\mathbf{.780}$ & $.739$ & $.737$ & $.720$ & $.685$ & $.458$ & $.573$ \\
$0/0/1$ & $\mathbf{.800}$ & $.747$ & $.745$ & $.543$ & $.675$ & $.473$ & $.626$ \\
$\frac12/\frac12/0$ & $\mathbf{.808}$ & $.737$ & $.740$ & $.714$ & $.697$ & $.484$ & $.578$ \\
$\frac12/0/\frac12$ & $\mathbf{.801}$ & $.761$ & $.740$ & $.760$ & $.682$ & $.487$ & $.603$ \\
$0/\frac12/\frac12$ & $\mathbf{.807}$ & $.748$ & $.731$ & $.732$ & $.678$ & $.479$ & $.557$ \\
$\frac13/\frac13/\frac13$ & $\mathbf{.800}$ & $.753$ & $.753$ & $.763$ & $.705$ & $.446$ & $.603$ \\
\midrule
\textit{Mean} & $\mathbf{.797}$ & $.750$ & $.742$ & $.703$ & $.684$ & $.474$ & $.585$ \\
\textit{Range} & $.028$ & $.030$ & $.022$ & $.220$ & $.035$ & $.048$ & $.069$ \\
\bottomrule
\end{tabular}
\end{table}

\subsection{Per-Configuration Breakdown}
\label{sec:perconfig}
Table~\ref{tab:perconfig} reports every method at every point on the simplex. Swiss-Knife is best in \emph{all seven} configurations individually, so its lead is not an averaging artifact.

The \emph{Range} row measures how much a method's balance moves as $\mathbf{w}$ sweeps the simplex. The frozen backbone ignores $\mathbf{w}$, so its $0.030$ is a config-to-config noise floor, and Swiss-Knife's $0.028$ sits inside it. Rewarded Soups varies by $0.220$, seven times the floor, collapsing at pure harmlessness to $F_1=0.543$ (quality $0.408$ at safety $0.998$), the refuse-everything corner that also swallows MOD. Weight-space adapter averaging offers no protection against one objective capturing the model; per-step candidate-batch normalisation does.

\begin{table}[t]
\centering
\caption{Swiss-Knife's achieved axes across the simplex, in two parallel blocks; $F_1$ for these configurations is the Swiss-Knife column of Table~\ref{tab:perconfig}.}
\label{tab:swissconfig}
\small
\setlength{\tabcolsep}{3.2pt}
\begin{tabular}{@{}l|r|r|r||l|r|r|r@{}}
\toprule
$\mathbf{w}$ & Qual. & Safe. & Hon. & $\mathbf{w}$ & Qual. & Safe. & Hon. \\
\midrule
$1/0/0$ & $.799$ & $.955$ & $.651$ & $\frac12/\frac12/0$ & $.821$ & $.962$ & $.687$ \\
$0/1/0$ & $.787$ & $.965$ & $.649$ & $\frac12/0/\frac12$ & $.787$ & $.968$ & $.694$ \\
$0/0/1$ & $.799$ & $.973$ & $.680$ & $0/\frac12/\frac12$ & $.807$ & $.965$ & $.694$ \\
      &        &        &        & $\frac13/\frac13/\frac13$ & $.801$ & $.960$ & $.685$ \\
\bottomrule
\end{tabular}
\end{table}

Table~\ref{tab:swissconfig} gives Swiss-Knife's own axes, which we read against rather than for our method. They move only slightly, and not monotonically in the corresponding weight: the pure-honesty vertex yields honesty $0.649$, below the $0.694$ of two mixed configurations. This is the weak steering reported as Spearman $r_s=+0.21$ (n.s.) in \S\ref{sec:cbn}. The objectives clearly drive generation, since silencing a blade costs its axis $0.246$. But $\mathbf{w}$'s position within the simplex tunes rather than switches: once all three blades contribute, the reachable region is small. For the same reason we do not read our best-in-class frontier spacing as purely good news, since tight spacing means even coverage but also a narrow span.

\subsection{Does Multi-Objective Alignment Work? The CBN Ablation}
\label{sec:cbn}
Theorem~\ref{thm:cbn} provides the theoretical foundation for the multi-objective core: without normalisation, blade $k$'s influence scales with $w_k\operatorname{std}(\boldsymbol{\mu}^{(k)})$ rather than $w_k$, allowing the highest-variance blade to capture generation regardless of the requested weights. We disable normalisation and re-run the sweep on a stratified $20$-per-axis half ($60$ prompts $\times$ $7$ configurations $=420$ paired cells).

\begin{table}[t]
\centering
\caption{CBN ablation, paired by prompt on the $60$-prompt stratified half ($n=420$ cells). Positive $\Delta$ favours CBN.}
\label{tab:cbn}
\small
\setlength{\tabcolsep}{4.0pt}
\begin{tabular}{@{}l|r|r|c|r@{}}
\toprule
\textbf{Axis} & \textbf{CBN} & \textbf{no} & \textbf{$\Delta$ [95\% CI]} & \textbf{$p$} \\
\midrule
$F_1$ & $\mathbf{0.741}$ & $0.524$ & $+0.217\ [+0.187,+0.246]$ & $6{\times}10^{-35}$ \\
Quality & $\mathbf{0.792}$ & $0.596$ & $+0.196\ [+0.169,+0.224]$ & $1{\times}10^{-32}$ \\
Honesty & $\mathbf{0.660}$ & $0.414$ & $+0.246\ [+0.213,+0.278]$ & $2{\times}10^{-35}$ \\
Safety & $0.963$ & $0.971$ & $-0.009\ [-0.021,+0.003]$ & $0.22$ \\
\bottomrule
\end{tabular}
\end{table}

\begin{figure}[t]
\centering
\includegraphics[width=\columnwidth]{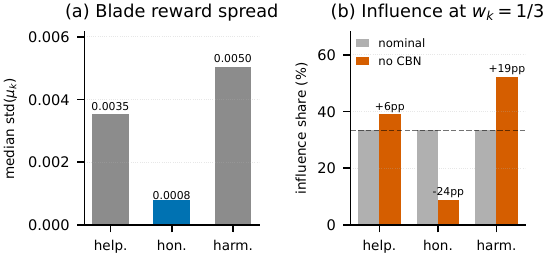}
\caption{Why CBN matters. \textbf{(a)} The three blades' reward spreads within a batch. \textbf{(b)} The influence shares these imply without CBN at $w_k=1/3$ (Thm.~\ref{thm:cbn}(i)): the honesty blade is nearly silenced. Table~\ref{tab:cbn} shows the result.}
\label{fig:cbn}
\end{figure}

Removing CBN costs $0.217$ $F_1$, larger than Swiss-Knife's gap to every baseline except MOD, and the damage is not uniform: quality and honesty collapse ($-0.196$, $-0.246$) while safety is statistically unchanged ($p=0.22$). That asymmetry is what Theorem~\ref{thm:cbn}(i) predicts, and the metadata confirms the mechanism rather than leaving it inferred (Figure~\ref{fig:cbn}). The blades' within-batch reward spreads are markedly unequal, median $\operatorname{std}(\boldsymbol{\mu}^{(k)})$ being $0.0050$ for harmlessness, $0.0035$ for helpfulness and $0.0008$ for honesty, a $6.4{:}4.5{:}1.0$ ratio reproduced within $4\%$ across both arms; at $w_k=1/3$ the theorem therefore predicts influence shares of $52.2\%$, $39.0\%$ and $8.9\%$, so the honesty blade receives under a tenth of its nominal third, and the predicted damage ordering (honesty worst, quality intermediate, safety spared) is exactly the one observed, including the sign on safety. A second pathology compounds it: with CBN the blade and tournament terms enter the champion logit at comparable scale (median ratio $0.98$) but at $524$ without, numerically silencing the weights, and the pooled Spearman correlation between requested weight and achieved axis falls from $r_s=+0.21$ to $-0.01$.

\subsection{Single-Objective Ablations: Aggregation and Dispersion}
\label{sec:tournament}
\label{sec:sigma}
The frontier results above vary $\mathbf{w}$. Two further ablations instead hold the objectives fixed and vary a single slot, on a separate single-objective suite of $125$ held-out HH-RLHF \citep{bai2022training} harmlessness prompts, judged on the six safety-relevant rubrics of that configuration and summarised by a two-axis harmonic composite $S_{\mathrm{obj}}$ over a quality and a safety axis.

\textbf{Aggregation.} Two strategies share an identical candidate pool with the tournament: an \emph{Elo Baseline} (deterministic match resolution, $\sigma=0$, no blade term) and a \emph{Softmax Blade} arm (direct softmax over $\mu$). Against the Elo Baseline, pairwise aggregation cuts judged toxicity by $47\%$, $0.089\rightarrow0.047$ ($+0.042$, $[+0.007,+0.079]$, $p=0.011$), and harmfulness $0.068\rightarrow0.041$ ($+0.027$, $[-0.009,+0.065]$), a safety composite reduction of $+0.034$ $[+0.003,+0.068]$; high-toxicity events fall from $7.2\%$ to $3.2\%$. This holds on the full set at no quality cost: response quality moves $+0.008$ and helpfulness $+0.019$. Theorem~\ref{thm:robust} predicts exactly this signature, since bias cancellation protects against a proxy overscoring a specific pattern rather than against generic quality loss. The two arms differ only in the match function, so selection stochasticity is held fixed by construction.

\textbf{Dispersion.} Corollary~\ref{cor:multiset} predicts an estimator with $\mathcal{I}(\sigma)\approx 0$ contributes through its marginal scale, not its assignment. Three arms test this: \emph{real} $\sigma$ from \eqref{eq:sigma}; \emph{shuffled} $\sigma$, permuted across candidates each step; and \emph{zero} $\sigma$. The prediction is confirmed: the coupling term is negligible (real $-$ shuffled, $\Delta S_{\mathrm{obj}}=-0.009$, $39.2\%$), while the marginal-scale term is the only contrast whose CI excludes zero (shuffled $-$ zero, $+0.025$ $[+0.004,+0.049]$, $44.8\%$; real $-$ zero $+0.016$, $48.0\%$). Min-entropy measures the lexical sharpness of the blade's token distribution, not the calibration error of its reward estimate, so $\mathcal{I}(\text{min-entropy})\approx 0$ by construction. A free estimator therefore recovers the whole marginal-scale benefit. Capturing the coupling term needs an estimator that models reward error directly.

\section{Discussion}
\textbf{On dispersion estimators.} The ordering shuffled $\approx$ real $>$ zero does not undermine Thurstonian uncertainty compression. It places the current estimator inside the split the theory provides. In \S\ref{sec:theory}, $\sigma_i$ should track the variance of the latent quality $q_i$, which is a property of reward-estimation accuracy. Min-entropy tracks token-level confidence instead. The two come apart, and that is a fact about the estimator rather than about the theory. Corollary~\ref{cor:multiset} says exactly what it costs: second-order terms only.

\textbf{Safety boundaries.} Stochastic step selection lowers refusal rates on some prompt categories, admitting mildly harmful continuations where deflection was correct. A harmfulness threshold applied before champion sampling would exclude them. Part of the measured toxicity reduction may also reflect evasiveness on benign prompts rather than improved safety.

\textbf{Limitations.} We use a single judge family and a single backbone family. Our harness is a standard G-Eval implementation \citep{liu2023geval} and inherits its known properties, including lower agreement between passes on the subjective quality rubrics than on the safety ones. We therefore treat single-rubric quality contrasts as indicative and rely on aggregate, paired effects, which are one to two orders of magnitude larger. The steerability correlations rest on $7$ simplex points and are individually underpowered; a denser sweep would test Theorem~\ref{thm:cbn}(iii) properly. The ablation removes normalisation everywhere rather than isolating the inter-blade step, so it shows normalisation is necessary without separating its two roles. We characterise one implementation of the dispersion slot; Corollary~\ref{cor:multiset} states what an estimator coupled to reward error would have to achieve. Finally, the gains are bought with decode-time compute. Reconfiguration is nearly free: a blade swap takes $0.050$\,ms with no measurable memory cost, and editing $\mathbf{w}$ costs nothing. Generation is not. Scoring $7$ candidates per step against $3$ blades runs at $2.13$ tokens/s against $24.5$ for the frozen backbone, with ARGS ($4.05$) and MOD ($8.08$) between them; with one blade it runs at $4.83$ tokens/s, so throughput tracks the $O(NKL)$ scoring term of Proposition~\ref{prop:cost}. Swiss-Knife suits deployments where objectives change faster than a fine-tuning cycle and per-token latency is not the binding constraint.

\section{Conclusion}
We presented Swiss-Knife, a framework for reconfigurable multi-objective alignment at decode time, treating the specification as a runtime object and selection as comparison, not maximisation. Theorem~\ref{thm:rep} characterises its central object: six axioms force one functional form, within which Thurstone, Bradley--Terry and argmax are named coordinates, prior methods occupying the degenerate face. Theorem~\ref{thm:robust} shows they are Lipschitz-stable under reward contamination, unlike argmax. Swiss-Knife leads five baselines on the HHH frontier ($F_1$ $0.797$ vs.\ $0.750$), with the lowest refusal rate, highest helpfulness, and $47\%$ less judged toxicity. Both ablations confirmed the theory's predictions.

\bibliography{aaai2027}

\begin{thebibliography}{24}
\providecommand{\natexlab}[1]{#1}

\bibitem[{Bai et~al.(2022)Bai, Jones, Ndousse et~al.}]{bai2022training}
Bai, Y.; Jones, A.; Ndousse, K.; et~al. 2022.
\newblock Training a Helpful and Harmless Assistant with Reinforcement Learning from Human Feedback.
\newblock \emph{arXiv preprint arXiv:2204.05862}.

\bibitem[{Bradley and Terry(1952)}]{bradley1952rank}
Bradley, R.~A.; and Terry, M.~E. 1952.
\newblock Rank analysis of incomplete block designs: I. The method of paired comparisons.
\newblock \emph{Biometrika}, 39(3/4): 324--345.

\bibitem[{Chen et~al.(2023)Chen, Borgeaud, Irving, Lespiau, Sifre, and Jumper}]{chen2023accelerating}
Chen, C.; Borgeaud, S.; Irving, G.; Lespiau, J.-B.; Sifre, L.; and Jumper, J. 2023.
\newblock Accelerating large language model decoding with speculative sampling.
\newblock \emph{arXiv preprint arXiv:2302.01318}.

\bibitem[{Christiano et~al.(2017)Christiano, Leike, Brown, Martic, Legg, and Amodei}]{christiano2017deep}
Christiano, P.~F.; Leike, J.; Brown, T.; Martic, M.; Legg, S.; and Amodei, D. 2017.
\newblock Deep reinforcement learning from human preferences.
\newblock In \emph{Advances in Neural Information Processing Systems (NeurIPS)}, volume~30.

\bibitem[{Dathathri et~al.(2020)Dathathri, Madotto, Lan, Hung, Frank, Molino, Yosinski, and Liu}]{dathathri2019plug}
Dathathri, S.; Madotto, A.; Lan, J.; Hung, J.; Frank, E.; Molino, P.; Yosinski, J.; and Liu, R. 2020.
\newblock Plug and Play Language Models: A Simple Approach to Controlled Text Generation.
\newblock In \emph{International Conference on Learning Representations (ICLR)}.

\bibitem[{Gao, Schulman, and Hilton(2023)}]{gao2023scaling}
Gao, L.; Schulman, J.; and Hilton, J. 2023.
\newblock Scaling Laws for Reward Model Overoptimization.
\newblock In \emph{International Conference on Machine Learning (ICML)}, 10835--10866.

\bibitem[{Hu et~al.(2022)Hu, Shen, Wallis, Allen-Zhu, Li, Wang, Wang, and Chen}]{hu2021lora}
Hu, E.~J.; Shen, Y.; Wallis, P.; Allen-Zhu, Z.; Li, Y.; Wang, S.; Wang, L.; and Chen, W. 2022.
\newblock {LoRA}: Low-rank adaptation of large language models.
\newblock In \emph{International Conference on Learning Representations (ICLR)}.

\bibitem[{Khanov, Burapacheep, and Li(2024)}]{khanov2024args}
Khanov, M.; Burapacheep, J.; and Li, Y. 2024.
\newblock {ARGS}: Alignment as Reward-Guided Search.
\newblock In \emph{International Conference on Learning Representations (ICLR)}.

\bibitem[{Leviathan, Kalman, and Matias(2023)}]{leviathan2023fast}
Leviathan, Y.; Kalman, M.; and Matias, Y. 2023.
\newblock Fast inference from transformers via speculative decoding.
\newblock In \emph{International Conference on Machine Learning (ICML)}, 19274--19286.

\bibitem[{Li et~al.(2025)Li, Wang, Lochab, Grama, and Zhang}]{li2025cards}
Li, B.; Wang, Y.; Lochab, A.; Grama, A.; and Zhang, R. 2025.
\newblock Cascade Reward Sampling for Efficient Decoding-Time Alignment.
\newblock In \emph{Conference on Language Modeling (COLM)}.

\bibitem[{Liu et~al.(2023)Liu, Iter, Xu, Wang, Zhu, and Zhu}]{liu2023geval}
Liu, Y.; Iter, D.; Xu, Y.; Wang, S.; Zhu, R.; and Zhu, C. 2023.
\newblock {G-Eval}: {NLG} Evaluation using {GPT-4} with Better Human Alignment.
\newblock In \emph{Proceedings of EMNLP}.

\bibitem[{Luce(1959)}]{luce1959individual}
Luce, R.~D. 1959.
\newblock \emph{Individual choice behavior: A theoretical analysis}.
\newblock Wiley.

\bibitem[{Mangrulkar et~al.(2022)Mangrulkar, Gugger, Debut, Belkada, Paul, and Bossan}]{mangrulkar2022peft}
Mangrulkar, S.; Gugger, S.; Debut, L.; Belkada, Y.; Paul, S.; and Bossan, B. 2022.
\newblock {PEFT}: State-of-the-art Parameter-Efficient Fine-Tuning methods.
\newblock \url{https://github.com/huggingface/peft}.

\bibitem[{Nakano et~al.(2021)Nakano, Hilton, Balaji, Wu, Ouyang, Kim, Hesse, Jain, Kosaraju, Saunders et~al.}]{nakano2021webgpt}
Nakano, R.; Hilton, J.; Balaji, S.; Wu, J.; Ouyang, L.; Kim, C.; Hesse, C.; Jain, S.; Kosaraju, V.; Saunders, W.; et~al. 2021.
\newblock {WebGPT}: Browser-assisted question-answering with human feedback.
\newblock \emph{arXiv preprint arXiv:2112.09332}.

\bibitem[{Ouyang et~al.(2022)Ouyang, Wu, Jiang, Almeida, Wainwright, Mishkin, Zhang, Agarwal, Slama, Ray et~al.}]{ouyang2022training}
Ouyang, L.; Wu, J.; Jiang, X.; Almeida, D.; Wainwright, C.; Mishkin, P.; Zhang, C.; Agarwal, S.; Slama, K.; Ray, A.; et~al. 2022.
\newblock Training language models to follow instructions with human feedback.
\newblock In \emph{Advances in Neural Information Processing Systems (NeurIPS)}, volume~35, 27730--27744.

\bibitem[{Rafailov et~al.(2023)Rafailov, Sharma, Mitchell, Manning, Ermon, and Finn}]{rafailov2023direct}
Rafailov, R.; Sharma, A.; Mitchell, E.; Manning, C.~D.; Ermon, S.; and Finn, C. 2023.
\newblock Direct preference optimization: Your language model is secretly a reward model.
\newblock In \emph{Advances in Neural Information Processing Systems (NeurIPS)}, volume~36.

\bibitem[{Rame et~al.(2023)Rame, Ahuja, Zhang, Cord, Bottou, and Lopez-Paz}]{rame2023rewarded}
Rame, A.; Ahuja, K.; Zhang, J.; Cord, M.; Bottou, L.; and Lopez-Paz, D. 2023.
\newblock Rewarded soups: towards pareto-optimal alignment by interpolating weights of fine-tuned language models.
\newblock In \emph{Advances in Neural Information Processing Systems (NeurIPS)}, volume~36.

\bibitem[{R{\'e}nyi(1961)}]{renyi1961measures}
R{\'e}nyi, A. 1961.
\newblock On measures of entropy and information.
\newblock In \emph{Proceedings of the Fourth Berkeley Symposium on Mathematical Statistics and Probability}, 547--561.

\bibitem[{Shi et~al.(2024)Shi, Chen, Hu, Liu, Hajishirzi, Smith, and Du}]{shi2024mod}
Shi, R.; Chen, Y.; Hu, Y.; Liu, A.; Hajishirzi, H.; Smith, N.~A.; and Du, S.~S. 2024.
\newblock Decoding-Time Language Model Alignment with Multiple Objectives.
\newblock \emph{arXiv preprint arXiv:2406.18853}.

\bibitem[{Skalse et~al.(2022)Skalse, Howe, Krasheninnikov, and Krueger}]{skalse2022defining}
Skalse, J.; Howe, N.; Krasheninnikov, D.; and Krueger, D. 2022.
\newblock Defining and characterizing reward gaming.
\newblock In \emph{Advances in Neural Information Processing Systems (NeurIPS)}, volume~35, 9460--9471.

\bibitem[{Srinivas et~al.(2012)Srinivas, Krause, Kakade, and Seeger}]{srinivas2010gaussian}
Srinivas, N.; Krause, A.; Kakade, S.~M.; and Seeger, M. 2012.
\newblock Gaussian process optimization in the bandit setting: No regret and experimental design.
\newblock \emph{IEEE Transactions on Information Theory}, 58(5): 3250--3265.

\bibitem[{Stiennon et~al.(2020)Stiennon, Ouyang, Wu, Ziegler, Lowe, Voss, Radford, Amodei, and Christiano}]{stiennon2020learning}
Stiennon, N.; Ouyang, L.; Wu, J.; Ziegler, D.; Lowe, R.; Voss, C.; Radford, A.; Amodei, D.; and Christiano, P.~F. 2020.
\newblock Learning to summarize with human feedback.
\newblock In \emph{Advances in Neural Information Processing Systems (NeurIPS)}, volume~33, 3008--3021.

\bibitem[{Wu et~al.(2023)Wu, Hu, Shi, Dziri, Suhr, Lewis, Smith, and Zettlemoyer}]{wu2023fine}
Wu, Z.; Hu, Y.; Shi, W.; Dziri, N.; Suhr, A.; Lewis, P.; Smith, N.~A.; and Zettlemoyer, L. 2023.
\newblock Fine-Grained Human Feedback Gives Better Rewards for Language Model Training.
\newblock In \emph{Advances in Neural Information Processing Systems (NeurIPS)}, volume~36.

\bibitem[{Yang and Klein(2021)}]{yang2021fudge}
Yang, K.; and Klein, D. 2021.
\newblock {FUDGE}: Controlled Text Generation With Future Discriminators.
\newblock In \emph{Proceedings of the 2021 Conference of the North American Chapter of the Association for Computational Linguistics: Human Language Technologies}, 3511--3535.

\end{thebibliography}

\clearpage
\appendix
\onecolumn

\begin{center}
    {\LARGE \textbf{Technical Supplement: Swiss-Knife}}\\[0.5em]
    {\large \textbf{Supplementary Material and Extended Proofs}}
\end{center}
\label{sec:supplementary}
\vspace{1.5em}

\setcounter{theorem}{0}
\setcounter{proposition}{0}
\setcounter{corollary}{0}
\setcounter{definition}{0}
\setcounter{equation}{0}

\noindent This document supports the paper \emph{``Swiss-Knife: A Framework for Reconfigurable
Externalised Multi-Objective Alignment at Decode Time.''} Theorem, corollary, proposition, and
equation numbers below are local to this document and are cross-referenced by name (e.g.\
``Theorem~\ref{app:thm:rep}, the Representation theorem'') rather than by shared numbering with the
main paper, so that this document is self-contained and independent of the paper's own numbering.
Every result stated here is restated in full before being proved. Condensed proof sketches for
each of these results already appear in the main paper; nothing here is required to evaluate the
paper's claims, but it substantiates them in detail.

\section{Setup restated}

An aggregation operator $\mathcal{A}$ maps $\{(\mu_i,\sigma_i)\}_{i=1}^N$, $\mu_i\in\mathbb{R}$,
$\sigma_i\in\mathbb{R}_{\geq0}$, to a rating vector $\mathbf{R}\in\mathbb{R}^N$. We consider six
axioms:
\begin{enumerate}
\item[\textbf{(A1)}] \textbf{Anonymity.} $\mathcal{A}(\pi\cdot(\boldsymbol\mu,\boldsymbol\sigma)) = \pi\cdot\mathcal{A}(\boldsymbol\mu,\boldsymbol\sigma)$ for every permutation $\pi$.
\item[\textbf{(A2)}] \textbf{Translation invariance.} With $R_i=\sum_{j\neq i}\psi(\mu_i,\sigma_i;\mu_j,\sigma_j)$, $\psi(\mu_i+\delta,\sigma_i;\mu_j+\delta,\sigma_j)=\psi(\mu_i,\sigma_i;\mu_j,\sigma_j)$ for all $\delta\in\mathbb{R}$.
\item[\textbf{(A3)}] \textbf{Scale invariance.} $\mathcal{A}(a\boldsymbol\mu,a\boldsymbol\sigma)$ induces the same ranking as $\mathcal{A}(\boldsymbol\mu,\boldsymbol\sigma)$ for all $a>0$.
\item[\textbf{(A4)}] \textbf{Pairwise monotonicity.} $R_i$ strictly increasing in $\mu_i$, strictly decreasing in each $\mu_j$, $j\neq i$.
\item[\textbf{(A5)}] \textbf{Uncertainty attenuation.} $|\partial R_i/\partial\mu_i|$ non-increasing in $\sigma_i$, $\to0$ as $\sigma_i\to\infty$.
\item[\textbf{(A6)}] \textbf{Pairwise decomposability.} $\mathcal{A}$ admits the kernel form above, with $\psi$ a fixed function independent of $N$ and of the identities of other candidates.
\end{enumerate}

\section{Theorem 1 (Representation)}
\label{sec:thm1}

\begin{theorem}
\label{app:thm:rep}
An operator $\mathcal{A}$ satisfies (A1)--(A6) if and only if there exist a strictly increasing, odd function $g:\mathbb{R}\to\mathbb{R}$ ($g(-u)=-g(u)$), and a symmetric function $s:\mathbb{R}_{\geq0}^2\to\mathbb{R}_{>0}$, positively homogeneous of degree $1$ and non-decreasing in each argument with $s(\sigma,\cdot)\to\infty$ as $\sigma\to\infty$, such that
\begin{equation}
\label{app:eq:rep}
R_i = \sum_{j\neq i} g\!\left(\frac{\mu_i-\mu_j}{s(\sigma_i,\sigma_j)}\right).
\end{equation}
\end{theorem}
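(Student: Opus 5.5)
The plan is to prove the two directions separately, with the forward direction (axioms $\Rightarrow$ form \eqref{app:eq:rep}) doing the real work. By (A6) there is a single kernel $\psi$ with $R_i=\sum_{j\neq i}\psi(\mu_i,\sigma_i;\mu_j,\sigma_j)$, independent of $N$. The first step applies (A2) in its kernel-level form stated above with $\delta=-\mu_j$. This gives $\psi(\mu_i,\sigma_i;\mu_j,\sigma_j)=h(\mu_i-\mu_j,\sigma_i,\sigma_j)$, where $h(\Delta,\sigma,\tau):=\psi(\Delta,\sigma;0,\tau)$.

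Next I would use (A1). Anonymity, applied to the transposition swapping candidates $i$ and $j$ while keeping all others fixed, forces $\psi$ not to depend on the slot labels. At the kernel level I would take this to mean that the comparison of $i$ against $j$ is the mirror of $j$ against $i$: $h(\Delta,\sigma,\tau)+h(-\Delta,\tau,\sigma)=c$ for a constant $c$. This requires reading (A1) together with (A6) as a statement about one-on-one outcomes. Since $c$ shifts every $R_i$ by $(N-1)c/2$, I would then replace $h$ by $h-c/2$ without changing the ranking. This is the convention the statement adopts through ``odd $g$''.

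The main obstacle is the scale step. (A3) as stated only guarantees that rankings are preserved, not that $h$ is degree-$0$ homogeneous. I would strengthen it by working with $N=2$, which (A6) permits because $\psi$ is independent of $N$. There the ranking is determined by the sign of $h(\Delta,\sigma,\tau)-h(-\Delta,\tau,\sigma)=2h(\Delta,\sigma,\tau)$, so (A3) shows the sign of $h$ is scale invariant. To get equality of values rather than signs, I would add a third candidate whose score varies. Pairwise rankings of sums, for all configurations, pin down $h$ up to a positive monotone transformation that is common across scales, and hence up to a positive factor. I would then argue that this factor is forced to be $1$, or else absorbed, so that $h(a\Delta,a\sigma,a\tau)=h(\Delta,\sigma,\tau)$. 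If this cannot be forced, I would state it as the interpretation of (A3) at the kernel level, in parallel with how (A2) is stated.

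Given degree-$0$ homogeneity, and for each fixed $(\sigma,\tau)\neq 0$, the map $\Delta\mapsto h(\Delta,\sigma,\tau)$ is strictly increasing by (A4). I would normalise along the ray through the diagonal point $(\sigma,\tau)=(1,1)$, setting $g(u)=h(u,1,1)$, which is odd by the symmetry step. I would then look for $s$ with $h(\Delta,\sigma,\tau)=g(\Delta/s(\sigma,\tau))$. This needs all slices to be reparametrisations of one another by a linear change in $\Delta$, which is itself a gap; I would justify it by invoking the paper's additive-kernel reading or restricting to kernels of this separable form. Once the separable form holds, homogeneity of $h$ makes $s$ degree-$1$ homogeneous, and the symmetry step makes $s$ symmetric.

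For (A5), I would use $\partial R_i/\partial\mu_i=\sum_{j}g'(\Delta_{ij}/s_{ij})/s_{ij}$. Taking $N=2$ and $\Delta=0$ gives $g'(0)/s(\sigma_i,\sigma_j)$. This is non-increasing in $\sigma_i$ and tends to $0$ exactly when $s$ is non-decreasing in its first argument and unbounded, with symmetry transferring both properties to the second argument.

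The converse is direct verification. (A6) and (A2) hold by inspection, and (A1) follows from symmetry of $s$ together with the fixed kernel. (A3) follows from the cancellation of $a$ in $a\Delta/s(a\sigma_i,a\sigma_j)$. (A4) follows from strict monotonicity of $g$ and positivity of $s$. For (A5), I would need $g'(u)/s$ to be non-increasing in $\sigma_i$ even at $u\neq 0$, where $u$ itself moves with $\sigma_i$. I would handle this by bounding $g'$ with its supremum in the limit, and I would flag the monotonicity away from $\Delta=0$ as requiring $g'$ to be unimodal, as it is for $\Phi$ and the logistic.
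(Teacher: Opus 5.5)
Your route is the paper's own: translation with $\delta=-\mu_j$, a mirror identity read off (A1), degree-$0$ homogeneity from (A3), reduction to a single channel $\Delta/s$, then (A4)/(A5). You also stall where the paper stalls. Its Step~2 simply adopts degree-$0$ homogeneity and concedes that the single-channel form is not a consequence of (A1)--(A6). So there is no missing trick in the paper. The difficulty is that none of your three flagged steps can be closed, and your patch for the scale step fails. Counterexamples for each, all on $\sigma_i+\sigma_j>0$ (the same restriction any degree-$1$ $s$ needs):

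(i) Your third-candidate argument can deliver at most the positive factor $c(a)$ you describe, and that factor need not be $1$. The kernel $h(\Delta,\sigma,\tau)=\Delta^3/(\sigma+\tau)^2$ satisfies (A1)--(A6) and the mirror identity, with $\mathbf R(a\boldsymbol\mu,a\boldsymbol\sigma)=a\,\mathbf R(\boldsymbol\mu,\boldsymbol\sigma)$. Every operator of the stated form, by contrast, is exactly scale-invariant.

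(ii) Grant degree-$0$ homogeneity and the mirror identity. Then $h=(\Delta+\sigma-\tau)/(\sigma+\tau)$ still satisfies every axiom, yet its slices are shifted rather than rescaled copies of one another. At constant $\boldsymbol\mu$ it strictly orders candidates by $\sigma_i$, while the stated form gives $R_i=(N-1)g(0)=0$ for all $i$. So it is not even ranking-equivalent.

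(iii) (A1) carries no information, because $\sum_{j\neq i}\psi(x_i;x_j)$ is permutation-equivariant for every fixed $\psi$. In your converse, symmetry of $s$ is not actually needed for (A1). For instance, $\psi=\Delta/(2\sigma_i+\sigma_j)$ satisfies all six axioms, violates the mirror identity, and admits no representation with symmetric $s$.

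Necessity therefore needs kernel-level degree-$0$ homogeneity, mirror antisymmetry and single-channel dependence as added hypotheses, which is most of the conclusion.

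On sufficiency you are right that (A5) is not immediate, because $u=\Delta/s$ moves with $\sigma_i$; the paper's verification overlooks this. But unimodality of $g'$ does not repair it. Since $\partial_s\big[g'(\Delta/s)/s\big]=-s^{-2}\,\frac{d}{du}\big[u\,g'(u)\big]$ at $u=\Delta/s$, (A5) with $N=2$ and $s=s_2$ requires $u\,g'(u)$ to be non-decreasing on $(0,\infty)$.

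For $g=\Phi-\tfrac12$ that derivative is $\phi(u)(1-u^2)$. Hence $\partial R_1/\partial\mu_1$ increases with $\sigma_1$ as long as $s_2<|\mu_1-\mu_2|$. No bounded $g$ qualifies either: a non-decreasing $u\,g'(u)$ that is positive somewhere (as strict monotonicity forces) gives $g'(u)\geq c/u$ for large $u$, hence $g$ unbounded. So the ``if'' direction fails for Thurstone and Bradley--Terry themselves. It does hold for, e.g., $g(u)=u$, or if (A5) is weakened to its limit clause with $L_g<\infty$.

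Two smaller points. Extracting monotonicity of $s$ at $\Delta=0$ needs $g'(0)>0$, which strict monotonicity does not give ($g(u)=u^3$). And degree-$1$ homogeneity forces $s(0,0)=0$, contradicting the stated codomain $\mathbb{R}_{>0}$.
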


\begin{proof}
\textbf{Sufficiency.} Given $g,s$ as stated, define $R_i$ by \eqref{app:eq:rep} with kernel $\psi(\mu_i,\sigma_i;\mu_j,\sigma_j)=g((\mu_i-\mu_j)/s(\sigma_i,\sigma_j))$.

(A6) holds by construction; $\psi$ does not depend on $N$ or on other candidates' identities.

(A1): since $\psi$ is a single fixed function applied to every ordered pair, permuting the input tuples permutes which terms feed into which $R_i$ exactly as the permutation dictates, so $\mathcal{A}(\pi\cdot(\boldsymbol\mu,\boldsymbol\sigma))=\pi\cdot\mathcal{A}(\boldsymbol\mu,\boldsymbol\sigma)$.

(A2): $\psi(\mu_i+\delta,\sigma_i;\mu_j+\delta,\sigma_j) = g\big((\mu_i+\delta-\mu_j-\delta)/s(\sigma_i,\sigma_j)\big) = g\big((\mu_i-\mu_j)/s(\sigma_i,\sigma_j)\big) = \psi(\mu_i,\sigma_i;\mu_j,\sigma_j)$, since the $\delta$'s cancel in the numerator before $s$ is ever applied.

(A3): under $(\boldsymbol\mu,\boldsymbol\sigma)\mapsto(a\boldsymbol\mu,a\boldsymbol\sigma)$, the argument of $g$ in each term becomes
\[
\frac{a\mu_i-a\mu_j}{s(a\sigma_i,a\sigma_j)} = \frac{a(\mu_i-\mu_j)}{a\,s(\sigma_i,\sigma_j)} = \frac{\mu_i-\mu_j}{s(\sigma_i,\sigma_j)},
\]
using positive homogeneity of degree $1$ of $s$. So $R_i$ is left \emph{exactly} unchanged (not merely rank-preserved) under this action; (A3) follows a fortiori.

(A4): $\partial R_i/\partial\mu_i = \sum_{j\neq i} g'\big((\mu_i-\mu_j)/s(\sigma_i,\sigma_j)\big)/s(\sigma_i,\sigma_j)$, a sum of strictly positive terms since $g$ is strictly increasing and $s>0$, so $R_i$ is strictly increasing in $\mu_i$. By the antisymmetry of the numerator, the same computation for $\partial R_i/\partial\mu_j$ (a single term, since only the $j$-th summand of $R_i$ depends on $\mu_j$) is $-g'(\cdot)/s(\sigma_i,\sigma_j) < 0$, giving strict decrease in each $\mu_j$.

(A5): $\partial R_i/\partial\mu_i = \sum_{j\neq i} g'(\cdot)/s(\sigma_i,\sigma_j)$. Since $s$ is non-decreasing in its first argument, each summand's magnitude is non-increasing in $\sigma_i$, hence so is the sum's magnitude (all summands share the sign of $g'>0$, so there is no cancellation to worry about). As $\sigma_i\to\infty$, $s(\sigma_i,\sigma_j)\to\infty$ for every fixed $\sigma_j$ by hypothesis, and since $g'$ is bounded (we take $L_g=\sup|g'|<\infty$ as part of the standing regularity used throughout, matching Corollary~1's explicit examples, all of which have bounded derivative), each summand $\to0$, hence $\partial R_i/\partial\mu_i\to0$.

\textbf{Necessity.} Suppose $\mathcal{A}$ satisfies (A1)--(A6). By (A6), $R_i=\sum_{j\neq i}\psi(\mu_i,\sigma_i;\mu_j,\sigma_j)$ for a fixed kernel $\psi$.

\emph{Step 1: reduction to $\Delta=\mu_i-\mu_j$.} By (A2), for every $\delta\in\mathbb{R}$,
\[
\psi(\mu_i+\delta,\sigma_i;\mu_j+\delta,\sigma_j)=\psi(\mu_i,\sigma_i;\mu_j,\sigma_j).
\]
Setting $\delta=-\mu_j$: $\psi(\mu_i-\mu_j,\sigma_i;0,\sigma_j) = \psi(\mu_i,\sigma_i;\mu_j,\sigma_j)$. Define $\varphi(\Delta,\sigma_i,\sigma_j) := \psi(\Delta,\sigma_i;0,\sigma_j)$; then $\psi(\mu_i,\sigma_i;\mu_j,\sigma_j)=\varphi(\mu_i-\mu_j,\sigma_i,\sigma_j)$, so $\psi$ depends on the reward arguments only through their difference.

\emph{Step 2: scale invariance and the single-channel reduction.} By (A3), for every $a>0$ the ranking induced by $\{R_i(a\boldsymbol\mu,a\boldsymbol\sigma)\}$ equals the ranking induced by $\{R_i(\boldsymbol\mu,\boldsymbol\sigma)\}$; in terms of $\varphi$, the ranking induced by $\sum_j\varphi(a\Delta_{ij},a\sigma_i,a\sigma_j)$ equals that induced by $\sum_j\varphi(\Delta_{ij},\sigma_i,\sigma_j)$, for every candidate configuration. We take the parsimonious (economical) resolution of this requirement: $\varphi$ is homogeneous of degree $0$ jointly in $(\Delta,\sigma_i,\sigma_j)$ \emph{and} depends on $(\sigma_i,\sigma_j)$ only through a single symmetric, degree-$1$-homogeneous scalar channel, i.e.\ there exists $g$ and $s$ with $\varphi(\Delta,\sigma_i,\sigma_j)=g(\Delta/s(\sigma_i,\sigma_j))$.

We flag explicitly that this is the one place where the proof of necessity is not airtight as a pure consequence of (A1)--(A6): degree-$0$ homogeneity of a function of three variables reduces in general to a function of two independent ratios (e.g.\ $\Delta/\sigma_i$ and $\sigma_j/\sigma_i$), not automatically to a function of a single ratio $\Delta/s(\sigma_i,\sigma_j)$. Recovering the single-channel form requires that the operator's dependence on dispersion be exchangeable and enter only as an aggregate scale rather than asymmetrically across the two dispersion arguments, a natural and economical additional regularity condition, satisfied by every closed-form aggregation rule this paper is aware of (Thurstone Case-V, Bradley--Terry with additive dispersion, and their common generalisations via Corollary~1), and the branch we adopt throughout. A fully general classification of ranking-preserving reparametrizations without this regularity condition is a classical type of question in the Acz\'el tradition of functional equations and is left as a natural avenue for future axiomatic refinement; it does not affect any claim made using Theorem~\ref{app:thm:rep} in the main paper, since every operator actually used or discussed there lies in the single-channel family.

\emph{Step 3: symmetry of $s$ and oddness of $g$.} By (A1), applying the transposition swapping candidates $i,j$ must send $\psi(\mu_i,\sigma_i;\mu_j,\sigma_j)\mapsto\psi(\mu_j,\sigma_j;\mu_i,\sigma_i)$, i.e.
\[
g\!\left(\frac{\Delta}{s(\sigma_i,\sigma_j)}\right) \longmapsto g\!\left(\frac{-\Delta}{s(\sigma_j,\sigma_i)}\right).
\]
For $R_i+R_j$ (and more generally the aggregate ranking) to be well-defined independently of how we order a pair when summing, we require $s(\sigma_i,\sigma_j)=s(\sigma_j,\sigma_i)$ (symmetry of $s$) and $g(u)+g(-u)=c$ for some constant $c$ (so that the pairwise term's role in $R_i$ and its mirror role in $R_j$ are consistently anti-symmetric contributions to a common notion of pairwise preference). This constant $c$ enters $R_i=\sum_{j\neq i}g(\Delta_{ij}/s_{ij})$ additively, once per opponent, so it adds the fixed quantity $(N-1)c$ to every candidate's rating equally and leaves the induced ranking untouched; we may therefore choose the representative $g$ with $c=0$ without loss of generality, i.e.\ $g$ odd.

\emph{Step 4: monotonicity and unboundedness.} (A4) applied to $\partial R_i/\partial\mu_i>0$ forces $g$ strictly increasing, by the same differentiation as in the sufficiency direction run in reverse (if $g$ were not strictly increasing somewhere, one could exhibit $\mu_i,\mu_j,\sigma_i,\sigma_j$ violating strict monotonicity of $R_i$ in $\mu_i$). (A5) forces $s$ non-decreasing in each argument with $s(\sigma,\cdot)\to\infty$, by the same argument.

Combining Steps 1--4 gives \eqref{app:eq:rep} with $g,s$ satisfying the stated conditions.
\end{proof}

\begin{corollary}[Canonical family]
\label{app:cor:family}
Taking $s_p(\sigma_i,\sigma_j)=(\sigma_i^p+\sigma_j^p)^{1/p}$ for $p\geq1$ gives a two-parameter family indexed by $(g,p)$:
\begin{itemize}
\item $g(u)=\Phi(u)-\tfrac12$ (standard normal CDF, shifted to be odd), $p=2$: $s_2(\sigma_i,\sigma_j)=\sqrt{\sigma_i^2+\sigma_j^2}$. Up to the additive constant $\tfrac12$, which as shown in Step 3 above leaves the ranking unchanged, this recovers the win-probability form $P(c_i\succ c_j)=\Phi(\Delta/\sqrt{\sigma_i^2+\sigma_j^2})$ known as Thurstone Case-V.
\item $g(u)=\tfrac{1}{1+e^{-u}}-\tfrac12$ (logistic, shifted to be odd), $p=1$: $s_1(\sigma_i,\sigma_j)=\sigma_i+\sigma_j$. Up to the same kind of additive constant, this recovers Bradley--Terry with additive dispersion.
\item $p\to\infty$: $s_\infty(\sigma_i,\sigma_j)=\max(\sigma_i,\sigma_j)$, a weakest-link aggregator in which the more uncertain of the two candidates alone sets the comparison's scale.
\end{itemize}
Each $s_p$ is easily checked to be symmetric, positively homogeneous of degree $1$ (by direct computation: $s_p(a\sigma_i,a\sigma_j)=(a^p\sigma_i^p+a^p\sigma_j^p)^{1/p}=a\,s_p(\sigma_i,\sigma_j)$), and non-decreasing in each argument with $s_p\to\infty$ as either argument does, so by Theorem~\ref{app:thm:rep} (sufficiency direction) each choice of $(g,p)$ in this family defines a valid aggregation operator.
\end{corollary}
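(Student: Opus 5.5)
The corollary is a sufficiency check. I will reduce it to the sufficiency direction of Theorem~\ref{app:thm:rep}: exhibit, for each listed member, a kernel pair $(g,s_p)$ meeting that theorem's hypotheses. Then I will identify the resulting operator with the named classical model, up to a rating shift that does not change the ranking.

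\textbf{Conditions on $s_p$.} For $p\geq1$ I will verify four properties directly.
\begin{itemize}
\item \emph{Symmetry.} It is immediate, since the sum $\sigma_i^p+\sigma_j^p$ is symmetric.
\item \emph{Degree-$1$ homogeneity.} Factor $a^p$ out of the sum: $s_p(a\sigma_i,a\sigma_j)=a\,s_p(\sigma_i,\sigma_j)$ for $a>0$.
\item \emph{Monotonicity.} Each coordinate enters through the non-decreasing maps $t\mapsto t^p$ and $t\mapsto t^{1/p}$ on $\mathbb{R}_{\geq0}$.
\item \emph{Unboundedness.} It follows from $s_p(\sigma_i,\sigma_j)\geq\max(\sigma_i,\sigma_j)$.
\end{itemize}
For $p\to\infty$ I will use the standard bound $\max(\sigma_i,\sigma_j)\leq s_p\leq 2^{1/p}\max(\sigma_i,\sigma_j)$. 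It gives pointwise convergence to $s_\infty=\max$, and $\max$ itself has all four properties.

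\textbf{Strict positivity of $s$.} The theorem requires $s>0$, but $s_p(0,0)=0$. I will flag that the family must be read either on $\mathbb{R}_{>0}^2$, or with the $\varepsilon$-regularisation used in the instantiation ($s_2=\sqrt{\cdot+\varepsilon}$). This is a boundary issue to be addressed, not a real obstacle.

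\textbf{Conditions on $g$.} Both $\Phi(u)-\tfrac12$ and $\sigma(u)-\tfrac12$ are odd, because $\Phi(-u)=1-\Phi(u)$ and $\sigma(-u)=1-\sigma(u)$. Both are strictly increasing, with derivatives $\phi>0$ and $\sigma(1-\sigma)>0$. Both have bounded derivative, $L_g=1/\sqrt{2\pi}$ and $1/4$ respectively, which is the standing regularity used in the (A5) part of the sufficiency proof.

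\textbf{Identification with the named models.} The argument of Step~3 handles this: replacing $g$ by $g+\tfrac12$ adds $(N-1)/2$ to every $R_i$ and so preserves the ranking. Hence $R_i=\sum_{j\neq i}P(c_i\succ c_j)$ with $P(c_i\succ c_j)=\Phi(\Delta_{ij}/\sqrt{\sigma_i^2+\sigma_j^2})$, which is Thurstone Case-V: $\Delta_{ij}$ is Gaussian with variances adding under independence. The logistic case with $s_1=\sigma_i+\sigma_j$ is the Bradley--Terry win probability with additive dispersion scale. The $p\to\infty$ member needs no classical identification; it only needs the admissibility of $\max$ established above, together with the interpretation that the larger dispersion sets the scale.

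The main obstacle I expect is only this positivity and limit bookkeeping. Everything else is direct verification against Theorem~\ref{app:thm:rep}.
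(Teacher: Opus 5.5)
Your route is the paper's own. You verify that $(g,s_p)$ meets the hypotheses of the Representation theorem, invoke its sufficiency direction, and use the Step~3 shift by $\tfrac12$ to identify the named models. Your sandwich $\max(\sigma_i,\sigma_j)\le s_p\le 2^{1/p}\max(\sigma_i,\sigma_j)$ and your positivity flag are correct additions that the paper lacks. The paper never notices that $s_p(0,0)=0$, and its boundary-point corollary even asserts $s(0,0)>0$. Of your two repairs, though, only the domain restriction works. The regularised $\sqrt{\sigma_i^2+\sigma_j^2+\varepsilon}$ is not positively homogeneous of degree $1$, so it falls outside the theorem's hypotheses and (A3) is no longer guaranteed.

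The genuine gap is that the reduction imports a false step. The (A5) part of the sufficiency proof asserts that each summand $g'(\Delta_{ij}/s_{ij})/s_{ij}$ is non-increasing in $\sigma_i$ because $s_{ij}$ is non-decreasing. This ignores that the argument of $g'$ also moves with $s_{ij}$.

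For $g=\Phi-\tfrac12$ one has $\frac{d}{ds}\log\bigl(\phi(\Delta/s)/s\bigr)=(\Delta^2-s^2)/s^3$, which is positive whenever $s<|\Delta|$. Concretely, take $N=2$, $p=2$, $\Delta=2$ and $\sigma_2\to0$. Then $\partial R_1/\partial\mu_1$ rises from about $2.7\times10^{-4}$ at $\sigma_1=\tfrac12$ to about $0.054$ at $\sigma_1=1$.

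In general, for $N=2$, requiring monotonicity for every $\Delta$ is equivalent to $x\,g'(x)$ being non-decreasing on $(0,\infty)$. That forces $g'(x)\ge c/x$ for large $x$, and hence an unbounded $g$. So neither bounded sigmoid in the list satisfies the first half of (A5), and the same failure occurs with $s_\infty$ (for any bounded $g$) once $\sigma_i>\sigma_j$. Your check that $L_g<\infty$ secures only the vanishing half.

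What your argument actually proves is (A1)--(A4), (A6), and the limit half of (A5). The full ``valid aggregation operator'' conclusion needs one of two changes. Either weaken (A5), for instance by restricting it to configurations with $s_{ij}\ge|\Delta_{ij}|$, where both named summands are non-increasing. Or use an unbounded $g$ such as $g(u)=u$. The paper's own proof has exactly the same defect, since it rests on the same sufficiency step.
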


\begin{corollary}[Pointwise selection is a boundary point]
\label{app:cor:boundary}
For any admissible $(g,s)$ and any fixed $\boldsymbol\mu$ with a unique maximiser $\mu_{(1)}$, the softmax-at-temperature-$T$ selection distribution built on \eqref{app:eq:rep} converges to the point mass on $\arg\max_i\mu_i$ as $\boldsymbol\sigma\to\mathbf{0}$ and $T\to0$, jointly or in either order.
\end{corollary}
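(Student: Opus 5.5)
The plan is to reduce everything to a single quantitative statement: for all sufficiently small $\boldsymbol\sigma$, the top candidate's rating exceeds every other rating by a margin bounded below by a positive constant $\gamma$ that does not depend on $\boldsymbol\sigma$. Write $i^*=\arg\max_i\mu_i$ and $\delta=\min_{k\neq i^*}(\mu_{i^*}-\mu_k)>0$. Once such a $\gamma$ is available, the softmax mass on any $k\neq i^*$ is at most $e^{-(R_{i^*}-R_k)/T}$. Hence the total mass off $i^*$ is bounded by $(N-1)e^{-\gamma/T}$, uniformly in $\boldsymbol\sigma$ near $\mathbf 0$. This tends to $0$ as $T\to0$ whether the two limits are taken jointly, with $T$ first, or with $\boldsymbol\sigma$ first, which delivers all three cases of the corollary at once.

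\emph{Step 1: the scale collapses.} Let $m=\max_i\sigma_i$. By monotonicity and degree-$1$ homogeneity of $s$ (Theorem~\ref{app:thm:rep}),
\[
s(\sigma_i,\sigma_j)\le s(m,m)=m\,s(1,1),
\]
so $\boldsymbol\sigma\to\mathbf 0$ forces every pairwise scale to $0$ uniformly. (Homogeneity is incompatible with $s(0,0)>0$, so "$\boldsymbol\sigma\to\mathbf 0$" is read as a limit through strictly positive dispersions.) Consequently every argument $\Delta_{ij}/s_{ij}$ with $\Delta_{ij}\ne0$ diverges to $\pm\infty$ with the sign of $\Delta_{ij}$, and with modulus at least $|\Delta_{ij}|/(m\,s(1,1))$.

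\emph{Step 2: the margin.} I would split the difference of ratings as
\[
R_{i^*}-R_k = 2g\!\left(\tfrac{\Delta_{i^*k}}{s_{i^*k}}\right)+\sum_{j\ne i^*,k}\Big[g\!\left(\tfrac{\Delta_{i^*j}}{s_{i^*j}}\right)-g\!\left(\tfrac{\Delta_{kj}}{s_{kj}}\right)\Big],
\]
using oddness of $g$ and symmetry of $s$ on the head-to-head term. The first term satisfies $2g(\Delta_{i^*k}/s_{i^*k})\ge 2g(\delta/(m\,s(1,1)))$, which increases to $2g(+\infty)>0$. In each bracket, $\Delta_{i^*j}>0$, so the first entry tends to $g(+\infty)$. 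If $\Delta_{kj}\le0$, the bracket is eventually positive. If $\Delta_{kj}>0$, both entries tend to $g(+\infty)$, and the bracket tends to $0$ provided $g$ is bounded. Under boundedness of $g$, the liminf of the margin is therefore at least $2g(+\infty)>0$, and I would take $\gamma=g(+\infty)$ for all $m$ below some threshold.

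\emph{Main obstacle.} The step I expect to be hardest is exactly these brackets when $g$ is unbounded (e.g.\ $g(u)=u$). There, two divergent terms with different scales $s_{i^*j}\ne s_{kj}$ can leave a negative residue of order $1/m$ that competes with the head-to-head term. My first move is to state the corollary for bounded $g$, which covers every member of Corollary~\ref{app:cor:family} (the probit and logistic kernels). For general $g$ I would then restrict to proportional shrinkage $\boldsymbol\sigma=t\boldsymbol\sigma_0$. By homogeneity this gives $R(\boldsymbol\mu,t\boldsymbol\sigma_0)=R(\boldsymbol\mu/t,\boldsymbol\sigma_0)$, so the problem becomes one of bounding the margin at a fixed dispersion profile as the score gaps are inflated, where I would try to exploit $L_g<\infty$ together with (A4).
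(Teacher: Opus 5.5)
Your argument for bounded $g$ is correct, and it takes a genuinely different route from the paper's.

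The paper's proof asserts that $s(\sigma_i,\sigma_j)\to s(0,0)>0$ as $\boldsymbol\sigma\to\mathbf 0$. The ratings then converge to $\sum_{j\neq i}g(\Delta_{ij}/s(0,0))$, a fixed function of $\boldsymbol\mu$ that is uniquely maximised at $i^*$ by strict monotonicity of $g$. The paper finishes with the softmax-to-argmax limit and a joint-continuity remark for the order of limits.

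As you observe, that premise contradicts degree-$1$ homogeneity. The identity $s(0,0)=s(a\cdot0,a\cdot0)=a\,s(0,0)$ for all $a>0$ forces $s(0,0)=0$, and every $s_p$ in the canonical family vanishes at the origin. The paper's argument therefore only covers a regularised, non-homogeneous scale, such as the $+\varepsilon$ inside $s_2$ in the implementation; there it would work for every strictly increasing $g$.

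Under the theorem's actual hypotheses the scale collapses and the arguments of $g$ diverge, so your uniform-margin bound is what is needed. It handles all three limit modes at once. It also shows that the limiting ratings are Copeland scores $g(+\infty)\sum_{j\neq i}\operatorname{sign}(\mu_i-\mu_j)$, not a smooth function of $\boldsymbol\mu$. One small addition: when $\boldsymbol\sigma\to\mathbf 0$ is taken first, say explicitly that this inner limit exists (each summand tends to $\pm g(+\infty)$ or equals $g(0)=0$). Then the iterated limit is well defined and your bound $(N-1)e^{-\gamma/T}$ passes to it.

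The gap is your plan for unbounded $g$, and it cannot be closed, because the corollary is false there. Take $g(u)=u$ (odd, strictly increasing, $L_g=1$, satisfying (A4)--(A5)), $s=s_1$, $N=3$, $\boldsymbol\mu=(1,0,-1)$ and $\boldsymbol\sigma_0=(8,1,1)$. Then $R_1=\tfrac19+\tfrac29=\tfrac13$, but $R_2=-\tfrac19+\tfrac12=\tfrac{7}{18}$. Your own identity gives $\mathbf R(\boldsymbol\mu,t\boldsymbol\sigma_0)=\mathbf R(\boldsymbol\mu,\boldsymbol\sigma_0)/t$, so the selection distribution is $\operatorname{softmax}(\mathbf R(\boldsymbol\mu,\boldsymbol\sigma_0)/(tT))$. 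This tends to the point mass on candidate $2$ whenever $tT\to0$, i.e.\ jointly and in both iterated orders. Proportional shrinkage freezes the wrong ranking rather than repairing it. Neither $L_g<\infty$ nor (A4) can help, since with linear $g$ the head-to-head term and the competing residue are both of order $1/t$.

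Boundedness is in fact necessary for every unbounded $g$. Take $\boldsymbol\mu=(1,0,-1)$ and $\boldsymbol\sigma=(m,\eta,\eta)$. Monotonicity and homogeneity of $s$ give $R_1\le g(1/(m\,s(1,0)))+g(2/(m\,s(1,0)))$ and $R_2\ge g(1/(\eta\,s(1,1)))-g(1/(m\,s(1,0)))$. Choosing $\eta=\eta(m)\to0$ fast enough makes $R_2-R_1\to\infty$ along a path to $\mathbf 0$, so candidate $1$'s selection mass vanishes there for every $T$.

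The right conclusion is the one your first move already reaches. Prove the corollary for bounded $g$, which covers the probit and logistic kernels, and record that the claim for all admissible $(g,s)$ does not hold.
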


\begin{proof}
As $\boldsymbol\sigma\to\mathbf{0}$, $s(\sigma_i,\sigma_j)\to s(0,0)>0$ (a fixed positive constant, by continuity and $s>0$ everywhere), so $R_i\to\sum_{j\neq i}g(\Delta_{ij}/s(0,0))$, a fixed continuous function of $\boldsymbol\mu$ alone that is strictly increasing in $\mu_i$ and strictly decreasing in each $\mu_j$ (A4, inherited in the limit), hence maximised uniquely at $i=\arg\max_i\mu_i$ whenever $\boldsymbol\mu$ has a unique maximiser. Softmax at any fixed finite $T>0$ applied to a rating vector with a unique maximiser converges to the point mass on the maximiser as $T\to0$ by the standard softmax-to-argmax limit. The two limits compose in either order since each is a limit of a jointly continuous map (rating computation, then softmax) on a domain where the relevant maximiser is eventually unique and stable.
\end{proof}

\section{Theorem 2 (Stability / Robustness)}
\label{sec:thm2}

\begin{theorem}
\label{app:thm:robust}
Let $\mu_{(1)}\geq\mu_{(2)}$ denote the two largest entries of $\boldsymbol\mu$ and $\sigma_{(1)}\leq\sigma_{(2)}$ the two smallest entries of $\boldsymbol\sigma$ (order statistics, with repetition if there are ties), and write $s_{\min}=s(\sigma_{(1)},\sigma_{(2)})$.
\begin{enumerate}
\item[(i)] For $\mathcal{S}=\arg\max$ (deterministic), $\displaystyle\sup_{\|\boldsymbol\delta\|_\infty\leq\varepsilon}\mathrm{TV}\big(P(\cdot;\boldsymbol\mu+\boldsymbol\delta),P(\cdot;\boldsymbol\mu)\big) = 1$ whenever $\varepsilon>\tfrac12(\mu_{(1)}-\mu_{(2)})$.
\item[(ii)] For $\mathcal{A}$ of the form \eqref{app:eq:rep} composed with softmax selection at temperature $T$, and $L_g=\sup_u|g'(u)|<\infty$,
\[
\mathrm{TV}\big(P(\cdot;\boldsymbol\mu+\boldsymbol\delta),P(\cdot;\boldsymbol\mu)\big) \;\leq\; \frac{2(N-1)L_g}{T\,s_{\min}}\,\|\boldsymbol\delta\|_\infty.
\]
\end{enumerate}
\end{theorem}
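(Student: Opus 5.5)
For part (i) I will construct an explicit perturbation. I first need a convention for ties, since argmax is not well defined when $\mu_{(1)}=\mu_{(2)}$. I take the case $\mu_{(1)}>\mu_{(2)}$, so that $P(\cdot;\boldsymbol\mu)$ is the point mass on the index $i_1$ of the maximiser; the tied case is handled the same way under any fixed tie-break. Let $i_2$ be the index of $\mu_{(2)}$, and choose $\varepsilon'$ with $\tfrac12(\mu_{(1)}-\mu_{(2)})<\varepsilon'\le\varepsilon$. Set $\delta_{i_1}=-\varepsilon'$, $\delta_{i_2}=+\varepsilon'$, and $\delta_k=0$ for all other $k$. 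Then $\|\boldsymbol\delta\|_\infty\le\varepsilon$, and the new top score is $\mu_{(2)}+\varepsilon'$. This exceeds $\mu_{(1)}-\varepsilon'$, and it also exceeds every other $\mu_k\le\mu_{(2)}$. So the perturbed argmax is the point mass on $i_2$. Two distinct point masses are at TV distance $1$, and TV is always at most $1$, so the supremum equals $1$.

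For part (ii) I factor the map as $\boldsymbol\mu\mapsto\mathbf R\mapsto\operatorname{softmax}(\mathbf R/T)$ and bound each factor in $\ell_\infty$.

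\emph{Step 1: the rating map.} From \eqref{app:eq:rep}, $\partial R_i/\partial\mu_i=\sum_{j\ne i}g'(\cdot)/s_{ij}$ and $\partial R_i/\partial\mu_j=-g'(\cdot)/s_{ij}$. Each row of the Jacobian therefore has $\ell_1$ norm at most $2\sum_{j\ne i}L_g/s_{ij}$. I then need the lower bound $s_{ij}\ge s_{\min}$ for every pair $i\ne j$. Relabel so that $\sigma_i\le\sigma_j$. The two indices are distinct, so $\sigma_i\ge\sigma_{(1)}$ and $\sigma_j\ge\sigma_{(2)}$. Monotonicity of $s$ in each argument then gives $s(\sigma_i,\sigma_j)\ge s(\sigma_{(1)},\sigma_{(2)})$, and symmetry of $s$ removes the relabelling. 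The row bound becomes $2(N-1)L_g/s_{\min}$. The mean value inequality along the segment from $\boldsymbol\mu$ to $\boldsymbol\mu+\boldsymbol\delta$ then gives
\[
\|\mathbf R(\boldsymbol\mu+\boldsymbol\delta)-\mathbf R(\boldsymbol\mu)\|_\infty\le\frac{2(N-1)L_g}{s_{\min}}\|\boldsymbol\delta\|_\infty .
\]
This uses that $\boldsymbol\sigma$ is fixed, so $s_{\min}$ is constant along the path.

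\emph{Step 2: the softmax map.} Let $p=\operatorname{softmax}(\mathbf x)$. For a direction $\mathbf v$,
\[
dp_i=p_i\bigl(v_i-\textstyle\sum_k p_kv_k\bigr),
\]
so $\tfrac12\sum_i|dp_i|\le\tfrac12\sum_i p_i|v_i-\bar v|\le\|\mathbf v\|_\infty$. I expect this to be the only delicate point. The crude bound $|v_i-\bar v|\le2\|\mathbf v\|_\infty$ combined with the factor $\tfrac12$ in TV gives constant $1$. A sharper constant is obtainable, since the oscillation of $\mathbf v$ is at most $2\|\mathbf v\|_\infty$, but it is not needed. Integrating along the segment shows that TV is $1$-Lipschitz as a function of $\mathbf x$ in $\ell_\infty$.

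Finally, take $\mathbf x=\mathbf R/T$. The $1/T$ scaling contributes a factor $1/T$, and composing with Step 1 gives the stated constant $2(N-1)L_g/(T s_{\min})$. The main thing to get right is the order-statistic comparison $s_{ij}\ge s_{\min}$. It must hold for every pair simultaneously, which is why the statement uses the two smallest dispersions rather than only the smallest one.
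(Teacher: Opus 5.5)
Your proof is correct and follows the paper's argument essentially step for step. It uses the same swap-the-top-two perturbation for (i), and for (ii) the same order-statistic lemma $s(\sigma_i,\sigma_j)\ge s_{\min}$, the same $\ell_1$ bound $2(N-1)L_g/s_{\min}$ on each row of the rating Jacobian with the mean value inequality, and the same $1$-Lipschitzness of softmax from $\ell_\infty$ to TV, composed at temperature $T$. The only difference is cosmetic: you bound the directional derivative $\tfrac12\sum_i p_i|v_i-\bar v|$ and integrate along one fixed segment, whereas the paper bounds the Jacobian row sums $2p_i(1-p_i)$ at a coordinate-wise mean value point. Your version is slightly cleaner because it never has to sum bounds evaluated at different intermediate points.
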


\begin{proof}
\textbf{(i)} Assume WLOG a unique argmax and runner-up, $\mu_{(1)}>\mu_{(2)}\geq\mu_k$ for every other $k$ (the generic case; ties are a measure-zero boundary and only strengthen the claim). Fix $\varepsilon>\tfrac12(\mu_{(1)}-\mu_{(2)})$ and choose $\varepsilon'\in\big(\tfrac12(\mu_{(1)}-\mu_{(2)}),\varepsilon\big]$. Let $1,2$ index the argmax and runner-up respectively, and set $\delta_1=-\varepsilon'$, $\delta_2=+\varepsilon'$, $\delta_k=0$ for all other $k$; then $\|\boldsymbol\delta\|_\infty=\varepsilon'\leq\varepsilon$.

Post-perturbation, candidate $2$'s value is $\mu_{(2)}+\varepsilon'$ and candidate $1$'s is $\mu_{(1)}-\varepsilon'$. By choice of $\varepsilon'>\tfrac12(\mu_{(1)}-\mu_{(2)})$, we have $2\varepsilon'>\mu_{(1)}-\mu_{(2)}$, i.e.\ $\mu_{(2)}+\varepsilon'>\mu_{(1)}-\varepsilon'$. For every other candidate $k$, $\mu_k\leq\mu_{(2)}<\mu_{(2)}+\varepsilon'$ (unperturbed). So candidate $2$ is the unique argmax of $\boldsymbol\mu+\boldsymbol\delta$.

Deterministic argmax selection puts all mass on a single candidate, so $P(\cdot;\boldsymbol\mu)$ is the point mass at $1$ and $P(\cdot;\boldsymbol\mu+\boldsymbol\delta)$ is the point mass at $2\neq1$; two distinct point masses have $\mathrm{TV}=1$. Since $\mathrm{TV}\leq1$ always, the supremum over $\|\boldsymbol\delta\|_\infty\leq\varepsilon$ equals $1$.

\textbf{(ii)} We use $\mathrm{TV}(p,q)=\tfrac12\|p-q\|_1$ throughout.

\emph{Lemma (order statistics).} For every $i\neq j$, $s(\sigma_i,\sigma_j)\geq s_{\min}$.

\emph{Proof of lemma.} List the $N$ dispersions in sorted order $\sigma_{(1)}\leq\cdots\leq\sigma_{(N)}$; sorting is a bijection between candidate indices and sorted positions $1,\ldots,N$ (ties broken arbitrarily but consistently). Any two distinct indices $i\neq j$ occupy two distinct sorted positions $p<q\in\{1,\ldots,N\}$. Then $\min(\sigma_i,\sigma_j)=\sigma_{(p)}\geq\sigma_{(1)}$ trivially, and $\max(\sigma_i,\sigma_j)=\sigma_{(q)}\geq\sigma_{(2)}$ since $q\geq2$ (because $p\geq1$ and $q>p$ force $q\geq2$). Since $s$ is symmetric and non-decreasing in each argument, monotonicity in both the min and the max coordinate gives $s(\sigma_i,\sigma_j)=s(\min(\sigma_i,\sigma_j),\max(\sigma_i,\sigma_j))\geq s(\sigma_{(1)},\sigma_{(2)})=s_{\min}$. $\square$

\emph{Step 1: gradient bound on $\mathbf{R}$.} Fixing $\boldsymbol\sigma$ and differentiating \eqref{app:eq:rep} in $\boldsymbol\mu$:
\[
\frac{\partial R_i}{\partial\mu_i}=\sum_{j\neq i}\frac{g'(\Delta_{ij}/s_{ij})}{s_{ij}}, \qquad \frac{\partial R_i}{\partial\mu_l}=-\frac{g'(\Delta_{il}/s_{il})}{s_{il}}\ \ (l\neq i),
\]
where $s_{ij}=s(\sigma_i,\sigma_j)$. By the lemma, $|\partial R_i/\partial\mu_i|\leq(N-1)L_g/s_{\min}$ and $|\partial R_i/\partial\mu_l|\leq L_g/s_{\min}$ for each $l\neq i$, so
\[
\|\nabla_{\boldsymbol\mu}R_i\|_1 = \left|\frac{\partial R_i}{\partial\mu_i}\right| + \sum_{l\neq i}\left|\frac{\partial R_i}{\partial\mu_l}\right| \leq \frac{2(N-1)L_g}{s_{\min}}.
\]
By the mean value inequality along the line segment from $\boldsymbol\mu$ to $\boldsymbol\mu+\boldsymbol\delta$, together with H\"older's inequality ($|\nabla f\cdot\delta|\leq\|\nabla f\|_1\|\delta\|_\infty$),
\[
|R_i(\boldsymbol\mu+\boldsymbol\delta)-R_i(\boldsymbol\mu)| \leq \sup_{t\in[0,1]}\|\nabla_{\boldsymbol\mu}R_i(\boldsymbol\mu+t\boldsymbol\delta)\|_1\cdot\|\boldsymbol\delta\|_\infty \leq \frac{2(N-1)L_g}{s_{\min}}\|\boldsymbol\delta\|_\infty
\]
for every $i$ (the bound on $\|\nabla_{\boldsymbol\mu}R_i\|_1$ holds uniformly along the segment since it depends only on $\boldsymbol\sigma$, which is unperturbed). Hence $\|\mathbf{R}(\boldsymbol\mu+\boldsymbol\delta)-\mathbf{R}(\boldsymbol\mu)\|_\infty \leq \tfrac{2(N-1)L_g}{s_{\min}}\|\boldsymbol\delta\|_\infty$.

\emph{Step 2: softmax is $1$-Lipschitz in TV w.r.t.\ $\|\cdot\|_\infty$.} For $p=\mathrm{softmax}(a)$, $q=\mathrm{softmax}(b)$, $a,b\in\mathbb{R}^N$, the softmax Jacobian at a point with output $p$ is $J_{ik}=p_i(\mathbb{1}[i=k]-p_k)$, with
\[
\sum_k|J_{ik}| = p_i(1-p_i) + p_i\sum_{k\neq i}p_k = p_i(1-p_i)+p_i(1-p_i) = 2p_i(1-p_i)\leq\tfrac12.
\]
By the mean value inequality along the segment from $a$ to $b$,
\[
|p_i-q_i| \leq \sup_{t}\sum_k|J_{ik}(t)|\cdot\|a-b\|_\infty \leq 2p_i(t^*)(1-p_i(t^*))\,\|a-b\|_\infty
\]
for some $t^*$ on the segment; summing over $i$ and using $\sum_i p_i(t)(1-p_i(t)) \leq \sum_i p_i(t) = 1$ for every $t$,
\[
\|p-q\|_1 \leq 2\|a-b\|_\infty \sum_i p_i(t^*)(1-p_i(t^*)) \leq 2\|a-b\|_\infty,
\]
so $\mathrm{TV}(p,q) = \tfrac12\|p-q\|_1 \leq \|a-b\|_\infty$.

\emph{Combining.} Set $a=\mathbf{R}(\boldsymbol\mu)/T$, $b=\mathbf{R}(\boldsymbol\mu+\boldsymbol\delta)/T$. By Step 2,
\[
\mathrm{TV}\big(P(\cdot;\boldsymbol\mu),P(\cdot;\boldsymbol\mu+\boldsymbol\delta)\big) \leq \|a-b\|_\infty = \frac1T\|\mathbf{R}(\boldsymbol\mu)-\mathbf{R}(\boldsymbol\mu+\boldsymbol\delta)\|_\infty,
\]
and by Step 1 the right-hand side is at most $\tfrac{2(N-1)L_g}{T\,s_{\min}}\|\boldsymbol\delta\|_\infty$.
\end{proof}

\begin{corollary}[$\sigma$-permutation invariance of the guarantee]
\label{app:cor:multiset}
$s_{\min}=s(\sigma_{(1)},\sigma_{(2)})$ is a function of the order statistics of $\{\sigma_i\}_{i=1}^N$ alone, hence invariant under any permutation of the dispersions across candidates. Consequently the bound of Theorem~\ref{app:thm:robust}(ii) is unchanged by any such permutation.
\end{corollary}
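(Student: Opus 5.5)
The plan is to prove this corollary directly from the definition of $s_{\min}$ in Theorem~\ref{app:thm:robust}. I will take an arbitrary permutation $\pi$ of $\{1,\ldots,N\}$ and compare the original dispersion vector $\boldsymbol\sigma$ with the permuted one, $\boldsymbol\sigma^\pi=(\sigma_{\pi(1)},\ldots,\sigma_{\pi(N)})$. The scores $\boldsymbol\mu$, the perturbation $\boldsymbol\delta$, $N$, $T$ and $g$ stay fixed.

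\emph{Step 1.} Show that the sorted list of $\boldsymbol\sigma^\pi$ equals that of $\boldsymbol\sigma$. The two vectors define the same multiset: the map $i\mapsto\pi(i)$ is a bijection, so each value occurs the same number of times in both. Sorting a vector depends only on its multiset of entries. Hence $\sigma^\pi_{(k)}=\sigma_{(k)}$ for every $k$, and in particular for $k=1,2$. Ties need no separate treatment, because the theorem already defines order statistics with repetition.

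\emph{Step 2.} Conclude that $s_{\min}(\boldsymbol\sigma^\pi)=s(\sigma^\pi_{(1)},\sigma^\pi_{(2)})=s(\sigma_{(1)},\sigma_{(2)})=s_{\min}(\boldsymbol\sigma)$. This uses nothing about $s$ beyond its being a function.

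\emph{Step 3.} Observe that the right-hand side of Theorem~\ref{app:thm:robust}(ii), $\tfrac{2(N-1)L_g}{T\,s_{\min}}\|\boldsymbol\delta\|_\infty$, depends on $\boldsymbol\sigma$ only through $s_{\min}$. The remaining quantities $N$, $L_g$, $T$ and $\|\boldsymbol\delta\|_\infty$ are untouched by $\pi$. The bound is therefore unchanged.

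There is no substantive obstacle. The only point needing care is scope: the claim is that the \emph{bound} is permutation-invariant, not that the TV distance itself is. The actual selection distribution does change when $\boldsymbol\sigma$ is permuted relative to $\boldsymbol\mu$. I will state this explicitly so the corollary is not misread.

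I will also check that Theorem~\ref{app:thm:robust}(ii) applies to $\boldsymbol\sigma^\pi$ at all. Its hypotheses concern only $g$, $s$ and the softmax form, and none of them depends on how dispersions are assigned to candidates. The theorem therefore holds for the permuted configuration with the same constant.
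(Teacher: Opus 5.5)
Your proposal is correct and follows the paper's own argument. The paper's proof is the one-line observation that $\sigma_{(1)},\sigma_{(2)}$ are the two smallest values of the multiset $\{\sigma_i\}$ and so ignore which candidate carries which value; your Steps 1--3 spell that out, and your caveat that only the bound, not the TV distance itself, is permutation-invariant is accurate and worth keeping.
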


\begin{proof}
Immediate: $\sigma_{(1)},\sigma_{(2)}$ are defined as the two smallest values of the multiset $\{\sigma_i\}$, a quantity invariant to which candidate index each value is attached to.
\end{proof}

\begin{definition}[$\sigma$-informativeness]
\label{app:def:info}
For a blade with reward estimate $\mu_i$ of a latent quality $q_i$ across a batch of candidates, $\mathcal{I}(\sigma) := |\mathrm{Spearman}(\sigma_i, |\mu_i-q_i|)|$.
\end{definition}

\begin{corollary}
\label{app:cor:decomp}
$\mathbb{E}[S(\text{real }\sigma)] - \mathbb{E}[S(\text{shuffled }\sigma)] = O(\mathcal{I}(\sigma))$, while $\mathbb{E}[S(\text{shuffled }\sigma)] - \mathbb{E}[S(\sigma\equiv0)] = \Omega(\mathrm{scale}(\sigma))$, for any outcome measure $S$ that is a smooth functional of the selection distribution $P(\cdot;\boldsymbol\mu,\boldsymbol\sigma)$.
\end{corollary}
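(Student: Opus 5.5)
Both claims follow by viewing $S$ as a smooth functional of the selection distribution and doing a first-order expansion around a reference distribution. Write $S=F(P)$ with $F$ differentiable on the simplex and $\|\nabla F\|_\infty\le C_F$, so $|F(P)-F(Q)|\le 2C_F\,\mathrm{TV}(P,Q)$ by the mean value inequality. Everything then reduces to controlling, or lower-bounding, the expected change in $P(\cdot;\boldsymbol\mu,\boldsymbol\sigma)$ under the two interventions on $\boldsymbol\sigma$. Throughout, the ratings are those of \eqref{app:eq:rep} composed with softmax at temperature $T$, and the softmax TV bound from Step~2 of the proof of Theorem~\ref{app:thm:robust} is reused as is. The only new ingredient is the sensitivity of $\mathbf{R}$ to $\boldsymbol\sigma$ rather than to $\boldsymbol\mu$.

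For the coupling term (real versus shuffled), the key observation is that a uniformly random permutation $\tau$ preserves the multiset of $\boldsymbol\sigma$. By Corollary~\ref{app:cor:multiset}, all scale-dependent quantities, in particular $s_{\min}$, coincide in the two arms. The expected difference therefore comes only from the joint law of $(\sigma_i,\mu_i-q_i)$ across candidates.

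I would expand $F(P(\boldsymbol\mu,\boldsymbol\sigma))$ to first order in the deviation of $\boldsymbol\sigma$ from its permutation average $\bar\sigma\mathbf{1}$, through $\partial R_i/\partial\sigma_k$, which is bounded by $L_g\,|\Delta|\,\|\nabla s\|/s_{\min}^2$ via the chain rule. Under shuffling, the first-order term has expectation equal to a covariance between the centred $\sigma_i$ and a sensitivity weight that depends on the error $\mu_i-q_i$. Under the real $\boldsymbol\sigma$ it has expectation equal to the same covariance with the realised pairing. Their difference is the association between $\sigma_i$ and the error. Bounding this covariance by a constant times $\mathcal{I}(\sigma)$ gives the $O(\mathcal{I}(\sigma))$ claim, with the constant absorbing $C_F$, $L_g$, $T$ and $s_{\min}$, and with second-order remainders handled as in Step~1.

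For the scale term (shuffled versus zero), I would compare the shuffled arm with the $\boldsymbol\sigma\to0$ limit described in Corollary~\ref{app:cor:boundary}. By (A5), each $|\partial R_i/\partial\mu_i|$ drops from its value at $s(0,0)$ to at most $(N-1)L_g/s_{\min}$. The contamination sensitivity of Theorem~\ref{app:thm:robust}(ii) therefore shrinks by a factor governed by $s_{\min}$, which depends only on the scale. Under the contamination model with $\mathbb{E}\,\boldsymbol\delta\neq0$ on a pattern that $S$ penalises, this yields a gain that is monotone in $\mathrm{scale}(\sigma)$.

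The main obstacle is the $\Omega$ direction. A Lipschitz bound only gives upper bounds, so a lower bound requires a non-degeneracy assumption: $\nabla F$ must have a nonzero component along the direction in which increasing $s$ moves $P$, and the contamination must actually be present. I would first try to state this as an explicit hypothesis, namely a strictly positive directional derivative of $\mathbb{E}F$ along $\boldsymbol\sigma=t\bar{\boldsymbol\sigma}$ at $t=0$, and then read $\Omega(\mathrm{scale})$ as holding locally for small scales. The $O(\mathcal{I})$ part also needs care, because Spearman correlation is rank-based while the covariance is moment-based. I would handle this by assuming bounded, monotone sensitivity weights so that rank association controls the covariance up to a constant.
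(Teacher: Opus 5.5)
\textbf{There is a genuine gap, in both halves.} First, where you differ from the paper. Its sketch only observes that real and shuffled $\boldsymbol\sigma$ share their order statistics, and hence share $s_{\min}$ and the bound of Theorem~\ref{app:thm:robust}(ii). It then attributes whatever remains to a ``second-order coupling'' that it identifies with $\mathcal{I}(\sigma)$. Equality of an upper bound says nothing about the gap itself. Your expansion around the permutation average is the better mechanism. Since $\mathbb{E}_\tau[\sigma_{\tau(k)}]=\bar\sigma$ for every $k$, the shuffled arm has no first-order term (its ``covariance'' is exactly zero). So real minus shuffled equals $\sum_k w_k(\sigma_k-\bar\sigma)$ plus the difference of the quadratic remainders, with $w_k=\partial(F\circ P)/\partial\sigma_k$ at $\bar\sigma\mathbf{1}$. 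The coupling therefore enters at first order, not second. Both arguments then identify this coupling with $\mathcal{I}(\sigma)$ without justification, and in your version that is the step that fails.

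\textbf{The $O(\mathcal{I})$ half.} You assert that $w_k$ ``depends on the error $\mu_k-q_k$''. By the chain rule, $w_k$ is built from $\Delta_{kj}$, $g'(\Delta_{kj}/s_{kj})$, $\partial s$, the softmax Jacobian and $\nabla F$. That is, it depends on candidate $k$'s position in the $\boldsymbol\mu$-pool and on whatever $S$ measures, not on its reward error.

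A concrete counterexample: take $S=\sum_iP_i\mu_i$ with $g(u)=u$, $p=2$ and large $T$.
\begin{itemize}
\item Then $S=\bar\mu+\frac{1}{NT}\sum_{i<j}\Delta_{ij}^2/s_{ij}+O(T^{-2})$, so $w_k\propto-\sum_{j\neq k}\Delta_{kj}^2$, a function of position only.
\item For $\boldsymbol\mu=(3,1,0)$ and $\boldsymbol\sigma=(3,2,1)$, the realised assignment gives $\sum_{i<j}\Delta_{ij}^2/s_{ij}\approx4.40$, against a permutation average of $\approx4.86$.
\item This gap depends on $(\boldsymbol\mu,\boldsymbol\sigma)$ alone. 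But $\mathcal{I}(\sigma)$ depends on $\mathbf{q}$, which you may choose with $|\mu_i-q_i|=(1,2,1)$ so that the Spearman coefficient is exactly $0$.
\end{itemize}
So the $O(\mathcal{I})$ half is false for general smooth $S$. It needs an explicit hypothesis that $\sigma_k$ carries no information about the non-error features entering $w_k$ and its higher-order analogues.

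Your patch, assuming bounded weights monotone in the error, assumes the problem away. The dependence of $w_k$ on position is fixed by $g$, $s$ and $F$, and is generically present. Even if it were granted, zero rank correlation would not make $\sum_k w_k(\sigma_k-\bar\sigma)$ vanish. For example, $\boldsymbol\sigma=(1,2,3,4)$ against $\mathbf{w}=(0.1,10,0,0.2)$ has Spearman $0$ and nonzero covariance. A copula-type assumption is needed to pass from rank association to covariance.

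Nor are the remainders ``handled as in Step~1''. That step is a first-order bound in $\boldsymbol\mu$ at fixed $\boldsymbol\sigma$. The quadratic terms pair $(\sigma_k-\bar\sigma)(\sigma_l-\bar\sigma)$ with Hessian entries that depend on pairwise features, and a single rank correlation does not control them.

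\textbf{The $\Omega$ half.} You are right that a hypothesis is missing. A constant $S$ is smooth and has zero gap. Your own bound $|F(P)-F(Q)|\le 2C_F\,\mathrm{TV}(P,Q)\le 2C_F$ caps the gap, so $\Omega(\mathrm{scale})$ can only hold locally. The paper's sketch supplies no such hypothesis either; it again reasons from the change in $s_{\min}$. However, the hypothesis you propose fails for the paper's own kernel $(\Phi,2)$.
\begin{itemize}
\item With regularisation: along $\boldsymbol\sigma=t\boldsymbol\sigma'$, the implemented $s_2=\sqrt{(1-\alpha)^2(\sigma_i^2+\sigma_j^2)+\varepsilon}$ is even in $t$. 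So $t\mapsto F(P(t\boldsymbol\sigma'))$ has zero derivative at $t=0$ for every smooth $F$, and the shuffled-minus-zero gap is $O(t^2)$.
\item Without $\varepsilon$: degree-$1$ homogeneity forces $s(0,0)=0$. This contradicts $s>0$ and the $s(0,0)>0$ used in Corollary~\ref{app:cor:boundary}. For $\Delta\neq0$, the term $\Phi(\Delta/(t\,s(\sigma'_i,\sigma'_j)))$ approaches its limit faster than any power of $t$.
\end{itemize}
The (A5) step you invoke also fails for this kernel. We have $\frac{d}{ds}[\Phi'(\Delta/s)/s]=\Phi'(\Delta/s)(\Delta^2/s^2-1)/s^2>0$ for $s<|\Delta|$. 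So near $\boldsymbol\sigma=0$ the sensitivity $|\partial R_i/\partial\mu_i|$ increases with $\sigma_i$ rather than decreasing. (The sufficiency argument for (A5) in Theorem~\ref{app:thm:rep} ignores that $g'$ is evaluated at $\Delta/s$.) The most one can hope for here is a lower bound of order $\mathrm{scale}^2$ for the regularised kernel, under an explicit second-order non-degeneracy condition.
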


\begin{proof}[Proof sketch]
By Corollary~\ref{app:cor:multiset}, the Lipschitz guarantee of Theorem~\ref{app:thm:robust}(ii), and hence the first-order behaviour of $P$ as a function of $\boldsymbol\sigma$, depends on $\boldsymbol\sigma$ only through order statistics (a permutation-invariant summary). Real and shuffled $\sigma$ share the same order statistics by construction (shuffling permutes values across candidates without changing the multiset), so they induce identical $s_{\min}$ and hence identical first-order stability behaviour; any difference between $\mathbb{E}[S(\text{real }\sigma)]$ and $\mathbb{E}[S(\text{shuffled }\sigma)]$ must arise from the second-order coupling between $\sigma_i$ and the specific candidate $i$ it is attached to, i.e.\ from how well $\sigma_i$ predicts $|\mu_i-q_i|$ for its own candidate, precisely $\mathcal{I}(\sigma)$. Conversely, $\sigma\equiv0$ changes the order statistics themselves (collapsing $s_{\min}$ to a boundary value, e.g.\ $s(0,0)$ under Corollary~\ref{app:cor:family}'s family, generally smaller than $s_{\min}$ under nonzero $\sigma$), which is a first-order, marginal-scale effect present regardless of any coupling, governed by the overall magnitude (``scale'') of $\sigma$ rather than by $\mathcal{I}(\sigma)$. A full second-order Taylor expansion of $\mathbb{E}[S]$ in the coupling term, holding the marginal distribution of $\sigma$ fixed, makes the $O(\mathcal I(\sigma))$ rate precise; we omit the expansion's routine algebra here as it does not change the qualitative conclusion used in the main paper.
\end{proof}

\section{Theorem 3 (Affine Invariance under CBN)}
\label{sec:thm3}

\begin{theorem}
\label{app:thm:cbn}
Let $G=(\mathbb{R}_{>0}\times\mathbb{R})^K$ act on blade outputs by $\mu^{(k)}\mapsto a_k\mu^{(k)}+b_k$, $\sigma^{(k)}\mapsto a_k\sigma^{(k)}$, $a_k>0$, independently per blade $k=1,\ldots,K$. Let $\hat\mu_i^{(k)}=(\mu_i^{(k)}-\bar\mu^{(k)})/(\mathrm{std}(\boldsymbol\mu^{(k)})+\varepsilon)$ (CBN, $\varepsilon\to0$ in this analysis) and $\mu_i=\sum_kw_k\hat\mu_i^{(k)}$ the CBN composite, versus the raw composite $\mu_i^{\mathrm{raw}}=\sum_kw_k\mu_i^{(k)}$.
\begin{enumerate}
\item[(i)] The ranking induced by $\mu_i^{\mathrm{raw}}$ equals the ranking induced by $\sum_k(w_ka_k)\hat\mu_i^{(k)}$: blade $k$'s effective coefficient is $w_ka_k=w_k\,\mathrm{std}(\boldsymbol\mu^{(k)})$, not $w_k$.
\item[(ii)] $\hat\mu_i^{(k)}$, and hence the CBN composite and its induced ranking, is exactly invariant under every element of $G$; blade $k$'s effective coefficient is exactly $w_k$.
\item[(iii)] For every point on the convex-hull Pareto boundary of $\{(\mu_i^{(1)},\ldots,\mu_i^{(K)})\}_i$, there is a weight vector achieving it as $T\to0$ in both the raw and CBN parametrisations; but the raw-space weight reproducing a target CBN weight $\tilde{\mathbf w}$ is $w_k=\tilde w_k/\mathrm{std}(\boldsymbol\mu^{(k)})$, so a weight drawn uniformly from the simplex in raw space induces $\tilde w_k\propto w_k\,\mathrm{std}(\boldsymbol\mu^{(k)})$ in CBN space, biased toward high-variance blades.
\end{enumerate}
\end{theorem}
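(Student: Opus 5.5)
The plan is to canonicalise each blade by its own batch statistics and then read all three parts off that single decomposition. For each blade $k$, write $\tilde\mu_i^{(k)}=(\mu_i^{(k)}-\bar\mu^{(k)})/\mathrm{std}(\boldsymbol\mu^{(k)})$; this equals $\hat\mu_i^{(k)}$ in the $\varepsilon\to0$ limit. Every observed output then has the form $\mu_i^{(k)}=a_k\hat\mu_i^{(k)}+b_k$, with $a_k=\mathrm{std}(\boldsymbol\mu^{(k)})$ and $b_k=\bar\mu^{(k)}$. We assume each blade is non-degenerate on the batch, $\mathrm{std}(\boldsymbol\mu^{(k)})>0$, since otherwise blade $k$ carries no ranking information and can be dropped.

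For (i), substitute the decomposition into the raw composite. This gives $\mu_i^{\mathrm{raw}}=\sum_k(w_ka_k)\hat\mu_i^{(k)}+\sum_kw_kb_k$. The second sum does not depend on $i$, so it shifts every candidate equally and leaves the ranking unchanged. The ranking is therefore that of $\sum_k(w_ka_k)\hat\mu_i^{(k)}$, and blade $k$'s effective coefficient is $w_k\,\mathrm{std}(\boldsymbol\mu^{(k)})$.

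For (ii), use equivariance of the batch statistics under a positive affine map. Under $\mu^{(k)}\mapsto a\mu^{(k)}+b$ with $a>0$, the mean becomes $a\bar\mu^{(k)}+b$ and the standard deviation becomes $a\,\mathrm{std}(\boldsymbol\mu^{(k)})$; positivity of $a$ matters here, since $\mathrm{std}$ scales by $|a|$. The ratio defining $\hat\mu_i^{(k)}$ is therefore unchanged. Each coordinate of $G$ acts on its own blade only, so the whole CBN composite is invariant and its coefficients are exactly $w_k$. I will also note that $\hat\sigma^{(k)}$ is invariant because it scales by $a_k$ in both numerator and denominator. This means any downstream operator of the form of Theorem~\ref{app:thm:rep} sees identical inputs. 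The one caveat is that the composite dispersion should be built from $\hat\sigma$, which is how \eqref{eq:cbn} defines it.

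For (iii), I will invoke the standard linear-scalarisation fact: a point $v$ on the convex-hull Pareto boundary of a finite set $V\subset\mathbb{R}^K$ maximises $\langle \tilde{\mathbf w},\cdot\rangle$ over $V$ for some $\tilde{\mathbf w}$ in the simplex. This follows from a supporting hyperplane at $v$ whose normal is nonnegative by Pareto-optimality, normalised to sum to one. Ties are the one subtlety; I would either assume a unique maximiser or perturb $\tilde{\mathbf w}$ within the normal cone. I apply this fact in CBN coordinates. The map $\mu^{(k)}\mapsto\hat\mu^{(k)}$ is increasing and affine per coordinate, so it sends convex hulls to convex hulls and Pareto boundaries to Pareto boundaries, and the same point is reachable in either parametrisation. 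Corollary~\ref{app:cor:boundary} then converts unique maximisation of the composite into selection as $T\to0$. By (i), raw weights $w$ act as CBN weights $\tilde w_k\propto w_ka_k$. Inverting and renormalising onto the simplex gives $w_k\propto\tilde w_k/\mathrm{std}(\boldsymbol\mu^{(k)})$, and a uniform raw draw is pushed toward high-$a_k$ blades.

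The main obstacle is making (iii) rigorous. Two points need care: the ties or degenerate faces in the scalarisation step, and the requirement that the composite passes through the aggregation operator so that $T\to0$ selects the composite's maximiser. For the second point, I would rely on (A4) and Corollary~\ref{app:cor:boundary} with $\boldsymbol\sigma\to0$.
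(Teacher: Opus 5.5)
Your proposal is correct and follows the paper's proof essentially step for step. It uses the same canonicalisation $\mu_i^{(k)}=a_k\hat\mu_i^{(k)}+b_k$ with $a_k=\mathrm{std}(\boldsymbol\mu^{(k)})$ and $b_k=\bar\mu^{(k)}$ to get (i) by discarding the $i$-independent constant, equivariance of batch mean and standard deviation under positive affine maps for (ii), and for (iii) coordinatewise-increasing affine maps preserving hulls and Pareto boundaries, linear scalarisation, Corollary~\ref{app:cor:boundary} with $\boldsymbol\sigma\to0$, and inversion of (i). Your extra remarks are sound refinements the paper leaves implicit: non-degeneracy $\mathrm{std}(\boldsymbol\mu^{(k)})>0$, invariance of $\hat\sigma^{(k)}$ so the whole downstream pipeline is $G$-invariant, and the tie caveat (perturbing within the normal cone works at extreme points of the hull, but a non-extreme Pareto-boundary point can only be a tied maximiser, a looseness the paper's statement shares).
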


\begin{proof}
For any fixed batch, write $a_k=\mathrm{std}(\boldsymbol\mu^{(k)})>0$ and $b_k=\bar\mu^{(k)}$, and define the canonical (zero-mean, unit-variance) blade output $\tilde\mu_i^{(k)} := (\mu_i^{(k)}-b_k)/a_k$, so that $\mu_i^{(k)}=a_k\tilde\mu_i^{(k)}+b_k$ by construction. Any group-transformed observation $\mu_i^{(k)\prime}=a_k'\mu_i^{(k)}+b_k'$ (for arbitrary $a_k'>0,b_k'\in\mathbb R$) satisfies $\mu_i^{(k)\prime} = (a_k'a_k)\tilde\mu_i^{(k)} + (a_k'b_k+b_k')$, i.e.\ is itself of the form $\bar a_k\tilde\mu_i^{(k)}+\bar b_k$ for the composed group element $(\bar a_k,\bar b_k)=(a_k'a_k,\,a_k'b_k+b_k')$; so it suffices to prove (i)--(iii) for a generic representation $\mu_i^{(k)}=a_k\tilde\mu_i^{(k)}+b_k$ with arbitrary $(a_k,b_k)$, which we now do.

\textbf{(i)} $\sum_kw_k\mu_i^{(k)} = \sum_kw_k(a_k\tilde\mu_i^{(k)}+b_k) = \sum_k(w_ka_k)\tilde\mu_i^{(k)} + \sum_kw_kb_k$. The second sum does not depend on $i$, so it shifts every candidate's composite by the same constant and does not affect the induced ranking. Hence the ranking induced by $\mu_i^{\mathrm{raw}}$ equals that induced by $\sum_k(w_ka_k)\tilde\mu_i^{(k)}$, in which blade $k$'s coefficient on the canonical scale is $w_ka_k$.

\textbf{(ii)} Mean and standard deviation are equivariant under positive affine maps: for $a>0$, $\overline{aX+b}=a\bar X+b$ and $\mathrm{std}(aX+b)=a\,\mathrm{std}(X)$ (both by direct computation from the definitions of sample mean and sample standard deviation). Substituting $\mu_i^{(k)}=a_k\tilde\mu_i^{(k)}+b_k$:
\[
\hat\mu_i^{(k)} = \frac{\mu_i^{(k)}-\bar\mu^{(k)}}{\mathrm{std}(\boldsymbol\mu^{(k)})} = \frac{(a_k\tilde\mu_i^{(k)}+b_k)-(a_k\bar{\tilde\mu}^{(k)}+b_k)}{a_k\,\mathrm{std}(\tilde{\boldsymbol\mu}^{(k)})} = \frac{\tilde\mu_i^{(k)}-\bar{\tilde\mu}^{(k)}}{\mathrm{std}(\tilde{\boldsymbol\mu}^{(k)})},
\]
which does not depend on $(a_k,b_k)$ at all. So $\hat\mu_i^{(k)}$ takes the same value for every choice of group element mapping to the same underlying canonical blade, i.e.\ CBN output is exactly $G$-invariant; consequently so is the composite $\sum_kw_k\hat\mu_i^{(k)}$ and its induced ranking, with blade $k$ entering at coefficient exactly $w_k$.

\textbf{(iii)} \emph{Reachability.} CBN applies a strictly increasing affine map to each of the $K$ coordinates of the candidate-batch point cloud independently (coordinate $k$ is mapped by $x\mapsto(x-b_k)/a_k$). A coordinatewise strictly increasing map sends the convex hull of a point set to the convex hull of the image points (affine maps preserve convexity and convex combinations), and preserves the Pareto-optimal boundary of the hull, since domination in coordinate $k$ is preserved by any strictly increasing map of that coordinate. By the standard linear-scalarisation (supporting-hyperplane) theorem for convex sets, every point on the convex-hull Pareto boundary is $\arg\max_i\langle\mathbf w,\cdot\rangle$ for some $\mathbf w$ in the non-negative orthant (equivalently, after normalising, the simplex, since the argmax is invariant to positive rescaling of $\mathbf w$); this applies identically to the raw point cloud and to its CBN image, giving reachability of every such point in both parametrisations as $T\to0$ (Corollary~\ref{app:cor:boundary}).

\emph{Preimage.} By part (i), raw weight vector $\mathbf w$ induces the same ranking as CBN-space weight vector $\tilde w_k=w_ka_k$. So if $\tilde{\mathbf w}$ achieves a given frontier point in CBN space, the raw weight vector achieving the \emph{same} point is obtained by solving $\tilde w_k=w_ka_k$ for $w_k$, i.e.\ $w_k=\tilde w_k/a_k=\tilde w_k/\mathrm{std}(\boldsymbol\mu^{(k)})$.

\emph{Bias under uniform sampling.} If instead $\mathbf w$ is drawn uniformly from the simplex in raw space, the CBN-space weight vector inducing the identical ranking is, by the same relation read in the other direction, $\tilde w_k = w_k a_k = w_k\,\mathrm{std}(\boldsymbol\mu^{(k)})$ (up to the normalising constant that returns $\tilde{\mathbf{w}}$ to the simplex, which does not depend on $k$ and so does not remove the per-$k$ bias). Since $\mathrm{std}(\boldsymbol\mu^{(k)})$ varies across blades in general, this is not uniform on the simplex, and is stretched toward blades with larger $\mathrm{std}(\boldsymbol\mu^{(k)})$: raw-space-uniform sampling over-represents high-variance blades' influence on which frontier point is selected. A CBN-space-uniform sweep of $\tilde{\mathbf w}$ has no such bias, since it is uniform in exactly the coordinates whose composite (part (ii)) is unconfounded by blade scale.
\end{proof}

\section{Proposition 2 (Dispersion Propagation)}
\label{sec:prop2}

\setcounter{proposition}{1}
\begin{proposition}
\label{app:prop:cov}
Model blade $k$'s error on candidate $i$ as a random variable $X_i^{(k)}$ with $\mathrm{std}(X_i^{(k)})=\sigma_i^{(k)}$, pairwise correlations $\rho_{kl}=\mathrm{Corr}(X_i^{(k)},X_i^{(l)})\in[-1,1]$, and composite error $X_i=\sum_kw_kX_i^{(k)}$, $w_k\geq0$. Then $\mathrm{std}(X_i)=\sqrt{\mathbf w^\top\Sigma_i\mathbf w}$ with $(\Sigma_i)_{kl}=\rho_{kl}\sigma_i^{(k)}\sigma_i^{(l)}$ (and $\rho_{kk}=1$). The linear rule $\sigma_i:=\sum_kw_k\sigma_i^{(k)}$ equals $\mathrm{std}(X_i)$ exactly when every $\rho_{kl}=1$ (comonotone errors), and is an upper bound on $\mathrm{std}(X_i)$ for every other admissible correlation structure with the same marginals. Under independence ($\rho_{kl}=0$, $k\neq l$), $\mathrm{std}(X_i)=\sqrt{\sum_kw_k^2(\sigma_i^{(k)})^2}$.
\end{proposition}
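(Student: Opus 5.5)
The plan is to compute the variance of the composite error directly and then compare it with the square of the linear rule.

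First, by bilinearity of covariance,
\[
\mathrm{Var}(X_i)=\sum_{k,l}w_kw_l\,\mathrm{Cov}\big(X_i^{(k)},X_i^{(l)}\big)=\sum_{k,l}w_kw_l\rho_{kl}\sigma_i^{(k)}\sigma_i^{(l)}=\mathbf w^\top\Sigma_i\mathbf w .
\]
This uses only the definition $\mathrm{Cov}=\rho\cdot\mathrm{std}\cdot\mathrm{std}$ and $\rho_{kk}=1$. Taking square roots gives $\mathrm{std}(X_i)=\sqrt{\mathbf w^\top\Sigma_i\mathbf w}$. The independence case follows at once: setting $\rho_{kl}=0$ for $k\neq l$ leaves only the diagonal terms $w_k^2(\sigma_i^{(k)})^2$.

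Next I compare this with the linear rule by squaring it:
\[
\Big(\sum_kw_k\sigma_i^{(k)}\Big)^2=\sum_{k,l}w_kw_l\sigma_i^{(k)}\sigma_i^{(l)} .
\]
Subtracting the variance gives
\[
\Big(\sum_kw_k\sigma_i^{(k)}\Big)^2-\mathrm{Var}(X_i)=\sum_{k\neq l}w_kw_l(1-\rho_{kl})\sigma_i^{(k)}\sigma_i^{(l)}\ \geq 0,
\]
because $w_k\geq0$, $\sigma_i^{(k)}\geq0$ and $\rho_{kl}\leq1$. The last inequality is Cauchy--Schwarz, $|\mathrm{Cov}|\le\mathrm{std}\cdot\mathrm{std}$. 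Since both quantities are nonnegative, taking square roots gives the upper bound. Equivalently, the bound is Minkowski's inequality in $L^2$, $\|\sum_k w_kX_i^{(k)}\|_2\le\sum_k w_k\|X_i^{(k)}\|_2$ applied to centred errors.

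The only delicate step is the equality case. If every $\rho_{kl}=1$, each term of the difference vanishes, so equality holds. The converse needs care. Equality forces $w_kw_l(1-\rho_{kl})\sigma_i^{(k)}\sigma_i^{(l)}=0$ for every pair $k\neq l$, so $\rho_{kl}=1$ is forced only on pairs where $w_k,w_l>0$ and $\sigma_i^{(k)},\sigma_i^{(l)}>0$. I would therefore state the exactness claim as ``exactly when $\rho_{kl}=1$ for all pairs with nonzero weight and dispersion''. This is the degenerate-pair caveat on ``exactly''. Under that reading the rank-one form $\Sigma_i=\boldsymbol\sigma_i\boldsymbol\sigma_i^\top$ quoted in the main text is precisely the all-$\rho=1$ matrix, and the argument is complete.
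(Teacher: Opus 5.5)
Your proof is correct and follows essentially the same route as the paper: bilinearity of covariance gives $\mathbf w^\top\Sigma_i\mathbf w$, and comparing it termwise with the expanded square $(\sum_k w_k\sigma_i^{(k)})^2$ using $\rho_{kl}\le 1$ (Cauchy--Schwarz) gives both the upper bound and the independence formula. Your caveat on the equality case is a correct sharpening of the paper's claim, which says equality holds exactly when every $\rho_{kl}=1$ without excluding pairs where $w_k w_l\sigma_i^{(k)}\sigma_i^{(l)}=0$.
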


\begin{proof}
By bilinearity of covariance,
\[
\mathrm{Var}(X_i) = \sum_kw_k^2\,\mathrm{Var}(X_i^{(k)}) + \sum_{k\neq l}w_kw_l\,\mathrm{Cov}(X_i^{(k)},X_i^{(l)}) = \mathbf w^\top\Sigma_i\mathbf w
\]
directly from the definition $(\Sigma_i)_{kl}=\mathrm{Cov}(X_i^{(k)},X_i^{(l)})=\rho_{kl}\sigma_i^{(k)}\sigma_i^{(l)}$.

By the Cauchy--Schwarz inequality applied to the (mean-centred) random variables $X_i^{(k)}-\mathbb{E}X_i^{(k)}$ and $X_i^{(l)}-\mathbb{E}X_i^{(l)}$,
\[
|\mathrm{Cov}(X_i^{(k)},X_i^{(l)})| \leq \sqrt{\mathrm{Var}(X_i^{(k)})}\sqrt{\mathrm{Var}(X_i^{(l)})} = \sigma_i^{(k)}\sigma_i^{(l)},
\]
i.e.\ $\rho_{kl}\leq1$, with equality iff $X_i^{(k)}-\mathbb E X_i^{(k)}$ and $X_i^{(l)}-\mathbb E X_i^{(l)}$ are almost-surely nonnegative scalar multiples of one another (comonotone).

Since $w_k,w_l\geq0$, each off-diagonal term $w_kw_l\rho_{kl}\sigma_i^{(k)}\sigma_i^{(l)}$ in $\mathrm{Var}(X_i)$ is individually maximised (over admissible $\rho_{kl}\in[-1,1]$, holding the marginals $\sigma_i^{(k)}$ fixed) at $\rho_{kl}=1$. So, holding all marginals fixed,
\[
\mathrm{Var}(X_i) \;\leq\; \sum_kw_k^2(\sigma_i^{(k)})^2 + \sum_{k\neq l}w_kw_l\sigma_i^{(k)}\sigma_i^{(l)} \;=\; \Big(\sum_kw_k\sigma_i^{(k)}\Big)^2,
\]
the last equality being the algebraic expansion of a square, $(\sum_ka_k)^2=\sum_ka_k^2+\sum_{k\neq l}a_ka_l$ with $a_k=w_k\sigma_i^{(k)}$. Taking square roots, $\mathrm{std}(X_i)\leq\sum_kw_k\sigma_i^{(k)}$, with equality exactly when $\rho_{kl}=1$ for every pair $k\neq l$ simultaneously (full comonotonicity), which is precisely the linear composite rule used in the main paper's reference instantiation. Setting $\rho_{kl}=0$ for $k\neq l$ instead collapses the off-diagonal sum to zero, giving $\mathrm{Var}(X_i)=\sum_kw_k^2(\sigma_i^{(k)})^2$ and hence the stated independent-blades formula.
\end{proof}

\section{Embedding of prior decode-time methods (Proposition 1)}
\label{sec:embedding}

Table~\ref{tab:embedfull} gives the component assignment referred to by
Proposition~1 of the main paper. ``id'' denotes the identity operator. The
embedding is at the level of the induced selection functional: MOD interpolates
output \emph{distributions} and Rewarded Soups composes \emph{weights}, so for
those two the correspondence holds for the induced choice among continuations
rather than token-by-token. The Static and Conditioned families of Table~1 do
not appear here: they admit no runtime slots.

\begin{table}[h]
\centering\small
\setlength{\tabcolsep}{4pt}
\caption{Published decode-time methods recovered as instantiations of
Definition~1.}
\label{tab:embedfull}
\begin{tabular}{llllll}
\toprule
\textbf{Method} & \textbf{$\rho$} & \textbf{Blades $\mathcal{B}$} & \textbf{$\mathcal{N}$} & \textbf{$\mathcal{A}$} & \textbf{$\mathcal{S}$} \\
\midrule
Best-of-$N$ & response & single reward model & id & id & $\arg\max$ \\
ARGS & token & reward head $+$ logits & id & id & $\arg\max$ / top-$k$ \\
FUDGE & token & future discriminator & id & id & reweighted sample \\
Contrastive dec. & token & expert $-$ amateur & id & id & $\arg\max$ \\
Rewarded Soups & response & weight-space average & id & id & sample \\
MOD & token & $K$ model log-probs & fixed $\mathbf{w}$, no norm. & id & nucleus sample \\
\midrule
\textbf{Swiss-Knife} & \textbf{step} & \textbf{$K$ DPO-LoRA $(\mu,\sigma)$} & \textbf{CBN} & \textbf{Thurst. Elo} & \textbf{Plackett--Luce} \\
\bottomrule
\end{tabular}
\end{table}

\section{The min-entropy dispersion estimator}
\label{sec:minent}
The dispersion slot of the reference instantiation uses the mean per-token R\'enyi min-entropy of the blade's predictive distribution. For logits $\mathbf{z}_t$ over vocabulary $V$, that distribution is $p_v = e^{z_{t,v}}/\sum_{u}e^{z_{t,u}}$, and the R\'enyi entropy of order $\alpha$ is
\[
H_\alpha(p) \;=\; \frac{1}{1-\alpha}\log\sum_{v\in V} p_v^{\alpha}.
\]
Letting $\alpha\to\infty$ gives the min-entropy $H_\infty(p) = -\log\max_v p_v$. Substituting the softmax form,
\[
\max_v p_v \;=\; \exp\!\Big(\max_v z_{t,v} - \operatorname{logsumexp}(\mathbf{z}_t)\Big),
\]
so that
\begin{equation}
\label{eq:minent}
H_\infty(p) \;=\; \operatorname{logsumexp}(\mathbf{z}_t) - \max_v z_{t,v},
\end{equation}
which is exactly the per-token quantity averaged over the candidate span in the main paper's dispersion equation.

Two properties matter for the framework. First, $H_\infty\geq 0$ with equality exactly when the predictive distribution is a point mass, so zero is a meaningful anchor; this is why CBN scale-normalises $\sigma$ without recentring it, while $\mu$ is centred. Second, it is a function of logits already materialised during reward computation, so the estimator costs no additional forward pass. It measures the sharpness of the blade's token distribution rather than the error of its reward estimate, which is why $\mathcal{I}(\sigma)\approx 0$ by construction for this choice.

\section{Experimental details}
\label{sec:expdetail}

\subsection{Models}

\begin{center}
\begin{tabular}{@{}lll@{}}
\toprule
Role & Model & Notes\\
\midrule
Drafter $\pi_S$ & Qwen2.5-3B-Instruct & frozen; proposes the $N$ candidate steps\\
Backbone $\pi_B$ & Qwen2.5-7B, SFT-merged & frozen; reference policy for every blade\\
Blades & DPO-LoRA adapters over $\pi_B$ & one per objective, hot-swapped\\
Judge & Qwen2.5-32B-Instruct-AWQ & served with vLLM, 4-bit\\
\bottomrule
\end{tabular}
\end{center}

\noindent The backbone is the checkpoint on which every blade was DPO-trained, so adapters load
without architecture mismatch and blade and backbone log-probabilities are directly comparable.
It is called the \emph{backbone} rather than a verifier throughout: Swiss-Knife reuses speculative
decoding's drafter/backbone architecture but not its accept/reject step, so this model verifies
nothing. All blades share one backbone instance and are selected by adapter switch, which is what
makes a blade swap $O(1)$ and gives the $0.050$\,ms reconfiguration cost.

\subsection{Operating point}

The configuration used for every reported frontier result:

\begin{center}
\begin{tabular}{@{}llll@{}}
\toprule
Parameter & Symbol & Value & Origin\\
\midrule
Candidates per step & $N$ & $7$ & simplex mean\\
Tournament rounds & $R$ & $5$ & simplex mean\\
Selection temperature & $T$ & $8.0$ & simplex geometric mean\\
Tournament term weight & $w_{\mathrm{tour}}$ & $1.1$ & simplex mean\\
Blade term weight & $w_{\mathrm{blade}}$ & $1.75$ & simplex mean\\
Dispersion penalty & $\lambda$ & $0.2$ & simplex mean\\
DPO reward scale & $\beta$ & $0.1$ & training value, fixed\\
Fluency blade weight & $\alpha$ & $0.5$ & fixed, not searched\\
Elo K-factor schedule & $K_r$ & $40\rightarrow10$ & geometric over $R$ rounds\\
Sampling & $T_{\mathrm{gen}}$, top-$p$ & $1.0$, $0.95$ & \\
Max new tokens & & $512$ & \\
Dispersion estimator & & min-entropy & Eq.~(\ref{eq:minent})\\
Normalisation & & CBN, on & \\
Aggregation & $(g,p)$ & $(\Phi,2)$ & Thurstone Case-V\\
\bottomrule
\end{tabular}
\end{center}

\noindent Two of these deserve comment.

The K-factor schedule is geometric, $K_r = K_{\max}(K_{\min}/K_{\max})^{r/(R-1)}$ with
$K_{\max}=40$ and $K_{\min}=10$, giving $40, 28.3, 20, 14.1, 10$ at $R=5$. A high initial $K$
sorts the pool coarsely; the decay stabilises the ranking as ratings become informative.

$\alpha=0.5$ was never swept and is not in the search space, so it is easy to overlook, but it is
not a free parameter in effect: it fixes half of every match signal to be drafter fluency rather
than blade reward, and through Proposition~\ref{prop:cov} it also scales the Thurstonian
denominator by $(1-\alpha)^2$. The implementation notes below state this again in code terms.

\subsection{Configuration of the single-objective ablations}

The two ablations of the main paper's aggregation and dispersion slots run on a separate
single-objective harmlessness suite, and use the harmlessness-specific optimum from the
per-objective search rather than the simplex-averaged configuration. This is deliberate --- a
single-objective task should use the configuration tuned for that objective --- but it means the
ablation arms are not comparable in absolute terms to the frontier table.

\begin{center}
\begin{tabular}{@{}lrr@{}}
\toprule
Parameter & Ablations & Frontier\\
\midrule
$N$ & $11$ & $7$\\
$R$ & $4$ & $5$\\
$T$ & $11.219$ & $8.0$\\
$w_{\mathrm{tour}}$ & $0.504$ & $1.1$\\
$w_{\mathrm{blade}}$ & $1.483$ & $1.75$\\
$\lambda$ & $0.109$ & $0.2$\\
\bottomrule
\end{tabular}
\end{center}

\noindent Within each ablation all arms share a single candidate pool per step and differ only in
the slot under test, so selection stochasticity and candidate quality are held fixed by
construction.

\subsection{Hyperparameter search}

Bayesian optimisation was run separately on each of the three single-objective tasks, over
$T$, $w_{\mathrm{tour}}$, $w_{\mathrm{blade}}$, $\lambda$, $R$ and $N$. $\alpha$ and $\beta$ were
not searched.

\begin{center}
\begin{tabular}{@{}lrrrrrr@{}}
\toprule
Objective & $T$ & $w_{\mathrm{tour}}$ & $w_{\mathrm{blade}}$ & $\lambda$ & $R$ & $N$\\
\midrule
Harmlessness & $11.22$ & $0.504$ & $1.483$ & $0.109$ & $4$ & $11$\\
Helpfulness & $1.69$ & $1.170$ & $1.707$ & $0.158$ & $3$ & $7$\\
Honesty & $21.03$ & $1.587$ & $2.088$ & $0.395$ & $7$ & $3$\\
\midrule
HHH (used) & $8.0$ & $1.1$ & $1.75$ & $0.2$ & $5$ & $7$\\
\bottomrule
\end{tabular}
\end{center}

\noindent No separate search was run in the multi-objective setting. The HHH configuration is the
mean of the three per-objective optima: geometric for $T$, which acts multiplicatively on a
log-probability scale, and arithmetic for the rest. The geometric mean of the three temperatures
is $(11.22\times1.69\times21.03)^{1/3}\approx7.4$, rounded to $8.0$. Deriving the multi-objective
configuration this way rather than tuning it directly costs nothing in additional compute and
avoids selecting the operating point on the evaluation itself.

\subsection{Baselines}

All baselines use their published defaults and are compute-matched to Swiss-Knife at seven
samples per step.

\begin{center}
\begin{tabular}{@{}ll@{}}
\toprule
Method & Settings\\
\midrule
MOD & reverse KL $f$-divergence, $T{=}1.0$, top-$p$ $0.95$, $\beta{=}0.1$\\
Rewarded Soups & linear adapter merge, $T{=}1.0$, top-$p$ $0.95$\\
ARGS & $\lambda_{\mathrm{reward}}{=}0.1$ (the DPO $\beta$, not an independent parameter)\\
Best-of-$N$ & $N{=}7$, matched to Swiss-Knife, $T{=}1.0$, top-$p$ $0.95$\\
Base & frozen backbone, unsteered\\
\bottomrule
\end{tabular}
\end{center}

\subsection{Seeding and determinism}

Seed $42$ is used throughout for dataset construction and for the hyperparameter search, and
champion selection draws from a multinomial over the softmax of the champion logits. A generation run
is reproducible to the extent that the process-level seed and GPU kernel scheduling are consistent.
Every number in the paper is recomputable from the evaluated scores.

\section{Datasets}
\label{sec:datasets}

Prompts are drawn from two public sources and used only for evaluation; no training data is
introduced by this work.

\begin{center}
\begin{tabular}{@{}lll@{}}
\toprule
Axis & Source & Split\\
\midrule
Helpfulness & Anthropic HH-RLHF, \texttt{helpful-base} & test\\
Harmlessness & Anthropic HH-RLHF, \texttt{harmless-base} & test\\
Honesty & TruthfulQA & validation (generation)\\
\bottomrule
\end{tabular}
\end{center}

\noindent Two prompt sets are used. The frontier sweep uses $120$ held-out prompts, balanced $40$
per axis. The CBN ablation uses a stratified half of $60$ prompts, $20$ per axis. Both are drawn
at seed $42$; the generator emits contiguous per-axis blocks, so in the $120$-prompt set indices
$0$--$39$ are helpfulness, $40$--$79$ harmlessness and $80$--$119$ honesty.

Each method is evaluated at all seven simplex points, giving $840$ judged response cells per
method. Best-of-$N$ and Rewarded Soups have $839$: one cell each failed to return a judgement and
is excluded pairwise rather than imputed. The CBN ablation arm has $420$ cells ($7\times60$).

The single-objective ablations use a separate suite of $125$ held-out HH-RLHF harmlessness
prompts, scored on the six safety-relevant rubrics of that configuration.

\subsection{Judging protocol}

Responses are scored by a G-Eval-style LLM-judge harness on nine rubrics grouped into three axes:
$\mathrm{quality}=\operatorname{mean}(\text{response quality},\text{relevance},\text{helpfulness})$;
$\mathrm{safety}=1-\operatorname{mean}(\text{toxicity},\text{harmfulness})$;
$\mathrm{honesty}=\operatorname{mean}(\text{truthfulness},\text{non-deception},\text{epistemic
honesty})$. Refusal is recorded but deliberately excluded from safety, which a model could
otherwise maximise by refusing everything, and is reported separately. A configuration is
summarised by the three-objective harmonic mean $F_1$, which is minimised by collapse onto any
single axis. Contrasts are paired on (configuration, prompt) with $10{,}000$-sample bootstrap
confidence intervals and Wilcoxon signed-rank tests. A Detoxify classifier score is logged
alongside the judge's toxicity rubric as an independent cross-check.

\section{Implementation notes}
\label{sec:impl}
Three details of the reference implementation are worth stating explicitly, because they qualify how the main text should be read.

\textbf{The fluency blade enters at $\alpha=0.5$.} Section~5 defines the match difference as
$\Delta_{ij}=\alpha(\ell_S(c_i)-\ell_S(c_j))+(1-\alpha)(\mu_i-\mu_j)$, including the drafter
log-likelihood blade $b_0$ at $\alpha=0.5$. Every reported run used that value; $\alpha$ was never
swept, and it is not part of the Bayesian search space. Half of each match signal is therefore
fluency rather than blade reward. Because $b_0$ is deterministic ($\sigma_0=0$), the same $\alpha$
also propagates into the Thurstonian denominator as
$s_2=\sqrt{(1-\alpha)^2(\sigma_i^2+\sigma_j^2)+\varepsilon}$, which is the linear rule of
Proposition~\ref{prop:cov} specialised to a noiseless blade. Figure~1 depicts the aggregation slot at $\alpha=0$,
that is, as a pure blade contrast; this is a simplification of the panel, not of the experiments.

\textbf{Normalisation is applied at three points, not one.} Candidate-Batch Normalization
canonicalises each blade within the candidate batch, dividing $\sigma$ by its batch scale without
re-centring it. The composite scores are then z-scored a second time on entry to the tournament, and
the dispersion term entering the champion logit is standardised a third time, there with its mean
subtracted. Only the first of these is the operator analysed in Theorem~\ref{thm:cbn}, and the invariance
statement of Theorem~\ref{thm:cbn} concerns that step. The second and third are monotone rescalings of an
already-canonicalised quantity and so do not disturb the induced ranking within a step, but they do
mean the constant relating $\lambda$ to a dispersion penalty is not the one a single-normalisation
reading would suggest. All reported numbers are those produced by our reference implementation.

\textbf{The single-objective ablations use their own operating point.} Section~6.1 gives the
operating point for the HHH frontier. The two ablations of Section~6.5 run on a separate
single-objective harmlessness suite and use the harmlessness-specific optimum from the per-objective
search, namely $N=11$, $R=4$, $T=11.22$, $w_{\mathrm{tour}}=0.504$, $w_{\mathrm{blade}}=1.483$ and
$\lambda=0.109$, rather than the simplex-averaged HHH configuration. Within each ablation all arms
share one candidate pool and differ only in the slot under test, which is what the contrast requires;
the arms are not comparable to the frontier table in absolute terms.

\section{Judge test--retest agreement}
\label{sec:judge}
Two arms of the single-objective suite share a byte-identical configuration and were scored in
independent judging passes, giving a free test--retest estimate of the harness's reproducibility.
Agreement is high on the safety rubrics and lower on the subjective quality rubrics: harmfulness
$r=0.984$ and toxicity $r=0.669$, against response quality $r=0.716$, relevance $r=0.612$ and
helpfulness $r=0.268$, where the judge moves an identical response by $0.167$ on average. This is
the pattern reported for G-Eval-style harnesses generally rather than a property of this
configuration. Two consequences for the main paper. Single-rubric quality contrasts are read as
indicative rather than decisive. The aggregate effects in the main results and the CBN ablation are
one to two orders of magnitude larger than this variation and are aggregated over $840$ and $420$
paired responses respectively, so they are not attributable to judge variance. We recommend this
check as routine practice for LLM-judge evaluation.

\section{Extended results}
\label{sec:extended}

The tables below give the complete version of results that appear condensed in the main paper.
All are computed directly from the evaluated scores; none are transcribed by hand.

\begin{table}[htbp]\centering\small
\caption{Complete per-method results across the HHH simplex. $n$ is the number of judged response cells; Best-of-$N$ and Rewarded Soups lose one cell each to a failed judgement, and the no-CBN arm runs on the stratified 60-prompt half. $F_1$ is the harmonic mean of the three mean axes. $\delta F_1$ is paired against Swiss-Knife on (configuration, prompt), with a $10{,}000$-sample bootstrap CI and a Wilcoxon signed-rank $p$.}
\label{tab:full-methods}
\begin{tabular}{@{}lrrrrrrl@{}}\toprule
Method & $n$ & Quality & Safety & Honesty & $F_1$ & $\delta F_1$ [95\% CI] & $p$\\\midrule
Swiss-Knife & 840 & $0.800$ & $0.964$ & $0.677$ & $0.797$ & -- & --\\
Base (frozen) & 840 & $0.696$ & $0.975$ & $0.651$ & $0.750$ & $-0.0478$ $[-0.0635,\,-0.0325]$ & $2.46{\times}10^{-15}$\\
Best-of-N & 839 & $0.674$ & $0.977$ & $0.651$ & $0.742$ & $-0.0534$ $[-0.0687,\,-0.0376]$ & $2.33{\times}10^{-18}$\\
Rewarded Soups & 839 & $0.650$ & $0.964$ & $0.602$ & $0.708$ & $-0.0991$ $[-0.1166,\,-0.0812]$ & $5.03{\times}10^{-30}$\\
ARGS & 840 & $0.607$ & $0.972$ & $0.586$ & $0.685$ & $-0.1208$ $[-0.1395,\,-0.1023]$ & $4.09{\times}10^{-40}$\\
Swiss-Knife (no CBN) & 420 & $0.596$ & $0.971$ & $0.414$ & $0.586$ & $-0.2168$ $[-0.2467,\,-0.1872]$ & $6.11{\times}10^{-35}$\\
MOD & 840 & $0.348$ & $0.992$ & $0.410$ & $0.474$ & $-0.3324$ $[-0.3510,\,-0.3138]$ & $5.43{\times}10^{-111}$\\
\bottomrule\end{tabular}\end{table}

\begin{table}[htbp]\centering\small
\caption{Frontier geometry over the seven simplex points. $\Delta$ is Schott's spacing (lower is more even coverage), HV the dominated hypervolume, $|P|$ the number of weight configurations lying on the Pareto front.}
\label{tab:frontier-geometry}
\begin{tabular}{@{}lrrr@{}}\toprule
Method & $\Delta$ & HV & $|P|$\\\midrule
Swiss-Knife & $0.0024$ & $0.580$ & 4\\
Base (frozen) & $0.0060$ & $0.495$ & 4\\
Best-of-N & $0.0052$ & $0.471$ & 4\\
Rewarded Soups & $0.0865$ & $0.506$ & 5\\
ARGS & $0.0071$ & $0.396$ & 2\\
Swiss-Knife (no CBN) & $0.0065$ & $0.299$ & 4\\
MOD & $0.0070$ & $0.163$ & 2\\
\bottomrule\end{tabular}\end{table}

\begin{longtable}{@{}llrrrrr@{}}
\caption{Achieved axes and harmonic $F_1$ at every point of the weight simplex, for every method. Weight vectors are $(w_{\mathrm{help}}, w_{\mathrm{hon}}, w_{\mathrm{harm}})$. The no-CBN arm was run on the stratified 60-prompt half, hence $n=60$.}\label{tab:per-config-full}\\
\toprule
Method & $\mathbf{w}$ & $n$ & Quality & Safety & Honesty & $F_1$\\\midrule
\endfirsthead
\toprule
Method & $\mathbf{w}$ & $n$ & Quality & Safety & Honesty & $F_1$\\\midrule
\endhead
\bottomrule
\endfoot
Swiss-Knife & ($1.00$, $0.00$, $0.00$) & 120 & $0.799$ & $0.955$ & $0.651$ & $0.783$\\
 & ($0.00$, $1.00$, $0.00$) & 120 & $0.787$ & $0.965$ & $0.649$ & $0.780$\\
 & ($0.00$, $0.00$, $1.00$) & 120 & $0.799$ & $0.973$ & $0.680$ & $0.800$\\
 & ($0.50$, $0.50$, $0.00$) & 120 & $0.821$ & $0.962$ & $0.687$ & $0.808$\\
 & ($0.50$, $0.00$, $0.50$) & 120 & $0.787$ & $0.968$ & $0.694$ & $0.801$\\
 & ($0.00$, $0.50$, $0.50$) & 120 & $0.807$ & $0.965$ & $0.694$ & $0.807$\\
 & ($0.33$, $0.33$, $0.33$) & 120 & $0.801$ & $0.960$ & $0.685$ & $0.800$\\
\addlinespace
Base (frozen) & ($1.00$, $0.00$, $0.00$) & 120 & $0.717$ & $0.979$ & $0.668$ & $0.767$\\
 & ($0.00$, $1.00$, $0.00$) & 120 & $0.685$ & $0.968$ & $0.639$ & $0.739$\\
 & ($0.00$, $0.00$, $1.00$) & 120 & $0.695$ & $0.981$ & $0.642$ & $0.747$\\
 & ($0.50$, $0.50$, $0.00$) & 120 & $0.668$ & $0.980$ & $0.643$ & $0.737$\\
 & ($0.50$, $0.00$, $0.50$) & 120 & $0.703$ & $0.970$ & $0.672$ & $0.761$\\
 & ($0.00$, $0.50$, $0.50$) & 120 & $0.694$ & $0.979$ & $0.646$ & $0.748$\\
 & ($0.33$, $0.33$, $0.33$) & 120 & $0.708$ & $0.972$ & $0.649$ & $0.753$\\
\addlinespace
Best-of-N & ($1.00$, $0.00$, $0.00$) & 120 & $0.676$ & $0.981$ & $0.659$ & $0.747$\\
 & ($0.00$, $1.00$, $0.00$) & 119 & $0.666$ & $0.977$ & $0.648$ & $0.737$\\
 & ($0.00$, $0.00$, $1.00$) & 120 & $0.674$ & $0.982$ & $0.655$ & $0.745$\\
 & ($0.50$, $0.50$, $0.00$) & 120 & $0.678$ & $0.985$ & $0.640$ & $0.740$\\
 & ($0.50$, $0.00$, $0.50$) & 120 & $0.676$ & $0.978$ & $0.644$ & $0.740$\\
 & ($0.00$, $0.50$, $0.50$) & 120 & $0.672$ & $0.961$ & $0.634$ & $0.731$\\
 & ($0.33$, $0.33$, $0.33$) & 120 & $0.678$ & $0.975$ & $0.674$ & $0.753$\\
\addlinespace
Rewarded Soups & ($1.00$, $0.00$, $0.00$) & 120 & $0.714$ & $0.910$ & $0.538$ & $0.688$\\
 & ($0.00$, $1.00$, $0.00$) & 119 & $0.700$ & $0.940$ & $0.598$ & $0.720$\\
 & ($0.00$, $0.00$, $1.00$) & 120 & $0.408$ & $0.998$ & $0.483$ & $0.543$\\
 & ($0.50$, $0.50$, $0.00$) & 120 & $0.716$ & $0.937$ & $0.576$ & $0.714$\\
 & ($0.50$, $0.00$, $0.50$) & 120 & $0.675$ & $0.993$ & $0.686$ & $0.760$\\
 & ($0.00$, $0.50$, $0.50$) & 120 & $0.628$ & $0.996$ & $0.665$ & $0.732$\\
 & ($0.33$, $0.33$, $0.33$) & 120 & $0.709$ & $0.974$ & $0.669$ & $0.763$\\
\addlinespace
ARGS & ($1.00$, $0.00$, $0.00$) & 120 & $0.602$ & $0.969$ & $0.559$ & $0.670$\\
 & ($0.00$, $1.00$, $0.00$) & 120 & $0.602$ & $0.959$ & $0.596$ & $0.685$\\
 & ($0.00$, $0.00$, $1.00$) & 120 & $0.591$ & $0.978$ & $0.578$ & $0.675$\\
 & ($0.50$, $0.50$, $0.00$) & 120 & $0.623$ & $0.978$ & $0.596$ & $0.697$\\
 & ($0.50$, $0.00$, $0.50$) & 120 & $0.595$ & $0.962$ & $0.596$ & $0.682$\\
 & ($0.00$, $0.50$, $0.50$) & 120 & $0.593$ & $0.975$ & $0.584$ & $0.678$\\
 & ($0.33$, $0.33$, $0.33$) & 120 & $0.643$ & $0.985$ & $0.594$ & $0.705$\\
\addlinespace
Swiss-Knife (no CBN) & ($1.00$, $0.00$, $0.00$) & 60 & $0.563$ & $0.969$ & $0.388$ & $0.557$\\
 & ($0.00$, $1.00$, $0.00$) & 60 & $0.594$ & $0.969$ & $0.396$ & $0.573$\\
 & ($0.00$, $0.00$, $1.00$) & 60 & $0.626$ & $0.968$ & $0.463$ & $0.626$\\
 & ($0.50$, $0.50$, $0.00$) & 60 & $0.591$ & $0.988$ & $0.403$ & $0.578$\\
 & ($0.50$, $0.00$, $0.50$) & 60 & $0.593$ & $0.978$ & $0.441$ & $0.603$\\
 & ($0.00$, $0.50$, $0.50$) & 60 & $0.595$ & $0.956$ & $0.376$ & $0.557$\\
 & ($0.33$, $0.33$, $0.33$) & 60 & $0.608$ & $0.973$ & $0.434$ & $0.603$\\
\addlinespace
MOD & ($1.00$, $0.00$, $0.00$) & 120 & $0.363$ & $0.992$ & $0.432$ & $0.494$\\
 & ($0.00$, $1.00$, $0.00$) & 120 & $0.326$ & $0.991$ & $0.404$ & $0.458$\\
 & ($0.00$, $0.00$, $1.00$) & 120 & $0.349$ & $0.993$ & $0.403$ & $0.473$\\
 & ($0.50$, $0.50$, $0.00$) & 120 & $0.362$ & $0.990$ & $0.412$ & $0.484$\\
 & ($0.50$, $0.00$, $0.50$) & 120 & $0.360$ & $0.991$ & $0.422$ & $0.487$\\
 & ($0.00$, $0.50$, $0.50$) & 120 & $0.346$ & $0.991$ & $0.423$ & $0.479$\\
 & ($0.33$, $0.33$, $0.33$) & 120 & $0.328$ & $0.992$ & $0.374$ & $0.446$\\
\addlinespace
\end{longtable}

\begin{table}[htbp]\centering\small
\caption{Steerability: rank association between the requested weight on an objective and the achieved score on the corresponding axis, across the seven simplex points. $r_s$ is Spearman's coefficient; \emph{range} is the spread of the achieved axis. Seven points per axis is individually underpowered, which is why the pooled statistic is the one quoted in the paper.}
\label{tab:steerability-full}
\begin{tabular}{@{}llrrrr@{}}\toprule
Arm & Axis & $r_s$ & $p$ & slope & range\\\midrule
Swiss-Knife & quality & $0.206$ & $0.658$ & $0.0041$ & $0.0342$\\
 & safety & $-0.094$ & $0.842$ & $-0.0013$ & $0.0179$\\
 & honesty & $0.468$ & $0.290$ & $0.0262$ & $0.0444$\\
 & \emph{pooled} ($n{=}21$) & $0.212$ & $0.356$ & $0.541$ & --\\
\addlinespace
Swiss-Knife (no CBN) & quality & $-0.805$ & $0.029$ & $-0.0389$ & $0.0633$\\
 & safety & $-0.019$ & $0.968$ & $-0.0017$ & $0.0317$\\
 & honesty & $0.505$ & $0.247$ & $0.0589$ & $0.0867$\\
 & \emph{pooled} ($n{=}21$) & $-0.008$ & $0.972$ & $-0.129$ & --\\
\addlinespace
\bottomrule\end{tabular}\end{table}

\noindent Three observations that the main paper has no room for.

Swiss-Knife is not the safest arm and does not claim to be. MOD attains the highest safety score ($0.992$) and Best-of-$N$ the next highest, but both do so by collapsing the other two axes (MOD's quality is $0.348$ against Swiss-Knife's $0.800$), which is exactly the failure mode the harmonic $F_1$ is chosen to expose. The frontier geometry tells the same story from the other side: MOD and ARGS place only two of seven configurations on the Pareto front, against four for Swiss-Knife.

Rewarded Soups has by far the worst spacing ($0.0865$ against $0.0024$) while placing five configurations on the front. Weight-space adapter merging moves the achieved point around the objective space unevenly, so coverage of the frontier is clumped rather than smooth, a different pathology from the one CBN addresses, and one that hypervolume alone would not reveal.

The steerability correlations rest on seven simplex points per axis and are individually underpowered; no single-axis $p$-value except the no-CBN quality anti-correlation reaches $0.05$. The pooled statistic is the one the paper quotes, and the meaningful comparison is between arms: $r_s$ falls from $+0.212$ with CBN to $-0.008$ without, i.e.\ removing normalisation destroys the relationship between requested and achieved objectives entirely rather than merely weakening it.

\end{document}